\DeclareMathOperator{\Ldim}{Ldim}
\DeclareMathOperator{\dom}{dom}
\DeclareMathOperator{\SC}{SC}
\DeclareMathOperator{\C}{C}
\DeclareMathOperator{\LC}{LC}
\newtheorem{thm}{Theorem}[section]
\newtheorem{cor}[thm]{Corollary}
\newtheorem {fact}[thm]{Fact}
\newtheorem{prop}[thm]{Proposition}
\newtheorem {lem}[thm]{Lemma}
\theoremstyle{remark}
\newtheorem{np*}{Non-Proof}
\theoremstyle{definition}
\newtheorem{defn}[thm]{Definition}
\newtheorem{exam}[thm]{Example}
\def\Ind{\setbox0=\hbox{$x$}\kern\wd0\hbox to 0pt{\hss$\mid$\hss} \lower.9\ht0\hbox to 0pt{\hss$\smile$\hss}\kern\wd0}
\def\Notind{\setbox0=\hbox{$x$}\kern\wd0\hbox to 0pt{\mathchardef \nn=12854\hss$\nn$\kern1.4\wd0\hss}\hbox to 0pt{\hss$\mid$\hss}\lower.9\ht0 \hbox to 0pt{\hss$\smile$\hss}\kern\wd0}
\numberwithin{equation}{section}
\newcommand{\m}{\mathbb }
\newcommand{\mc}{\mathcal }
\title{Bounds in Query Learning}
\author{Hunter Chase}
\address{Department of Mathematics, UIC, Chicago IL}
\email{hchase2@uic.edu}
\author{James Freitag}
\address{Department of Mathematics, UIC, Chicago IL}
\email{freitagj@gmail.com}
\thanks{Partially supported by NSF grant no. 1700095}
\begin{document}

\maketitle

\begin{abstract}%
We introduce new combinatorial quantities for concept classes, and prove lower and upper bounds for learning complexity in several models of query learning in terms of various combinatorial quantities. Our approach is flexible and powerful enough to enough to give new and very short proofs of the efficient learnability of several prominent examples (e.g. regular languages and regular $\omega$-languages), in some cases also producing new bounds on the number of queries. In the setting of equivalence plus membership queries, we give an algorithm which learns a class in polynomially many queries whenever any such algorithm exists. 

We also study equivalence query learning in a randomized model, producing new bounds on the expected number of queries required to learn an arbitrary concept. Many of the techniques and notions of dimension draw inspiration from or are related to notions from model theory, and these connections are explained. We also use techniques from query learning to mildly improve a result of Laskowski regarding compression schemes. 

\end{abstract}


\section{Introduction}

Fix a set $X$ and denote by $\mc P (X)$ the collection of all subsets of $X$. A \emph{concept class}\footnote{We will also sometimes call $\mc C$ a set system on $X$} $\mathcal{C}$ on $X$ is a subset of $ \mc P (X)$. In the equivalence query (EQ) learning model, a learner attempts to identify a target set $A \in \mathcal{C}$ by means of a series of data requests called \emph{equivalence queries}. 
The learner has full knowledge of $\mathcal{C}$, as well as a hypothesis class $\mc H$ with $\mc C \subseteq \mathcal{H} \subseteq \mc P(X).$
An \emph{equivalence query} consists of the learner submitting a hypothesis $B \in \mathcal{H}$ to a teacher, who either returns \emph{yes} if $A = B$, or a counterexample $x \in A \triangle B$. In the former case, the learner has learned $A$, and in the latter case, the learner uses the new information to update and submit a new hypothesis. In sections \ref{basicEQ} and \ref{applications}, the teacher may be assumed to be adversarial and the worst case number of queries required to learn any concept is analyzed. In section \ref{randomeq}, we consider the case in which the teacher selects counterexamples randomly according to a fixed but arbitrary distribution. 

We will also consider learning with equivalence and membership queries (EQ+MQ). In a membership query, a learner submits a single element $x$ from the base set $X$ to the teacher, who returns the value $A(x)$, where $A$ is the target concept. In this setting, the learner may choose to make either type of query at any stage, submitting any $x \in X$ for a membership query or submitting any $B \in \mathcal{H}$ for an equivalence query. The learner learns the target concept $A$ when they submit $A$ as an equivalence query.

With Theorems \ref{EQlearn} and \ref{EQMQalg}, we give upper bounds for the number of queries required for EQ and EQ+MQ learning a class $\mc C$ with hypotheses $\mc H$ in terms of the \emph{Littlestone dimension of $\mc C$}, denoted $\Ldim(\mathcal{C})$, and the \emph{consistency dimension of $\mc C$ with respect to $\mc H$}, denoted $\C(\mathcal{C}, \mathcal{H})$. We also give lower bounds for the number of required queries in terms of these quantities. In the EQ+MQ setting, the bounds are tight enough to completely characterize when a problem is efficiently learnable. Littlestone dimension is well-known in learning theory \cite{littlestone1988learning} and model theory.\footnote{In model theory, Littlestone dimension is called Shelah 2-rank, see \cite{MLMT} for additional details.} 

Consistency dimension and the related notion of strong consistency dimension are more subtle, which we detail in section \ref{basicEQ}. When $\mc H$ is taken to be $\mc P(X)$, $\C(\mc C, \mc H)=1$; for various examples of set systems with $\mc H = \mc C$, one has $\C(\mc C, \mc H) = \infty$. In \ref{obtainC}, we define a new invariant, the consistency threshold of $\mc C$, and provide a construction (for arbitrary $\mc C$) of a hypothesis class $\mc H$ which is not much more complicated than $\mc C$ (of the same Littlestone dimension as $\mc C$) such that $\C ( \mc C, \mc H) \leq \Ldim(\mc C)+1.$ In \ref{SCvsC}, we compare our bounds and invariants to those previously appearing in the literature. 

Theorems \ref{EQlearn} and \ref{EQMQalg} can be used to establish efficient learnability in specific applied settings \emph{if} one can obtain appropriate bounds on Littlestone dimension and consistency dimension. Let $(\mc C_n, \mc H_n)$ be a collection of concept and hypothesis classes which depends on some parameter $n$. Typically, we are thinking of finite classes which grow with $n$. We prove that whenever $\mc C_n$ can be learned by an algorithm using polynomially many membership queries and equivalence queries from $\mathcal{H}_n$, there must be polynomial bounds on Littlestone and consistency dimension. Moreover, whenever such an algorithm exists, the algorithm given in Theorem \ref{EQMQalg} accomplishes this. 

Finally, to close section \ref{basicEQ}, we explain the connection between strong consistency dimension and a model theoretic property called the finite cover property (fcp), or rather its negation, referred to henceforth as the nfcp. We show that if $\mc C$ is the set system given by uniform instances of a fixed first order formula $\phi$, and $\mc H$ is the collection of externally $\phi$-definable sets, then $(\mc C, \mc H)$ has finite strong consistency dimension if and only if $\phi$ has the nfcp. 

In section \ref{applications} we demonstrate the practicality of our approach by providing simple and fast proofs of the efficient learnability of regular languages and certain $\omega$-languages, reproving results of \cite{angluin1987learning,angluin2016learning,fisman2018families,fisman2018inferring}. Besides the conceptual simplicity of the approach, the bounds in learning complexity resulting from our algorithm have some novel aspects. For instance, our bounds have no dependence on the length of the strings provided to the learner as counterexamples, in contrast to existing algorithms. 


In section \ref{randomeq} we turn to a randomized variant of EQ-learning in which the teacher is required to choose counterexamples randomly from a known probability distribution on $X$. \cite{angluin2017power} show that for a concept class of size $n$, there is an algorithm in which the expected number of queries to learn any concept is at most $\log _2 (n).$ It is natural to wonder whether there is a notion of dimension which can be used to bound the expected number of queries. In fact, Angluin and Dohrn \cite[Theorem 25]{angluin2017power} already consider this, and show that the VC-dimension of the concept class is a lower bound on the number of expected queries. However, \cite[Theorem 26]{angluin2017power}, using an example of \cite{littlestone1988learning}, shows that the VC-dimension \emph{cannot} provide an upper bound for the number of queries. We show that the Littlestone dimension provides such an upper bound; we give an algorithm which yields a bound which is linear in the Littlestone dimension for the expected number of queries needed to learn any concept.

In section \ref{compression}, we introduce compression schemes for concept classes. Specifically, the notion we work with is equivalent to $d$-compression with $b$ extra bits (of Floyd and Warmuth \cite{floyd1995sample}). In \cite{johnson2010compression}, Laskowski and Johnson proved that the concept class corresponding to a stable formula has an extended $d$-compression for some $d$. Later, a result of Laskowski appearing as \cite[Theorem 4.1.3]{guingonanip} in fact showed that 
one could take $d$ equal to the Shelah 2-rank (Littlestone dimension) and uses $2^d$ many reconstruction functions. We show that $d+1$ many reconstruction functions suffice.


\section{A combinatorial characterization of EQ-learnability} \label{basicEQ} 
Often, one assumes that $X$ is finite, and the emphasis is placed on finding bounds on the number of queries it may take to learn any $A \in \mathcal{C}$. We also consider the case where $X$ is infinite, for which we give the following definition.

\begin{defn}
	Let $\mathcal{C}$ and $\mathcal{H}$ be set systems on a set $X$. $\mathcal{C}$ is \emph{learnable with equivalence queries} from $\mathcal{H}$ if there exists some $n < \omega$ and some algorithm to submit hypotheses from $\mathcal{H}$ such that any concept $A \in \mathcal{C}$ is learnable in at most $n$ equivalence queries, given any teacher returning counterexamples. Let $\LC^{EQ}(\mathcal{C}, \mathcal{H})$ be the least such $n$ if $\mathcal{C}$ is learnable with equivalence queries from $\mathcal{H}$, and $\LC^{EQ}(\mathcal{C}, \mathcal{H}) = \infty$ otherwise. 
	
	$\LC^{EQ}(\mathcal{C}, \mathcal{H})$ is called the \emph{learning complexity}, representing the optimal number of queries needed in the worst-case scenario. 
	
	Similarly, $\mathcal{C}$ is learnable with equivalence queries from $\mathcal{H}$ and membership queries if there exists some $n < \omega$ and some algorithm to submit membership queries from $X$ or equivalence queries from $\mathcal{H}$ such that any concept $A \in \mathcal{C}$ is learnable in at most $n$ equivalence queries. The learning complexity is defined similarly and is denoted by $\LC^{EQ + MQ}(\mathcal{C}, \mathcal{H})$.

\end{defn}



\subsection{EQ-learnability from Littlestone and consistency dimension}

\begin{prop} \cite[Theorems 5 and 6]{littlestone1988learning} \label{stableEQlearningeasy}
	If $\LC^{EQ}(\mathcal{C}, \mathcal{H}) \leq d+1$, then $\Ldim(\mathcal{C}) \leq d$. If $\mathcal{H} = \mathcal{P}(X)$, then the converse holds.
\end{prop}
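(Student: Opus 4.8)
The plan is to prove the two implications separately. The first one holds for an arbitrary hypothesis class $\mathcal{H}$ and I would establish it by proving the contrapositive via an adversary argument; the converse needs $\mathcal{H} = \mathcal{P}(X)$ and I would establish it by exhibiting Littlestone's Standard Optimal Algorithm and analyzing it with a potential argument.

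For the first implication, suppose $\Ldim(\mathcal{C}) \geq d+1$ and fix a Littlestone tree $T$ of depth $d+1$ witnessing this. The teacher plays adversarially by walking down $T$: it keeps a current node, starting at the root; whenever the learner submits $B \in \mathcal{H}$ and the current node is internal with label $x \in X$, the teacher returns $x$ as a counterexample and descends to the child along the edge labelled $1 - B(x)$. After $d+1$ rounds the teacher reaches a leaf of $T$, and since every branch of $T$ is consistent with some concept of $\mathcal{C}$, the constraints $A(x_i) = 1 - B_i(x_i)$ accumulated along the branch (where $B_i$ denotes the learner's $i$-th hypothesis and $x_i$ the label returned in response to it) are simultaneously realized by some $A \in \mathcal{C}$; the teacher may declare $A$ to be the target. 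For this $A$, each $B_i$ satisfies $x_i \in A \triangle B_i$, so each of the learner's first $d+1$ equivalence queries was answered with a legitimate counterexample and the learner has not yet succeeded. Hence $\LC^{EQ}(\mathcal{C}, \mathcal{H}) \geq d+2$, which is the contrapositive of the first implication.

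For the converse, assume $\mathcal{H} = \mathcal{P}(X)$ and write $d = \Ldim(\mathcal{C})$. The learner maintains a version space $V \subseteq \mathcal{C}$, initially $\mathcal{C}$, consisting of the concepts consistent with all counterexamples received so far; setting $V^b_x = \{A \in V : A(x) = b\}$, it submits the hypothesis $B$ defined pointwise by choosing $B(x)$ to be a value $b \in \{0,1\}$ maximizing $\Ldim(V^b_x)$, ties broken arbitrarily. (In general $B \notin \mathcal{C}$, which is precisely why $\mathcal{H} = \mathcal{P}(X)$ is needed.) The combinatorial core is the inequality $\Ldim(V) \geq 1 + \min\bigl(\Ldim(V^0_x), \Ldim(V^1_x)\bigr)$, valid for every $V$ and $x$, which I would prove by hanging Littlestone trees of depth $\min\bigl(\Ldim(V^0_x), \Ldim(V^1_x)\bigr)$ for $V^0_x$ and $V^1_x$ below a new root labelled $x$. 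Granting this, every counterexample strictly decreases $\Ldim(V)$: a counterexample at $x$ reveals $A(x) = 1 - B(x)$, i.e.\ the value not chosen by the rule, so the new version space is the side realizing the minimum above and has Littlestone dimension at most $\Ldim(V) - 1$. Since $\Ldim(V)$ starts at $d$, the learner can receive at most $d$ counterexamples before $\Ldim(V) = 0$, at which point $V$ is a singleton $\{A\}$ (any two distinct concepts in $V$ would already give a depth-$1$ tree), the rule outputs exactly $A$, and the teacher answers \emph{yes}. Thus at most $d+1$ queries are used and $\LC^{EQ}(\mathcal{C}, \mathcal{P}(X)) \leq d+1$.

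I expect the main obstacle — standard, but needing care — to be the converse: verifying that the pointwise rule causes the asserted monotone drop of $\Ldim(V)$ on \emph{every} counterexample, and getting the boundary count exactly right, in particular that $\Ldim(V) = 0$ is equivalent to $|V| = 1$ and that the surviving concept is forced to be the target, so the total is $d+1$ and not off by one. I would also remark that neither direction uses finiteness of $X$, only that $\Ldim(\mathcal{C})$ is finite.
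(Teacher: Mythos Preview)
Your proposal is correct and follows essentially the same approach as the paper: the contrapositive via an adversary walking down a Littlestone tree of height $d+1$, and the converse via Littlestone's SOA with the potential $\Ldim(V)$ dropping by at least one on every counterexample. Your write-up is in fact slightly more careful than the paper's, since you state and justify the key inequality $\Ldim(V) \geq 1 + \min\bigl(\Ldim(V^0_x), \Ldim(V^1_x)\bigr)$ explicitly, whereas the paper simply asserts the strict drop.
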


\begin{proof}
Suppose $\Ldim(\mathcal{C}) \geq d + 1$. We show that we can force the learner to use at least $d+2$ equivalence queries. Construct a binary element tree of height $d+1$ with proper labels from $\mathcal{C}$ witnessing $\Ldim(\mathcal{C}) \geq d + 1$. Given the first hypothesis $H_0$ from the learner, return the element on the 0th level on the tree as a counterexample. Continue this, returning the element on the $i$th level along the path consistent with previous counterexamples as the counterexample to hypothesis $H_i$. We will return $d+1$ counterexamples, and the learner still requires one more hypothesis to identify the concept. Since this will occur for one of the proper labels $A$ of the binary element tree, we have forced the learner to use at least $d+2$ equivalence queries for some $A \in \mathcal{C}$.
	
	\bigskip
	
	Suppose $\Ldim(\mathcal{C}) = d < \infty$. Let $\mathcal{C}_0 = \mathcal{C}$. Inductively define $\mathcal{C}_i$, $i = 1, \ldots, d$ as follows. Given $\mathcal{C}_i$, for any $x \in X$ and $j \in \{0,1\}$, let
	\[
	\mathcal{C}_i^{(x, j)} := \{ A \in \mathcal{C}_i \, | \, \chi_A(x) = j \},
	\]
	where $\chi_A$ is the characteristic function on $A$. Let
	\[
	B_i := \{ x \in X \, | \, \Ldim(\mathcal{C}_i^{(x,1)}) \geq \Ldim(\mathcal{C}_i^{(x,0)}) \}.
	\]
	Submit $B_i$ as the hypothesis. If $B_i$ is correct, we are done. Otherwise, we receive a counterexample $x_i$. Set
	\[
	\mathcal{C}_{i+1} := \{ A \in V_i \, | \, \chi_A(x_i) \neq \chi_{B_i}(x_i) \}
	\]
	to be the concepts which have the correct label for $x_i$. Observe that at each stage, $\Ldim(\mathcal{C}_{i+1}) < \Ldim(\mathcal{C}_i)$. Therefore, if we make $d$ queries without correctly identifying the target, then we must have $\Ldim(\mathcal{C}_d) = 0$. Then $V_d$ is a singleton, which must be the target concept.	
\end{proof}

Notice in particular that if $\Ldim(\mathcal{C}) = \infty$, then $\mathcal{C}$ cannot be learned with equivalence queries, even with $\mathcal{H} = \mathcal{P}(X)$. The assumption that $\mathcal{H} = \mathcal{P}(X)$ makes learning straightforward, but this may be too strong for many settings. However, without some additional hypotheses on $\mathcal{H}$, learnability may already be hopeless, even for \emph{very simple} set systems. For instance, let $\mathcal{C}$ be the set of singletons. If $\mathcal{H} = \mathcal{C}$, then we may take as long as $|X|$ to learn if $X$ is finite, or never learn at all if $X$ is infinite. However, if the learner is allowed to guess $\emptyset$, this forces the teacher to identify the target singleton.


The strategy of Proposition \ref{stableEQlearningeasy} permeates both learnability and non-learnability proofs; identifying a specific set amounts to reducing the Littlestone dimension of the family of possible concepts to 0; actually submitting the target concept before the Littlestone dimension reaches 0 can be thought of as a best-case scenario that we cannot rely on. Non-learnability then amounts to an inability to reduce the Littlestone dimension of the family of possible concepts to 0 through a series of finitely many equivalence queries. The main purpose of this section is to give precise conditions on $\mathcal{H}$ and $\mathcal{C}$ which \emph{characterize} learnability.

\begin{defn}
	Given a set $X$, a \emph{partially specified subset} $A$ of $X$ is a partial function $A: X \rightarrow \{0, 1\}$.
	\begin{itemize}
		\item Say $x \in A$ if $A(x) = 1$, $x \notin A$ if $A(X) = 0$, and membership of $x$ is unspecified otherwise. The \emph{domain} of $A$, $\dom(A)$, is $A^{-1}(\{0,1\})$. Call $A$ \emph{total} if $\dom(A) = X$. We identify subsets $A \subseteq X$ with total partially specified subsets. The \emph{size} of $A$, $|A|$, is the cardinality of $\dom(A)$.
		\item Given two partially specified subsets $A$ and $B$, write $A \sqsubseteq B$ if $A$ and $B$ agree on $\dom(A)$; call $A$ a \emph{restriction} of $B$ and $B$ an \emph{extension} of $A$.
		\item Given a set $Y \subseteq \dom(A)$, the restriction $A|_Y$ of $A$ to $Y$ is the partial function where $A|_Y(x) = A(x)$ for all $x \in Y$, and is unspecified otherwise. 
		\item Given a set system $\mathcal{C}$ on $X$, $A$ is \emph{$n$-consistent} with $\mathcal{C}$ if every size $n$ restriction of $A$ has an extension in $\mathcal{C}$. Otherwise, say $A$ is \emph{$n$-inconsistent}. $A$ is \emph{finitely consistent} with $\mathcal{C}$ if every restriction of $A$ of finite size has an extension in $\mathcal{C}$---that is, $A$ is $n$-consistent with $\mathcal{C}$ for all $n < \omega$.		
	\end{itemize}
\end{defn}

The following definition is a translation into set systems of a definition that first appeared in \cite{balcazar2002consistencydimension}. 

\begin{defn}
	The \emph{consistency dimension} of $\mathcal{C}$ with respect to $\mathcal{H}$, denoted $\C(\mathcal{C}, \mathcal{H})$, is the least integer $n$ such that for every subset $A \subseteq X$ (viewed as a total partially specified subset), if $A$ is $n$-consistent with $\mathcal{C}$, then $A \in \mathcal{H}$. If no such $n$ exists, then say $\C(\mathcal{C}, \mathcal{H}) = \infty$.
\end{defn}

Observe that $\C(\mathcal{C}, \mathcal{H}) = 1$ iff $\mathcal{H}$ shatters\footnote{Recall that a set system $\mathcal{C}$ shatters a set $A$ if, for all $B \subseteq A$, there is $C \in \mathcal{C}$ such that $C \cap A = B$.} the set of all elements $x \in X$ such that there are $A_0$ and $A_1$ in $\mathcal{C}$ such that $x \notin A_0$ but $x \in A_1$. In this case, it is possible to learn any concept in $\mathcal{C}$ in at most $\Ldim(\mathcal{C}) + 1$ equivalence queries, using the method of Proposition \ref{stableEQlearningeasy}. So we may assume that $\C(\mathcal{C}, \mathcal{H}) > 1$.

\begin{lem}\label{unionlemma}
	Suppose that for each $i < n$, $\mathcal{C}_i$ is a concept class on $X$ and $\mathcal{H}_i$ is a hypothesis class on $X$. Suppose that $\LC^{EQ}(\mathcal{C}_i, \mathcal{H}_i) = m_i$. Then $\LC^{EQ}(\mathcal{C}, \mathcal{H}) \leq \sum_{i < n} m_i$, where $\mathcal{C} := \cup_{i < n} \mathcal{C}_i$ and $\mathcal{H} := \cup_{i < n} \mathcal{H}_i$.
\end{lem}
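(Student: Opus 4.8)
The plan is to build a learning algorithm for $(\mathcal{C},\mathcal{H})$ by concatenating the algorithms witnessing learnability of the pieces. First I would dispose of the trivial case: if $m_i = \infty$ for some $i$, then $\sum_{i<n} m_i = \infty$ and there is nothing to prove, so assume every $m_i$ is finite and fix, for each $i<n$, an algorithm $\mathcal{A}_i$ that submits hypotheses from $\mathcal{H}_i$ and learns every $A \in \mathcal{C}_i$ in at most $m_i$ equivalence queries against any teacher.

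Next I would describe the composite algorithm $\mathcal{A}$, which proceeds in phases $i = 0, 1, \ldots, n-1$. In phase $i$ it runs $\mathcal{A}_i$ from its initial state, relaying $\mathcal{A}_i$'s hypotheses (which lie in $\mathcal{H}_i \subseteq \mathcal{H}$) to the teacher and feeding the returned counterexamples back to $\mathcal{A}_i$; if the teacher ever answers \emph{yes}, $\mathcal{A}$ halts, having identified the target; otherwise, once $\mathcal{A}_i$ has made $m_i$ equivalence queries without success, $\mathcal{A}$ discards $\mathcal{A}_i$'s state and moves to phase $i+1$. The counterexamples handed back during phase $i$ are genuine elements of $A \triangle B$ for the hypotheses $B$ actually submitted, so $\mathcal{A}_i$ is fed a legitimate run of counterexamples; the only feature not guaranteed is that such a run terminates within $m_i$ steps, which is exactly why each phase is capped.

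For correctness, suppose the target is $A \in \mathcal{C} = \bigcup_{i<n}\mathcal{C}_i$, and let $j$ be least with $A \in \mathcal{C}_j$. If $\mathcal{A}$ halts during some phase $i < j$, it has used at most $\sum_{i' \le i} m_{i'} \le \sum_{i<n} m_i$ queries and we are done. Otherwise $\mathcal{A}$ enters phase $j$ having used at most $\sum_{i<j} m_i$ queries, and in phase $j$ it runs $\mathcal{A}_j$ from scratch against the given teacher, whose target $A$ lies in $\mathcal{C}_j$; by the defining property of $\mathcal{A}_j$, the teacher answers \emph{yes} within $m_j$ further queries, for a total of at most $\sum_{i \le j} m_i \le \sum_{i<n} m_i$. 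Since $\mathcal{A}$ only ever submits hypotheses from $\bigcup_{i<n}\mathcal{H}_i = \mathcal{H}$, this yields $\LC^{EQ}(\mathcal{C},\mathcal{H}) \le \sum_{i<n} m_i$.

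The one genuinely delicate point, which I would make sure to spell out, is that running $\mathcal{A}_i$ while the true target is \emph{not} in $\mathcal{C}_i$ does no damage: $\mathcal{A}_i$ is a fixed function of the counterexample history and remains well-defined on any such history, the counterexamples it sees are legitimate, and — crucially — the performance guarantee for the correct piece $\mathcal{A}_j$ is invoked only when $\mathcal{A}_j$ is started in its initial configuration against an honest teacher holding a target in $\mathcal{C}_j$, which is precisely the situation that guarantee covers (and it covers \emph{every} teacher, so whatever happened in earlier phases is irrelevant). I do not expect any real obstacle beyond articulating this bookkeeping cleanly.
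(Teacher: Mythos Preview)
Your proof is correct and follows essentially the same approach as the paper: sequentially run each learner $\mathcal{A}_i$ for at most $m_i$ queries and move on if it fails. The paper presents only the case $n=2$ and appeals to induction, whereas you handle general $n$ directly and are more explicit about the bookkeeping (the $m_i=\infty$ case and the well-definedness of $\mathcal{A}_i$ on out-of-class targets), but the underlying idea is identical.
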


\begin{proof} 
We give the proof for $n = 2$; then the result for $n > 2$ follows easily by induction. 

To learn a target concept $A \in \mathcal{C} = \mathcal{C}_0 \cup \mathcal{C}_1$ with hypotheses from $\mathcal{H} = \mathcal{H}_0 \cup \mathcal{H}_1$, begin by assuming that $A \in \mathcal{C}_0$. Attempt to learn $A$ by making guesses from $\mathcal{H}_0$, according to the procedure by which any concept in $\mathcal{C}_0$ is learnable in at most $m_0$ many queries. If, after making $m_0$ many queries, we have failed to learn $A$, then we conclude that $A \notin \mathcal{C}_0$, whence $A \in \mathcal{C}_1$. We can then learn $A$ in at most $m_1$ many additional queries with guesses from $\mathcal{H}_1$.
\end{proof}

We can now give an upper bound for the learning complexity in terms of Littlestone dimension and consistency dimension. 

\begin{thm}\label{EQlearn}
	Suppose $\Ldim(\mathcal{C}) = d < \infty$ and $1 < \C(\mathcal{C}, \mathcal{H}) = c < \infty$. Then $\LC^{EQ}(\mathcal{C}, \mathcal{H}) \leq c^d$.
\end{thm}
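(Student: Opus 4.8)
The plan is to induct on the Littlestone dimension $d$, using the consistency dimension $c$ to control the branching at each step. The base case $d = 0$ means $\mathcal{C}$ is a single concept, which is learned in $1 = c^0$ query (submitting it directly — note $c^0 = 1$ regardless of $c$). For the inductive step, suppose the result holds whenever Littlestone dimension is less than $d$. The idea is to run the learner so that each equivalence query either succeeds or reduces us to a sub-collection of strictly smaller Littlestone dimension, but the catch is that a single query need not reduce the dimension — we may have to make several queries before we are guaranteed a drop. The role of $c = \C(\mathcal{C},\mathcal{H})$ is precisely to bound how many queries this takes: after a bounded number of counterexamples, the partially specified set built from those counterexamples becomes $c$-inconsistent with $\mathcal{C}$, which forces progress.

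More concretely, here is the mechanism I would use. Maintain a working concept class $\mathcal{C}' \subseteq \mathcal{C}$ (initially $\mathcal{C}$) with $\Ldim(\mathcal{C}') = d$, together with a partially specified set $\sigma$ recording all counterexamples received so far that are consistent with the current branch; initially $\sigma$ is empty. As long as $\sigma$ has size less than $c$, it is $c$-consistent-so-far in a weak sense, and I want to choose hypotheses cleverly. The key sub-claim is: there is a hypothesis $B \in \mathcal{H}$ such that for \emph{every} $x \in X$, the class $\{A \in \mathcal{C}' : A(x) \neq B(x)\}$ has Littlestone dimension $< d$, OR we can make progress toward $c$-inconsistency. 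Actually the cleaner route: follow the strategy of Proposition~\ref{stableEQlearningeasy} — at each step let $B'$ be the set $\{x : \Ldim(\mathcal{C}'^{(x,1)}) \geq \Ldim(\mathcal{C}'^{(x,0)})\}$, so that any counterexample $x$ to $B'$ forces $\Ldim(\mathcal{C}'^{(x, \neg B'(x))}) < d$. The problem is $B'$ may not lie in $\mathcal{H}$. This is where consistency dimension enters: we submit hypotheses from $\mathcal{H}$ that agree with $B'$ on larger and larger finite sets; if a counterexample $x$ happens to fall where our hypothesis agrees with $B'$, we get a dimension drop and recurse (costing at most $c^{d-1}$ further queries by induction); if instead counterexamples keep landing where our hypothesis disagrees with $B'$, those counterexamples accumulate into a partially specified set that after $< c$ steps is $c$-inconsistent with $\mathcal{C}$ — but a set that is $c$-inconsistent with $\mathcal{C}$ is, by definition of $\C(\mathcal{C},\mathcal{H})$, not required to be in $\mathcal{H}$... so I need to arrange the hypotheses so that the accumulating constraints are ones $\mathcal{C}$ \emph{must} satisfy, giving a contradiction with $\Ldim(\mathcal{C}') = d \geq 1 > 0$ (so $\mathcal{C}' \neq \emptyset$). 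Counting: at most $c$ hypotheses are submitted at the top level before a drop is forced, and each drop triggers a recursive procedure of cost $\leq c^{d-1}$; hence total $\leq c \cdot c^{d-1} = c^d$.

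I would organize the write-up as: (1) state the recursion on $d$; (2) describe the single "phase" in which we try to emulate the optimal hypothesis $B'$ of Proposition~\ref{stableEQlearningeasy} using members of $\mathcal{H}$, and prove that a phase uses at most $c$ queries before either learning $A$ or producing a working class of strictly smaller Littlestone dimension; (3) apply the inductive hypothesis to bound the cost after the dimension drop; (4) add up.

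\textbf{The main obstacle} I anticipate is step (2): making precise which finite constraints we accumulate, so that $c$-inconsistency yields a genuine contradiction rather than merely "$A \notin \mathcal{H}$". The resolution should be to accumulate constraints that are forced on all of $\mathcal{C}'$ (equivalently, to use the contrapositive: if no concept in $\mathcal{C}'$ disagrees with our hypothesis in the region where it matters, the hypothesis must already be consistent enough to lie in $\mathcal{H}$ and hence equal to $A$), and to invoke $\C(\mathcal{C},\mathcal{H}) = c$ in the form "any total set that is $c$-consistent with $\mathcal{C}$ belongs to $\mathcal{H}$." Getting the bookkeeping of partially specified sets and their sizes exactly right — in particular ensuring the size-$c$ threshold is reached before, not after, we run out of room — is the delicate part; everything else is routine induction and counting.
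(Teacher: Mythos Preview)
Your skeleton is right --- induct on $d$, form the ``majority'' set $B'=\{x:\Ldim(\mathcal{C}^{(x,1)})\geq\Ldim(\mathcal{C}^{(x,0)})\}$, and split on whether $B'\in\mathcal{H}$ --- and this is exactly what the paper does. The gap is entirely in your step~(2), and it is a real one, not just bookkeeping.

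Your adaptive ``phase'' does not terminate in $c$ steps. Trace it: you submit $H_k\in\mathcal{H}$ agreeing with $B'$ on the counterexamples seen so far; if the new counterexample $x_k$ lands where $H_k$ agrees with $B'$ you get a drop, otherwise $H_k(x_k)\neq B'(x_k)$ and you learn $A(x_k)=1-H_k(x_k)=B'(x_k)$. But then the accumulated data is $\{(x_i,B'(x_i))\}_i$, which is \emph{always} consistent with $\mathcal{C}$ --- the target $A$ itself extends it. No $c$-inconsistency ever arises, and nothing stops the teacher from handing you such $x_k$ forever. (Your proposed resolution, ``accumulate constraints forced on all of $\mathcal{C}'$,'' runs into the same wall: those constraints are satisfied by $A$.) Separately, your arithmetic does not match your mechanism: ``$\leq c$ queries in a phase, then one recursion of cost $c^{d-1}$'' gives $c+c^{d-1}$, which already exceeds $c^d$ at $d=1$; the product $c\cdot c^{d-1}$ you wrote down is the cost of a \emph{different} argument.

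That different argument is what the paper actually uses. When $B'\notin\mathcal{H}$, apply the definition of $\C(\mathcal{C},\mathcal{H})=c$ directly to the total set $B'$: there exist $x_0,\ldots,x_{c-1}$ with no $A\in\mathcal{C}$ extending $B'|_{\{x_0,\ldots,x_{c-1}\}}$, hence $\mathcal{C}=\bigcup_{i<c}\mathcal{C}^{(x_i,\,1-B'(x_i))}$ with each piece of Littlestone dimension $<d$. Now invoke the union lemma (Lemma~\ref{unionlemma}): if $\mathcal{C}$ is a union of $c$ classes each learnable in $m$ queries, run the $c$ learning algorithms in sequence to learn $\mathcal{C}$ in $cm$ queries. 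With $m=c^{d-1}$ from the inductive hypothesis this gives $c\cdot c^{d-1}=c^d$. The missing idea in your write-up is precisely this non-adaptive decomposition plus the union lemma; once you have it, the rest is the routine induction you already outlined.
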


\begin{proof} 
We proceed by induction on $d$. The base case, $d = 0$, is trivial, as then $\mathcal{C}$ is a singleton. 

Suppose there is some element $x$ such that $\Ldim(\mathcal{C} \cap x) < d + 1$ and $\Ldim(\mathcal{C} \setminus x) < d + 1$, where $\mathcal{C} \cap x := \{ A \in \mathcal{C} \, | \, x \in A \}$ and $\mathcal{C} \setminus x := \{ A \in \mathcal{C} \, | \, x \notin A \}$. Then by induction, any concept in $\mathcal{C} \cap x$ can be learned in at most $c^d$ queries with guesses from $\mathcal{H}$, and the same is true for $\mathcal{C} \setminus x$. Then by Lemma \ref{unionlemma}, any concept in $\mathcal{C}$ can be learned in at most $2c^d \leq c^{d+1}$ equivalence queries.

If no such $x$ exists, then for all $x$, either $\Ldim(\mathcal{C} \cap x) = d + 1$ or $\Ldim(\mathcal{C} \setminus x) = d + 1$. Let $B$ be such that $x \in B$ iff $\Ldim(\mathcal{C} \cap x) = d+1$.

If $B \in \mathcal{H}$, then we submit $B$ as our query. If we are incorrect, then by choice of $B$, the class $\mathcal{C}'$ of concepts consistent with the counterexample $x_0$ will have Littlestone dimension $\leq d$. By induction, any concept in $\mathcal{C}'$ can be learned in at most $c^d$ many queries, and so we learn $a$ in at most $c^d + 1 \leq c^{d+1}$ queries.

If $B \notin H$, then, since $\C(\mathcal{C}, \mathcal{H}) = c$, there are some $x_0, \ldots, x_{c-1}$ such that there is no $A \in \mathcal{C}$ such that $B|_{\{x_0, \ldots, x_{c-1}\}} \sqsubseteq A$. Then, with notation as in the proof of Proposition \ref{stableEQlearningeasy},
\[
\mathcal{C} = (\mathcal{C}^{(x_0, 1-B(x_0))}) \cup \ldots \cup (\mathcal{C}^{(x_{c-1}, 1-B(x_{c-1}))}),
\]
and $\Ldim(\mathcal{C}^{(x_i, 1-B(x_i))}) \leq d$ for each $i$. Then, by induction, for each $i$, any concept in $\mathcal{C}^{(x_i, 1-B(x_i))}$ can be learned in at most $c^d$ many queries with guesses from $\mathcal{H}$. By Lemma \ref{unionlemma}, any concept in $\mathcal{C}$ can be learned in at most $c^{d+1}$ many queries with guesses from $\mathcal{H}$. 
\end{proof}

On the other hand, Proposition \ref{stableEQlearningeasy} gives a lower bound of $\Ldim(\mathcal{C}) + 1 \leq \LC^{EQ}(\mathcal{C}, \mathcal{H})$. There is also a lower bound for learning complexity in terms of consistency dimension: 

\begin{prop}\cite[Theorem 2]{balcazar2002consistencydimension}\label{prop:Clowerbound}
	Suppose there is some partially specified subset $A$ which is $n$-consistent with $\mathcal{C}$ but which does not have a total extension in $\mathcal{H}$. Then $n < \LC^{EQ}(\mathcal{C}, \mathcal{H})$.
\end{prop}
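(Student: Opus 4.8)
The plan is to run an adversary‑teacher argument, in the same spirit as the first half of Proposition~\ref{stableEQlearningeasy} but driven by $n$-consistency rather than by a Littlestone tree. Fix the partially specified subset $A$ that is $n$-consistent with $\mathcal{C}$ but has no total extension in $\mathcal{H}$, and fix an arbitrary learning algorithm. I would have the teacher answer the learner's first $n$ equivalence queries as follows: when the learner submits its $i$-th hypothesis $B_i\in\mathcal{H}$ (for $i=0,\dots,n-1$), observe that $B_i$, being a subset of $X$, is \emph{total}, so the hypothesis ``$A$ has no total extension in $\mathcal{H}$'' forces $A\not\sqsubseteq B_i$; hence there is some $x_i\in\dom(A)$ with $B_i(x_i)\neq A(x_i)$, and the teacher returns such an $x_i$ as the counterexample.

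After these $n$ rounds, consider the restriction $A_0:=A|_{\{x_0,\dots,x_{n-1}\}}$, a restriction of $A$ of size at most $n$. Since $A$ is $n$-consistent with $\mathcal{C}$, $A_0$ has an extension $A^{*}\in\mathcal{C}$ (if $|\{x_0,\dots,x_{n-1}\}|<n$, first enlarge it to a size-$n$ subset of $\dom(A)$, or use that $n$-consistency is downward closed). I then declare $A^{*}$ to be the target concept. The two points to verify are that the teacher's play was legal for this target and that the learner has not yet won: for each $i$ we have $A^{*}(x_i)=A_0(x_i)=A(x_i)\neq B_i(x_i)$, so indeed $x_i\in A^{*}\triangle B_i$ — each response was a genuine counterexample — and in particular $A^{*}\neq B_i$, so the teacher never answered ``yes''. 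Thus the learner has not identified $A^{*}\in\mathcal{C}$ after $n$ equivalence queries.

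Since this works against an arbitrary algorithm, no algorithm learns every concept of $\mathcal{C}$ from $\mathcal{H}$ in at most $n$ equivalence queries; by definition this says exactly $\LC^{EQ}(\mathcal{C},\mathcal{H})>n$, i.e.\ $n<\LC^{EQ}(\mathcal{C},\mathcal{H})$. I expect the only genuine subtlety is the bookkeeping around the deferred choice of target: the counterexamples $x_0,\dots,x_{n-1}$ are produced online, each depending only on the current hypothesis, and one must be certain that at the end they are \emph{jointly} realized by a single concept of $\mathcal{C}$ — this is precisely what $n$-consistency buys us, and it is the reason the bound obtained is $n$ rather than something larger. Everything else is routine.
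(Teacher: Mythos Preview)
Your proposal is correct and follows essentially the same adversary-teacher strategy as the paper's proof: return counterexamples from $\dom(A)$ consistent with $A$, and invoke $n$-consistency at the end to exhibit a concept in $\mathcal{C}$ realizing all $n$ responses. Your write-up is in fact more explicit than the paper's about the deferred choice of target and why the teacher's play was legal throughout, but the underlying argument is the same.
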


\begin{proof}
By hypothesis, given any equivalence query $H$, the teacher can find some $x \in \dom(A)$ such that $H(x) \neq A(x)$. Moreover, since $A$ is $n$-consistent with $\mathcal{C}$, the teacher is able to return a counterexample of this form for the first $n$ equivalence queries. Thus $\mathcal{C}$ cannot be learned with fewer than $n + 1$ equivalence queries from $\mathcal{H}$.
\end{proof}

In particular, if $\C(\mathcal{C}, \mathcal{H}) \geq c$, then there is some subset $A$ which is $(c-1)$-consistent with $\mathcal{C}$ but which does not belong to $\mathcal{H}$. Then $c \leq \LC^{EQ}(\mathcal{C}, \mathcal{H})$. So $\C(\mathcal{C}, \mathcal{H}) \leq \LC^{EQ}(\mathcal{C}, \mathcal{H})$. In fact, we will obtain a stronger bound using strong consistency dimension in section \ref{SCvsC}. 

Furthermore, if $\C(\mathcal{C}, \mathcal{H}) = \infty$, then $\mathcal{C}$ cannot be learned with equivalence queries from $\mathcal{H}$. Combining Theorem \ref{EQlearn} and Propositions \ref{stableEQlearningeasy} and \ref{prop:Clowerbound}, we obtain the following:

\begin{thm}\label{EQcategorization}
	$\mathcal{C}$ is learnable with equivalence queries from $\mathcal{H}$ iff $\Ldim(\mathcal{C}) < \infty$ and $\C(\mathcal{C}, \mathcal{H}) < \infty$.
\end{thm}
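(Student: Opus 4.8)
The statement to prove is Theorem \ref{EQcategorization}: $\mathcal{C}$ is learnable with equivalence queries from $\mathcal{H}$ iff $\Ldim(\mathcal{C}) < \infty$ and $\C(\mathcal{C}, \mathcal{H}) < \infty$. The proof is essentially an assembly of the three preceding results, so the plan is to verify both directions by quoting them.

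\medskip

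For the forward direction, suppose $\mathcal{C}$ is learnable with equivalence queries from $\mathcal{H}$, i.e. $\LC^{EQ}(\mathcal{C}, \mathcal{H}) = n < \infty$ for some $n$. First I would invoke Proposition \ref{stableEQlearningeasy}: since $\LC^{EQ}(\mathcal{C}, \mathcal{H}) \leq n = (n-1)+1$, we get $\Ldim(\mathcal{C}) \leq n - 1 < \infty$. Next I would handle consistency dimension via the discussion following Proposition \ref{prop:Clowerbound}: if we had $\C(\mathcal{C}, \mathcal{H}) = \infty$, then for every $n$ there is a subset $A \subseteq X$ which is $n$-consistent with $\mathcal{C}$ but not in $\mathcal{H}$ (in particular such $A$ has no total extension in $\mathcal{H}$, being total itself), so Proposition \ref{prop:Clowerbound} forces $n < \LC^{EQ}(\mathcal{C}, \mathcal{H})$ for arbitrarily large $n$, contradicting finiteness. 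Hence $\C(\mathcal{C}, \mathcal{H}) < \infty$.

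\medskip

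For the reverse direction, suppose $\Ldim(\mathcal{C}) = d < \infty$ and $\C(\mathcal{C}, \mathcal{H}) = c < \infty$. If $c \leq 1$, then as noted in the remark after the definition of consistency dimension, any concept in $\mathcal{C}$ is learnable in at most $\Ldim(\mathcal{C}) + 1 = d+1$ equivalence queries by the method of Proposition \ref{stableEQlearningeasy}, so $\LC^{EQ}(\mathcal{C}, \mathcal{H}) \leq d+1 < \infty$. If $c > 1$, then Theorem \ref{EQlearn} applies directly and gives $\LC^{EQ}(\mathcal{C}, \mathcal{H}) \leq c^d < \infty$. In either case $\mathcal{C}$ is learnable with equivalence queries from $\mathcal{H}$.

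\medskip

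There is no real obstacle here since every ingredient is already in place; the only minor care needed is the bookkeeping in the forward direction — making sure that $\C(\mathcal{C},\mathcal{H}) = \infty$ really does yield, for each $n$, a \emph{total} partially specified $A$ that is $n$-consistent with $\mathcal{C}$ yet fails to lie in $\mathcal{H}$, so that Proposition \ref{prop:Clowerbound} is applicable with that $A$ having no total extension in $\mathcal{H}$. This follows immediately from the definition of consistency dimension, which quantifies over total partially specified subsets.
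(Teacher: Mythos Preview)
Your proposal is correct and follows exactly the approach the paper takes: the theorem is stated as an immediate consequence of Theorem \ref{EQlearn} together with Propositions \ref{stableEQlearningeasy} and \ref{prop:Clowerbound}, and you have simply spelled out this combination in detail, including the minor case split on $c \leq 1$ versus $c > 1$ needed to invoke Theorem \ref{EQlearn}.
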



\subsection{Obtaining finite consistency dimension}  \label{obtainC}
We have established that finite consistency dimension is essential for EQ-learning. The central question we answer in this subsection is: given $\mc C$, can one obtain a hypothesis class $\mc H$ which is not much more complicated than $\mc C$ with the property that $\C(\mc C, \mc H)$ is finite?


\begin{defn}
	Fix a set system $\mathcal{C}$ on a set $X$. $\mathcal{C}$ has \emph{consistency threshold} $n < \infty$ if, given any hypothesis class $\mathcal{H} \supset \mathcal{C}$, we have that
	\[
	\C(\mathcal{C}, \mathcal{H}) < \infty \quad \text{iff} \quad \C(\mathcal{C}, \mathcal{H}) \leq n.
	\]
\end{defn}

\begin{lem} \label{lem:consistentextension}
	Suppose $A$ is a partially specified subset finitely consistent with $\mathcal{C}$. Then there is a total extension $A' \sqsupseteq A$ finitely consistent with $\mathcal{C}$. 
\end{lem}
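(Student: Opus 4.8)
The plan is a standard Zorn's lemma argument, with the only real content being a one-point extension claim. Consider the poset $P$ of all partially specified subsets $B$ with $A \sqsubseteq B$ and $B$ finitely consistent with $\mathcal{C}$, ordered by $\sqsubseteq$. It is nonempty since $A \in P$. I claim every chain in $P$ has an upper bound: given a chain $(B_i)_{i \in I}$, the pointwise union $B_\infty := \bigcup_i B_i$ is a well-defined partial function (the $B_i$ are pairwise compatible, being linearly ordered by $\sqsubseteq$), it extends $A$, and it is finitely consistent with $\mathcal{C}$ because any finite-size restriction of $B_\infty$ has finite domain, and each of its finitely many points lies in some $B_i$; by linearity of the chain there is a single $B_i$ containing all of them, so the restriction is already a restriction of $B_i$ and hence has an extension in $\mathcal{C}$. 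Thus Zorn's lemma furnishes a maximal element $A' \in P$.

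It remains to show $A'$ is total. The key step — and the one place any work happens — is the claim: if $B$ is finitely consistent with $\mathcal{C}$ and $x \notin \dom(B)$, then at least one of $B \cup \{x \mapsto 0\}$, $B \cup \{x \mapsto 1\}$ is finitely consistent with $\mathcal{C}$. Suppose not. Then there is a finite restriction $B_0 \sqsubseteq B$ such that $B_0 \cup \{x \mapsto 0\}$ has no extension in $\mathcal{C}$, and a finite restriction $B_1 \sqsubseteq B$ such that $B_1 \cup \{x \mapsto 1\}$ has no extension in $\mathcal{C}$. Then $B_0 \cup B_1$ is a restriction of $B$ of finite size (at most $|B_0| + |B_1|$), so by finite consistency it has an extension $C \in \mathcal{C}$. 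Since $C$ is total, $C(x) \in \{0,1\}$; if $C(x) = 0$ then $C$ extends $B_0 \cup \{x \mapsto 0\}$, and if $C(x) = 1$ then $C$ extends $B_1 \cup \{x \mapsto 1\}$ — either way a contradiction.

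Applying the claim to $A'$: if $\dom(A') \neq X$, pick $x \notin \dom(A')$ and extend $A'$ at $x$ in whichever way preserves finite consistency, contradicting maximality of $A'$ in $P$. Hence $A'$ is total, and it is the desired finitely consistent total extension of $A$. I expect no serious obstacle; the only subtlety is making sure the one-point extension lemma is stated and proved cleanly, since it is what drives the maximal element to be total.
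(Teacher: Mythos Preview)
Your proof is correct and follows essentially the same approach as the paper: both hinge on the identical one-point extension claim (if $B$ is finitely consistent and $x\notin\dom(B)$, then one of $B\cup\{x\mapsto 0\}$, $B\cup\{x\mapsto 1\}$ remains finitely consistent), proved by the same $B_0\cup B_1$ argument. The only difference is cosmetic---you package the construction via Zorn's lemma on the poset of finitely consistent extensions, whereas the paper well-orders $X$ and builds the extension by transfinite recursion---but the mathematical content is the same.
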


\begin{proof}

Let $X = \{x_\alpha \, | \, \alpha < |X| \}$ be a well-ordering of $X$. Let $A_0 = A$. We inductively define a $\sqsubseteq$-chain of partially specified subsets $A_\alpha$, where each $A_\alpha$ is defined on $\dom(A) \cup \{x_\xi \, | \, \xi < \alpha \}$ and is finitely consistent with $\mathcal{C}$. For $\alpha$ a limit ordinal, set $A_\alpha = \cup_{\xi < \alpha} A_\xi$. It is clear that $A_\alpha$ is finitely consistent with $\mathcal{C}$ if all $A_\xi$ for $\xi < \alpha$ are. 

At any successor stage $\alpha + 1$, if $x_\alpha \in \dom(A_\alpha)$, set $A_{\alpha + 1} = A_\alpha$. Otherwise, we must extend $A_\alpha$ to $x_\alpha$ while remaining finitely consistent with $\mathcal{C}$. Assume for contradiction that neither $B_0 := A_\alpha \cup \{x_\alpha \mapsto 0 \}$ nor $B_1 := A_\alpha \cup \{x_\alpha \mapsto 1 \}$ are finitely consistent with $\mathcal{C}$. Then there are finite sets $Y_0, Y_1 \subseteq \dom(A_\alpha)$ such that $B_0 |_{Y_0 \cup \{a_\alpha\}}$ and $B_1 |_{Y_1 \cup \{a_\alpha \}}$ have no extension in $\mathcal{C}$. But $A_\alpha |_{Y_0 \cup Y_1}$ has an extension $B$ in $\mathcal{C}$, and $B$ must be an extension of either $B_0 |_{Y_0 \cup \{a_\alpha\}}$ or $B_1 |_{Y_1 \cup \{a_\alpha\}}$, a contradiction. So $A_\alpha$ has a finitely consistent extension to $x_\alpha$, and we set $A_{\alpha + 1}$ to be such an extension.

We then take $A' = \cup_{\xi < |X|} A_\xi$.
\end{proof}

\begin{prop}\label{prop:needfincon}
	Let $\mathcal{C}, \mc H$ be a set systems and let $A$ be a partially specified subset. 
	The following are equivalent: 
	\begin{description}
		\item[(i)] $A$ is finitely consistent with $\mathcal{C}$.
		\item[(ii)] If $\C(\mathcal{C}, \mathcal{H}) < \infty$, then there is a total extension $A' \sqsupseteq A$ in $\mathcal{H}$.
	\end{description}
\end{prop}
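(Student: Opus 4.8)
The plan is to prove the two implications separately, with the forward direction being the substantive one. For (ii) $\Rightarrow$ (i), I would argue contrapositively: suppose $A$ is not finitely consistent with $\mathcal{C}$, so there is a finite restriction $A|_Y$ with no extension in $\mathcal{C}$. Then I want to exhibit a single hypothesis class $\mathcal{H}$ with $\C(\mathcal{C},\mathcal{H}) < \infty$ for which $A$ has no total extension in $\mathcal{H}$. The natural candidate is to take $\mathcal{H}$ to consist of $\mathcal{C}$ together with all total sets that are finitely consistent with $\mathcal{C}$ — or more precisely the class of total sets $B$ such that $B$ is finitely consistent with $\mathcal{C}$. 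One checks this $\mathcal{H}$ has finite consistency dimension is not automatic, so instead I would more carefully use: let $\mathcal{H}_0$ be all total sets finitely consistent with $\mathcal{C}$, and observe that any $A'$ that is $1$-consistent need not land in $\mathcal{H}_0$ in general, so I should instead build $\mathcal{H}$ to be exactly the total sets that are finitely consistent with $\mathcal{C}$ plus enough sets to force finite consistency dimension, or alternatively invoke the consistency threshold machinery. The cleanest route: note any total extension $A'$ of $A$ has $A'|_Y = A|_Y$ with no extension in $\mathcal{C}$, so $A'$ is $|Y|$-inconsistent with $\mathcal{C}$; hence if we set $\mathcal{H}$ to be the collection of all total partially specified subsets that are finitely consistent with $\mathcal{C}$, then no extension of $A$ lies in $\mathcal{H}$. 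It remains to see $\C(\mathcal{C},\mathcal{H}) < \infty$; if this fails we are still fine for the contrapositive provided \emph{some} such $\mathcal{H}$ works — but the statement (ii) quantifies over the fixed $\mathcal{H}$ in the proposition, so actually (ii) $\Rightarrow$ (i) should be read as: for the given $\mathcal{H}$, the conditional holds; thus to disprove it I only need $\C(\mathcal{C},\mathcal{H})<\infty$ to already be part of the ambient hypothesis — re-reading, the statement is for arbitrary fixed $\mathcal{C},\mathcal{H}$, so (ii)$\Rightarrow$(i) needs: whenever $A$ is \emph{not} finitely consistent, either $\C(\mathcal{C},\mathcal{H})=\infty$ or $A$ has no total extension in $\mathcal{H}$. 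The latter is immediate: a total extension $A'$ of $A$ agreeing with $A$ on the bad finite set $Y$ cannot lie in any set system all of whose members extend restrictions with extensions in $\mathcal{C}$ — wait, that is false in general since $\mathcal{H}$ need not consist of finitely consistent sets. So the correct reading must be that (ii)$\Rightarrow$(i) holds because if $A$ is not finitely consistent then $A$ has \emph{no} total extension that is finitely consistent with $\mathcal C$, and a non-finitely-consistent total set, when $\C(\mathcal{C},\mathcal H)<\infty$, fails to witness anything — hmm. I would resolve this by carefully using Lemma \ref{lem:consistentextension} in the other direction and recognizing that (ii)$\Rightarrow$(i) should be established by contraposition using the specific auxiliary $\mathcal{H}'$ of finitely consistent total sets, for which $\C(\mathcal{C},\mathcal{H}') \le$ consistency threshold $< \infty$ once we know that exists — but that is proved later. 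I therefore expect the intended argument only needs the direction where $\C(\mathcal C,\mathcal H)<\infty$ is assumed, i.e. the real content is (i)$\Rightarrow$(ii), and (ii)$\Rightarrow$(i) is the easy observation that a non-finitely-consistent $A$ has some finite restriction $A|_Y$ with no extension in $\mathcal{C}$, hence by the definition of $\C(\mathcal{C},\mathcal{H})$ and Proposition \ref{prop:Clowerbound}-style reasoning one can separate $A$ from all of $\mathcal H$.

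For the main direction (i) $\Rightarrow$ (ii), assume $A$ is finitely consistent with $\mathcal{C}$ and that $\C(\mathcal{C},\mathcal{H}) = c < \infty$. First apply Lemma \ref{lem:consistentextension} to get a total extension $A' \sqsupseteq A$ that is finitely consistent with $\mathcal{C}$. In particular $A'$ is $c$-consistent with $\mathcal{C}$: every size-$c$ restriction of $A'$, being a finite restriction, has an extension in $\mathcal{C}$. By the definition of consistency dimension, $c$-consistency of the total set $A'$ with $\mathcal{C}$ forces $A' \in \mathcal{H}$. So $A'$ is the desired total extension of $A$ lying in $\mathcal{H}$, and we are done.

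The step I expect to be the main obstacle is not any single computation but rather getting the logical structure of (ii) $\Rightarrow$ (i) exactly right, since the statement is phrased as a biconditional under a hypothesis ($\C(\mathcal{C},\mathcal{H}) < \infty$) that appears only inside (ii); the honest content of that direction is that if $A$ fails finite consistency then (ii) fails, which requires either producing an explicit $\mathcal{H}$-free zone around $A$ or, cleanly, reducing to the definitions: a finite restriction $A|_Y$ of $A$ with no extension in $\mathcal{C}$ means no total extension of $A|_Y$ — in particular no total extension of $A$ — can be $|Y|$-consistent-certified into $\mathcal{H}$ by the consistency-dimension mechanism, and combined with $\C(\mathcal C,\mathcal H)<\infty$ this yields a genuine obstruction. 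I would write this out invoking only the definition of $\C(\mathcal{C},\mathcal{H})$ and Lemma \ref{lem:consistentextension}, keeping the argument a few lines long. Everything else (the use of Lemma \ref{lem:consistentextension} and the one-line deduction from the definition of consistency dimension) is routine.
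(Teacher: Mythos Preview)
Your argument for (i) $\Rightarrow$ (ii) is exactly the paper's: extend $A$ to a total finitely consistent $A'$ via Lemma~\ref{lem:consistentextension}, then invoke the definition of consistency dimension to conclude $A' \in \mathcal{H}$.

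For (ii) $\Rightarrow$ (i), however, there is a real gap. First, the logical reading: despite the phrasing, (ii) is meant to hold for \emph{every} $\mathcal{H}$ (note that (i) does not mention $\mathcal{H}$ at all, so the equivalence only makes sense this way, and the paper's own proof confirms it). So the contrapositive requires you to \emph{construct} a single $\mathcal{H}$ with $\C(\mathcal{C},\mathcal{H}) < \infty$ containing no total extension of $A$. You circle around this but never land on a working construction. Your candidate $\mathcal{H}_0 = \{\text{total sets finitely consistent with } \mathcal{C}\}$ fails for exactly the reason you noticed: $\C(\mathcal{C},\mathcal{H}_0) < \infty$ is equivalent to $\mathcal{C}$ having finite consistency threshold, which is not assumed here (and is in fact proved \emph{using} this proposition). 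Your last paragraph gestures at ``an $\mathcal{H}$-free zone around $A$'' but does not say what it is.

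The missing idea is short. Take a $\sqsubseteq$-minimal finite restriction $A_0 \sqsubseteq A$ of size $n$ with no extension in $\mathcal{C}$, and set $\mathcal{H}$ to be the collection of all total subsets of $X$ that do \emph{not} extend $A_0$. Every total extension of $A$ extends $A_0$, so none lies in $\mathcal{H}$. And $\C(\mathcal{C},\mathcal{H}) \leq n$: any total $B$ that is $n$-consistent with $\mathcal{C}$ cannot extend $A_0$ (since $A_0$ itself has no extension in $\mathcal{C}$), hence $B \in \mathcal{H}$. That is the whole argument; no consistency-threshold machinery is needed.
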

\begin{proof} 
(i) $\Rightarrow$ (ii): Let $A' \sqsupseteq A$ be a total extension finitely consistent with $\mathcal{C}$. If $\C(\mathcal{C}, \mathcal{H}) < \infty$, then $A' \in \mathcal{H}$.

(ii) $\Rightarrow$ (i): We show the contrapositive. Suppose that $A$ is not finitely consistent with $\mathcal{C}$, witnessed by some size $n$ restriction $A_0$, which is a $\sqsubseteq$-minimal such restriction. We find some $\mathcal{H}$ such that $\C(\mathcal{C}, \mathcal{H}) < \infty$ but $\mathcal{H}$ contains no total extension of $A$. Let $\mathcal{H}$ be the collection of all (total partially specified) subsets which are not extensions of $A_0$. So $A$ has no total extension in $\mathcal{H}$. We claim that $\C(\mathcal{C}, \mathcal{H}) \leq n$. Indeed, observe that given any (total partially specified) subset $B$ that is $n$-consistent with $\mathcal{C}$, we have $A_0 \not \sqsubseteq B$, and then $B \in \mathcal{H}$.

\end{proof}

In particular, if $\C(\mathcal{C}, \mathcal{H}) < \infty$, then $\mathcal{H}$ contains all finitely consistent subsets. That is, extensions of all finitely consistent partially specified subsets (equivalently, by Lemma \ref{lem:consistentextension}, all finitely consistent total partially specified subsets) are necessary to obtain $\C(\mathcal{C}, \mathcal{H}) < \infty$. Consistency threshold classifies when this is a sufficient condition.

\begin{prop} \label{prop:SCproperties}
The following are equivalent:
	\begin{description}
		\item[(i)] $\mathcal{C}$ has consistency threshold $\leq n < \infty$.
		\item[(ii)] For all (total partially specified) subsets $A$, if $A$ is $n$-consistent with $\mathcal{C}$, then $A$ is finitely consistent with $\mathcal{C}$.
		\item[(iii)] If $\mathcal{H}$ contains all finitely consistent (total partially specified) subsets, then $\C(\mathcal{C}, \mathcal{H}) \leq n$. 
		
	\end{description}
\end{prop}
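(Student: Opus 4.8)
The plan is to establish the cycle of implications (i)$\Rightarrow$(ii)$\Rightarrow$(iii)$\Rightarrow$(i), leaning on Lemma~\ref{lem:consistentextension}, Proposition~\ref{prop:needfincon}, and the observation recorded immediately after the latter: that $\C(\mathcal{C},\mathcal{H}) < \infty$ already forces $\mathcal{H}$ to contain every finitely consistent total partially specified subset. Throughout, note that the definition of $\C(\mathcal{C},\mathcal{H})$ quantifies only over \emph{total} subsets, so all three conditions are naturally phrased for total subsets and the arguments stay within that regime.

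The implications (ii)$\Rightarrow$(iii) and (iii)$\Rightarrow$(i) are essentially unwindings of the definitions. For (ii)$\Rightarrow$(iii): assume $\mathcal{H}$ contains all finitely consistent total subsets and let $A \subseteq X$ be $n$-consistent with $\mathcal{C}$; by (ii), $A$ is finitely consistent with $\mathcal{C}$, hence $A \in \mathcal{H}$, which is exactly what $\C(\mathcal{C},\mathcal{H}) \le n$ asserts. For (iii)$\Rightarrow$(i): fix any $\mathcal{H} \supset \mathcal{C}$; the implication $\C(\mathcal{C},\mathcal{H}) \le n \Rightarrow \C(\mathcal{C},\mathcal{H}) < \infty$ is immediate, and conversely if $\C(\mathcal{C},\mathcal{H}) < \infty$ then by the remark following Proposition~\ref{prop:needfincon} the class $\mathcal{H}$ contains all finitely consistent total subsets, so (iii) gives $\C(\mathcal{C},\mathcal{H}) \le n$; thus the consistency-threshold biconditional holds for every such $\mathcal{H}$.

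The one implication requiring an actual construction — and hence the main obstacle — is (i)$\Rightarrow$(ii), which I would prove contrapositively in the style of the proof of Proposition~\ref{prop:needfincon}, (ii)$\Rightarrow$(i). Suppose some total $A$ is $n$-consistent but not finitely consistent with $\mathcal{C}$, and fix a finite restriction $A_0 \sqsubseteq A$, say of size $m$, having no extension in $\mathcal{C}$. Let $\mathcal{H}$ be the collection of all total partially specified subsets $B$ with $A_0 \not\sqsubseteq B$. Then $\mathcal{C} \subseteq \mathcal{H}$ (no member of $\mathcal{C}$ extends $A_0$), indeed $\mathcal{C} \subsetneq \mathcal{H}$ since $A \in \mathcal{H}\setminus\mathcal{C}$, and $\C(\mathcal{C},\mathcal{H}) \le m < \infty$: if a total $B$ is $m$-consistent with $\mathcal{C}$ but $A_0 \sqsubseteq B$, then $B|_{\dom(A_0)} = A_0$ is a size-$m$ restriction of $B$ with no extension in $\mathcal{C}$, a contradiction, so every $m$-consistent $B$ lies in $\mathcal{H}$. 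On the other hand $A$ is $n$-consistent with $\mathcal{C}$ and $A \notin \mathcal{H}$, so $\C(\mathcal{C},\mathcal{H}) \ge n+1$. Hence $n < \C(\mathcal{C},\mathcal{H}) < \infty$ for this particular $\mathcal{H}$, contradicting that $\mathcal{C}$ has consistency threshold $\le n$; this establishes (ii).

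The only points needing care are the bookkeeping distinguishing total from partially specified subsets and the verification that the $\mathcal{H}$ produced in the last step genuinely has finite consistency dimension — this is exactly the size-$m$ restriction computation above — so I do not anticipate any genuinely difficult step beyond recognizing which $\mathcal{H}$ to build.
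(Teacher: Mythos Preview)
Your approach is essentially identical to the paper's: the same cycle of implications, with (ii)$\Rightarrow$(iii) and (iii)$\Rightarrow$(i) handled by unwinding definitions and invoking Proposition~\ref{prop:needfincon}, and (i)$\Rightarrow$(ii) done contrapositively by building $\mathcal{H}$ as the complement of the extensions of a single inconsistent finite restriction $A_0$. The only cosmetic difference is that the paper chooses $m$ minimal so as to pin down $\C(\mathcal{C},\mathcal{H}) = m$ exactly, whereas you take any inconsistent restriction and obtain $n+1 \le \C(\mathcal{C},\mathcal{H}) \le m$; either suffices.

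One slip to fix: you write ``$\mathcal{C} \subsetneq \mathcal{H}$ since $A \in \mathcal{H}\setminus\mathcal{C}$,'' but in fact $A_0 \sqsubseteq A$, so $A \notin \mathcal{H}$ --- as you yourself correctly use two sentences later. The strict containment is not needed for the argument (the paper does not verify it either), so simply delete that clause.
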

\begin{proof}
	
(i) $\Rightarrow$ (ii): Assume for contradiction that there is some total $A$ which is $n$-consistent but not finitely consistent. Let $m$ be minimal such that $A$ is $m$-inconsistent. Then there is a size $m$ restriction $A' \sqsubseteq A$ that has no extension in $\mathcal{C}$. Then let $\mathcal{H}$ contain all subsets which do not extend $A'$.
		
We claim that $\C(\mathcal{C}, \mathcal{H}) = m$. Note that $A$ witnesses that $\C(\mathcal{C}, \mathcal{H}) \geq m$. On the other hand, observe that given any partially specified subset $B$ that is $m$-consistent with $\mathcal{C}$, we have $A' \not \sqsubseteq B$, and then it is easy to see that $B$ has a total extension in $\mathcal{H}$.	
		
(ii) $\Rightarrow$ (iii): If $\mathcal{H}$ contains all finitely consistent subsets, and all $n$-consistent subsets are finitely consistent, then $\C(\mathcal{C}, \mathcal{H}) \leq n$ holds immediately.
		
(iii) $\Rightarrow$ (i): By Proposition \ref{prop:needfincon}, if $\C(\mathcal{C}, \mathcal{H}) < \infty$, then $\mathcal{H}$ already has all finitely consistent subsets. Then $\C(\mathcal{C}, \mathcal{H}) \leq n$.
\end{proof}

In particular, if $\mathcal{C}$ has finite consistency threshold, then $\C(\mathcal{C}, \mathcal{H}) < \infty$ iff $\mathcal{H}$ contains all finitely consistent subsets.

\begin{cor} \label{cor:largennothreshold}
	Suppose $\mathcal{C}$ does not have finite consistency threshold. Then for arbitrarily large $n$, there is some total subset $A_n$ which is $n$-consistent but not $(n+1)$-consistent with $\mathcal{C}$.
\end{cor}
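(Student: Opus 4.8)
The plan is to read the corollary off Proposition~\ref{prop:SCproperties}, via the equivalence of (i) and (ii). By that equivalence, $\mathcal{C}$ \emph{has} finite consistency threshold precisely when there is some $n<\omega$ such that every total subset of $X$ that is $n$-consistent with $\mathcal{C}$ is already finitely consistent with $\mathcal{C}$. Negating this, the hypothesis of the corollary says exactly: for every $n<\omega$ there is a total subset $A^{(n)}\subseteq X$ which is $n$-consistent with $\mathcal{C}$ but not finitely consistent with $\mathcal{C}$. Everything else is bookkeeping around this reformulation.

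Before using it I would record two routine points. First, downward monotonicity: for a \emph{total} subset $A$, $k$-consistency with $\mathcal{C}$ implies $j$-consistency for all $j\le k$, since a size-$j$ restriction of $A$ is a restriction of some size-$k$ restriction of $A$ (adjoin further elements of $X=\dom(A)$), and the latter has an extension in $\mathcal{C}$. This uses only that $X$ has at least $k$ elements. Second, the hypothesis forces $X$ to be infinite (and $\mathcal{C}\neq\emptyset$): if $|X|=\ell<\omega$, the only size-$\ell$ restriction of a total $A$ is $A$ itself, so ``$\ell$-consistent'' coincides with ``$A\in\mathcal{C}$'', hence with ``finitely consistent'', and Proposition~\ref{prop:SCproperties}(ii) holds with $n=\ell$, contradicting the hypothesis; likewise $\mathcal{C}=\emptyset$ makes (ii) hold vacuously with $n=1$. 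So the monotonicity point applies with no size restriction.

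Now fix an arbitrary $N<\omega$ and take $A:=A^{(N)}$. Since $A$ is not finitely consistent with $\mathcal{C}$, there is a least $m$ with $A$ being $m$-inconsistent, and by minimality $A$ is $(m-1)$-consistent. If $m\le N$, then the size-$m$ restriction of $A$ witnessing $m$-inconsistency would, by monotonicity applied to the $N$-consistent total subset $A$, nonetheless have an extension in $\mathcal{C}$, a contradiction; hence $m-1\ge N$. Setting $n:=m-1$ and $A_n:=A$, we obtain a total subset that is $n$-consistent but not $(n+1)$-consistent with $\mathcal{C}$, with $n\ge N$. As $N$ was arbitrary, such $n$ occur arbitrarily large, which is the claim. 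Since the argument is essentially a restatement of Proposition~\ref{prop:SCproperties}(ii), I do not expect a genuine obstacle; the only care needed is the one-line verification that the hypothesis excludes the degenerate cases ($X$ finite, $\mathcal{C}$ empty) where ``$n$-consistent'' is either non-monotone or unrealized.
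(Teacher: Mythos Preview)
Your proof is correct and is exactly the intended derivation: the paper states the corollary without proof, immediately after Proposition~\ref{prop:SCproperties}, and your argument is the natural unpacking of the equivalence (i)~$\Leftrightarrow$~(ii) there (indeed, the minimal-$m$ step already appears verbatim in the paper's proof of (i)~$\Rightarrow$~(ii)). Your care with monotonicity and the degenerate cases is more than the paper itself spells out.
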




Finite consistency threshold is not strictly necessary to provide a positive answer to the central question of this subsection; nevertheless, it does identify a clear qualitative dividing line. When $\mathcal{C}$ has finite consistency threshold, $\mathcal{H}$ only needs to contain all finitely consistent subsets; letting $\mathcal{H}_\infty$ be the set of all finitely consistent subsets, we obtain a minimum hypothesis class such that learning is possible. 

Where $\mathcal{C}$ does not have finite consistency threshold, more is required; we must add some hypotheses which are inconsistent with the concepts in $\mathcal{C}$, and there is no minimal $\mathcal{H}$ such that learning is possible. However, for each $m$, we can replace ``finitely consistent'' with ``$m$-consistent'' to obtain a class $\mathcal{H}_m$ such that $\C(\mathcal{C}, \mathcal{H}_m) \leq m$---let $\mathcal{H}_m$ be the collection of all subsets which are $m$-consistent with $\mathcal{C}$. Note that $\mathcal{H}_m$ is clearly the minimum hypothesis class such that $\C(\mathcal{C}, \mathcal{H}) \leq m$.

Note that for all $m$, $\mathcal{H}_\infty \subseteq \mathcal{H}_m$. By Proposition \ref{prop:SCproperties}, if $\mathcal{C}$ has consistency threshold $n$, then for all $m \geq n$, $\mathcal{H}_m = \mathcal{H}_n = \mathcal{H}_\infty$. If $\mathcal{C}$ does not have finite consistency threshold, there is no minimal $\mathcal{H}$ such that $\C(\mathcal{C}, \mathcal{H}) < \infty$; by Corollary \ref{cor:largennothreshold}, if $\C(\mathcal{C}, \mathcal{H}) = m$, then there is $m' \geq m$ such that $\mathcal{H}_{m'} \subsetneq \mathcal{H}$.


By choosing $m$ appropriately, given any $\mathcal{C}$, we can find a hypothesis class such that $\C(\mathcal{C}, \mathcal{H}) < \infty$ without increasing the Littlestone dimension; that is, $\Ldim(\mathcal{H}) = \Ldim(\mathcal{C})$.

\begin{thm}\label{thm:fixedLdim}
	Suppose $\Ldim(\mathcal{C}) = d < \infty$. Then there is $\mathcal{H}$ such that $\C(\mathcal{C}, \mathcal{H}) < \infty$ and $\Ldim(\mathcal{H}) = \Ldim(\mathcal{C})$. 
	Furthermore, we can find such an $\mathcal{H}$ such that $\C(\mathcal{C}, \mathcal{H}) \leq \Ldim(\mathcal{C}) + 1$.
\end{thm}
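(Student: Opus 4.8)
The plan is to take $\mathcal{H} = \mathcal{H}_m$, the collection of all (total) subsets of $X$ that are $m$-consistent with $\mathcal{C}$, for a carefully chosen value of $m$, and to show that we may in fact take $m = d+1$. The consistency dimension bound $\C(\mathcal{C}, \mathcal{H}_m) \leq m$ is immediate from the definition of $\mathcal{H}_m$, since any $A$ that is $m$-consistent with $\mathcal{C}$ lies in $\mathcal{H}_m$ by construction. So the whole content is the Littlestone dimension computation: I must show $\Ldim(\mathcal{H}_{d+1}) = d$. The inequality $\Ldim(\mathcal{H}_{d+1}) \geq \Ldim(\mathcal{C}) = d$ is trivial since $\mathcal{C} \subseteq \mathcal{H}_{d+1}$ (every $A \in \mathcal{C}$ is $n$-consistent with $\mathcal{C}$ for all $n$). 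The real work is the upper bound $\Ldim(\mathcal{H}_{d+1}) \leq d$.

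To prove $\Ldim(\mathcal{H}_{d+1}) \leq d$, I would argue by contradiction: suppose there is a binary element tree of height $d+1$ (i.e., with $d+2$ levels of nodes along each branch, so paths of length $d+1$) properly labeled by elements of $\mathcal{H}_{d+1}$ witnessing $\Ldim(\mathcal{H}_{d+1}) \geq d+1$. The tree has internal nodes labeled by elements $x_\sigma \in X$ for $\sigma$ a binary string of length $\leq d$, and each leaf (or each node) is associated to a hypothesis $H \in \mathcal{H}_{d+1}$ that is consistent with the path leading to it. The key idea: from such a tree I want to extract a binary element tree of the \emph{same} height $d+1$ properly labeled by elements of $\mathcal{C}$ itself, contradicting $\Ldim(\mathcal{C}) = d$. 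Given a branch of length $d+1$ through the tree, the associated leaf hypothesis $H \in \mathcal{H}_{d+1}$ agrees with the path on the $d+1$ elements $x_\sigma$ appearing along that branch; since $H$ is $(d+1)$-consistent with $\mathcal{C}$, the restriction of $H$ to those $d+1$ elements — which encodes exactly the branch's sign pattern — has an extension in $\mathcal{C}$. Thus for every branch there is a concept in $\mathcal{C}$ realizing the branch's labeling on the tree's elements along that branch. Assembling these concepts as the leaf labels of the same underlying element tree yields a proper labeling by elements of $\mathcal{C}$ of a tree of height $d+1$, so $\Ldim(\mathcal{C}) \geq d+1$, a contradiction.

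The main obstacle — and the point requiring care — is matching up the combinatorial data precisely: a proper labeling of a Littlestone tree only requires, for each node, a concept consistent with the \emph{path} to that node, and the number of elements on a root-to-leaf path in a height-$(d+1)$ tree is exactly $d+1$, which is why $m = d+1$ (rather than something larger) is the right threshold. One must be careful about the exact convention for the height of an element tree and the count of elements along a branch (height $d+1$ meaning $d+1$ internal levels / $d+1$ branching elements), and one must confirm that replacing each leaf hypothesis by a $\mathcal{C}$-extension witnessing $(d+1)$-consistency does not disturb the agreement on the relevant branch elements — but it does not, since the extension agrees with the restriction, which is precisely the branch pattern. A secondary point is to note that if $\Ldim(\mathcal{C}) < \infty$ then $\C(\mathcal{C},\mathcal{H}_{d+1}) \leq d+1 < \infty$ gives the ``furthermore'' clause, and the first (weaker) assertion then follows a fortiori; alternatively, if one prefers to invoke finite consistency threshold, one could first reduce to the threshold case, but taking $m = d+1$ directly avoids needing that machinery.
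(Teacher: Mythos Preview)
Your proposal is correct and is essentially the paper's own argument: both take $\mathcal{H}_m$ to be the $m$-consistent total subsets with $m=d+1$, observe $\C(\mathcal{C},\mathcal{H}_m)\le m$ immediately, and derive $\Ldim(\mathcal{H}_{d+1})\le d$ from the fact that any root-to-leaf path in a height-$(d+1)$ tree specifies $d+1$ values. The only cosmetic difference is that the paper phrases the contradiction by locating a single leaf whose path is $(d+1)$-inconsistent with $\mathcal{C}$ (hence cannot be labeled from $\mathcal{H}_{d+1}$), whereas you relabel every leaf by a $\mathcal{C}$-concept to build a $\mathcal{C}$-tree of height $d+1$; these are contrapositives of one another.
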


\begin{proof} 

Fix some $m > d = \Ldim(\mathcal{C})$. Let $\mathcal{H}_m$ be the collection of all subsets which are $m$-consistent with $\mathcal{C}$. It is immediate that $\C(\mathcal{C}, \mathcal{H}_m) \leq m < \infty$.
	
Assume for contradiction that $\Ldim(\mathcal{H}_m) > \Ldim(\mathcal{C})$. Consider a binary element tree of height $d+1$ that can be properly labeled with elements of $\mathcal{H}_m$; in particular, there is some leaf which cannot be labeled with an element of $\mathcal{C}$. Consider such a leaf. The path through the binary element tree to this leaf defines a partially specified subset $A$ that is $(d+1)$-inconsistent with $\mathcal{C}$. In particular, any total extension is $(d+1)$-inconsistent, so $m$-inconsistent, and so does not belong to $\mathcal{H}_m$. This contradicts our ability to label the leaf with an element of $\mathcal{H}_m$.
	
In particular, recall that when $\mathcal{C}$ has finite consistency threshold $n$, $A$ is $n$-consistent with $\mathcal{C}$ iff it is finitely consistent with $\mathcal{C}$. So setting $\mathcal{H}_m$ as above with $m$ at least the finite consistency threshold amounts to setting $\mathcal{H}_m$ to be the collection of all finitely consistent partially specified subsets. In this case, $\Ldim(\mathcal{H}_m) = \Ldim(\mathcal{C})$ even if $m \leq d$, as increasing the Littlestone dimension requires adding something inconsistent with $\mathcal{C}$.

Regardless of whether $\mathcal{C}$ has finite consistency dimension, we can let $m = d+1$. Then $\C(\mathcal{C}, \mathcal{H}_m) \leq m = d+1$.
\end{proof}

\subsection{From consistency to strong consistency}\label{SCvsC}

From an algorithms perspective, the result of Theorem \ref{EQlearn} is unsatisfactory, since it is exponential in $\Ldim(\mathcal{C})$. We give an example to show that, without modification, we cannot expect a significant improvement.

\begin{exam}\label{example:exponential}
	Fix $c > 2$ and $d$. Let $\{a_\tau \, | \, \tau \in [c]^i,\, 1 \leq i \leq d \}$ be distinct elements indexed by finite nonempty sequences of length at most $d$ from $[c]$. For $\sigma \in [c]^d$, let $B_\sigma = \{a_\tau \, | \, \tau \subseteq \sigma \}$. Let $\mathcal{C} = \{B_\sigma \, | \, \sigma \in [c]^d \}$. Then $\Ldim(\mathcal{C}) = d$. 
	
	If we take $\mathcal{C}$ to also be our hypothesis class, then $\C(\mathcal{C}, \mathcal{C}) = c+1$. Indeed, the (total partially specified) subset $A = \{a_0\}$ is $c$-consistent but not $(c+1)$ consistent with $\mathcal{C}$, witnessed by the restriction of $A$ to $\{a_0, a_{0,0}, \ldots, a_{0,c-1}\}$, so $\C(\mathcal{C}, \mathcal{C}) \geq c+1$. On the other hand, if $A$ is a subset $(c+1)$-consistent with $\mathcal{C}$, then, by induction on the length of $\tau$, for each $1 \leq i \leq d$, $A$ contains exactly one $a_\tau$ with $\tau = i$, so $A \in \mathcal{C}$.
	
	However, it may take as long as $c^d$ many equivalence queries to learn; if the teacher returns $a_\sigma$ as a counterexample to hypothesis $A_\sigma$, then the learner can only eliminate $A_\sigma$.
\end{exam}

The most promising modification is the following variant of consistency dimension, which also appeared in \cite{balcazar2002consistencydimension} in a slightly different form.

\begin{defn}
	The \emph{strong consistency dimension} of $\mathcal{C}$ with respect to $\mathcal{H}$, denoted $\SC(\mathcal{C}, \mathcal{H})$, is the least integer $n$ such that for every partially specified subset $A$, if $A$ is $n$-consistent with $\mathcal{C}$, then $A$ has an extension in $\mathcal{H}$. If no such $n$ exists, then say $\SC(\mathcal{C}, \mathcal{H}) = \infty$.
\end{defn}

We therefore make the stronger requirement that all partially specified subsets that are $n$-consistent be consistent, rather than just all totally partially specified subsets. It is immediate from the definition that $\C(\mathcal{C}, \mathcal{H}) \leq \SC(\mathcal{C}, \mathcal{H})$. At the smallest levels, consistency dimension and strong consistency dimension are equal.

\begin{prop} \label{prop:C-SC12equal}
	If $\C(\mathcal{C}, \mathcal{H}) = 1$, then $\SC(\mathcal{C}, \mathcal{H}) = 1$. If $\C(\mathcal{C}, \mathcal{H}) = 2$, then $\SC(\mathcal{C}, \mathcal{H}) = 2$.
\end{prop}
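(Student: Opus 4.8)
The plan is to prove, in each case, the reverse of the inequality $\C(\mathcal{C},\mathcal{H}) \le \SC(\mathcal{C},\mathcal{H})$ noted above. Write $n = \C(\mathcal{C},\mathcal{H}) \in \{1,2\}$. It suffices to show that every partially specified subset $A$ which is $n$-consistent with $\mathcal{C}$ has a total extension $\widehat{A} \sqsupseteq A$ that is again $n$-consistent with $\mathcal{C}$: then $\widehat{A} \in \mathcal{H}$ by the definition of $\C(\mathcal{C},\mathcal{H})$, so $A$ has an extension in $\mathcal{H}$, whence $\SC(\mathcal{C},\mathcal{H}) \le n$, giving equality. Throughout I use that $\mathcal{C} \neq \emptyset$, which holds because $\C(\mathcal{C},\mathcal{H}) \geq 1$ (were $\mathcal{C}$ empty, no set would be $0$-consistent and $\C(\mathcal{C},\mathcal{H})$ would be $0$).

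For $n = 1$ I would write down the total extension directly: given $A$ that is $1$-consistent with $\mathcal{C}$, keep $\widehat{A}|_{\dom(A)} = A$ and, for $x \notin \dom(A)$, put $x \in \widehat{A}$ if some member of $\mathcal{C}$ contains $x$ and $x \notin \widehat{A}$ otherwise. Then every restriction of $\widehat{A}$ of size at most $1$ has an extension in $\mathcal{C}$ --- the size-$0$ restriction because $\mathcal{C} \neq \emptyset$, and $\{x \mapsto \widehat{A}(x)\}$ because $A$ is $1$-consistent (if $x \in \dom(A)$) or by construction (if $x \notin \dom(A)$) --- so $\widehat{A}$ is $1$-consistent.

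For $n = 2$ I would build $\widehat{A}$ one point at a time by transfinite recursion, in the style of Lemma \ref{lem:consistentextension}; the crux is the following claim. If $A'$ is $2$-consistent with $\mathcal{C}$ and $x \notin \dom(A')$, then at least one of $A' \cup \{x \mapsto 0\}$ and $A' \cup \{x \mapsto 1\}$ is $2$-consistent with $\mathcal{C}$. Indeed, any restriction of $A' \cup \{x \mapsto j\}$ of size at most $2$ having no extension in $\mathcal{C}$ must involve $x$, so it is $\{x \mapsto j\}$ or $\{x \mapsto j,\ y \mapsto A'(y)\}$ for some $y \in \dom(A')$. If $\{x \mapsto 0\}$ has no extension in $\mathcal{C}$ (i.e.\ $x$ belongs to every member of $\mathcal{C}$), then, using $\mathcal{C} \neq \emptyset$ and that $A'$ is in particular $1$-consistent, one checks directly that $A' \cup \{x \mapsto 1\}$ is $2$-consistent; the case where $\{x \mapsto 1\}$ has no extension is symmetric. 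Otherwise both $\{x \mapsto 0\}$ and $\{x \mapsto 1\}$ have extensions in $\mathcal{C}$, so if both $A' \cup \{x \mapsto 0\}$ and $A' \cup \{x \mapsto 1\}$ failed $2$-consistency the failures would be witnessed by restrictions of size exactly $2$, yielding $y_0, y_1 \in \dom(A')$ with $\{x \mapsto 0,\ y_0 \mapsto A'(y_0)\}$ and $\{x \mapsto 1,\ y_1 \mapsto A'(y_1)\}$ having no extension in $\mathcal{C}$. But $A'|_{\{y_0, y_1\}}$ is a restriction of $A'$ of size at most $2$, hence has an extension $C \in \mathcal{C}$, and since $C(x) \in \{0,1\}$ this $C$ extends one of the two offending restrictions --- a contradiction. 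Granting the claim, well-order $X$, set $A_0 = A$, at each successor stage extend to the next point of the well-order via the claim (or do nothing if it is already in the domain), and take unions at limit stages --- a union of a $\sqsubseteq$-chain of $2$-consistent subsets is again $2$-consistent since any restriction of it of size at most $2$ already occurs in some member of the chain. The final union is a total $2$-consistent extension $\widehat{A}$ of $A$.

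The step I expect to need the most care is the claim in the $n = 2$ case, and within it the degenerate sub-case where some $\{x \mapsto j\}$ has no extension in $\mathcal{C}$, which is the only place $\mathcal{C} \neq \emptyset$ is used. I would also flag why this does not generalize to $n \geq 3$: the proof of the claim plays two ``bad'' restrictions through $x$ against each other, and for $n = 2$ these together mention at most two points of $\dom(A')$ besides $x$, so $2$-consistency of $A'$ supplies one $C \in \mathcal{C}$ realizing both; for $n \geq 3$ they can mention up to $2(n-1) > n$ such points and no realizing $C$ need exist --- which is precisely the slack that allows $\SC$ to exceed $\C$ in general (cf.\ Example \ref{example:exponential}).
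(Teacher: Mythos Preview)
Your proof is correct and follows essentially the same approach as the paper: for $n=2$ both arguments build a total $2$-consistent extension of $A$ by transfinite recursion, the successor step hinging on the fact that two bad size-$2$ restrictions through the new point together mention at most two old points, so $2$-consistency of the current stage supplies a concept in $\mathcal{C}$ realizing both. The only differences are cosmetic --- your $n=1$ case constructs an explicit $1$-consistent total extension whereas the paper characterizes $\C=1$ and $\SC=1$ via a common shattering condition, and your explicit treatment of the degenerate sub-case together with the closing remark on why the argument fails for $n\geq 3$ are useful additions not present in the paper.
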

\begin{proof} 
Observe that $\C(\mathcal{C}, \mathcal{H}) = 1$ iff $\SC(\mathcal{C}, \mathcal{H}) = 1$ iff $\mathcal{H}$ shatters the set of all elements $x \in X$ such that there are $A_0$ and $A_1$ in $\mathcal{C}$ such that $x \notin A_0$ but $x \in A_1$.
	
Suppose that $\C(\mathcal{C}, \mathcal{H}) = 2$. Let $A$ be a partially specified subset that is 2-consistent with $\mathcal{C}$. We wish to find a total extension of $A$ in $\mathcal{H}$. It suffices to find a total extension $B \sqsupseteq A$ that is 2-consistent with $\mathcal{C}$.
	
Let $X = \{x_\alpha \, | \, \alpha < |X| \}$ be a well-ordering of $X$. Let $A_0 = A$. We inductively define a $\sqsubseteq$-chain of partially specified subsets $A_\alpha$, where each $A_\alpha$ is defined on $\dom(A) \cup \{x_\xi \, | \, \xi < \alpha \}$ and is 2-consistent with $\mathcal{C}$. For $\alpha$ a limit ordinal, set $A_\alpha = \cup_{\xi < \alpha} A_\xi$. It is clear that $A_\alpha$ is 2-consistent with $\mathcal{C}$ if all $A_\xi$ for $\xi < \alpha$ are.
	
At any successor stage $\alpha + 1$, if $x_\alpha \in \dom(A_\alpha)$, set $A_{\alpha + 1} = A_\alpha$. Otherwise, we must extend $A_\alpha$ to $x_\alpha$ while remaining 2-consistent with $\mathcal{C}$. Assume for contradiction that neither $B_0 := A_\alpha \cup \{x_\alpha \mapsto 0 \}$ nor $B_1 := A_\alpha \cup \{x_\alpha \mapsto 1 \}$ are 2-consistent with $\mathcal{C}$. Then there are $y_0$, $y_1 \in \dom(A_\alpha)$ such that $B_0 |_{\{y_0, x_\alpha\}}$ and $B_1 |_{\{y_1, x_\alpha \}}$ have no extension in $\mathcal{C}$. But $A_\alpha |_{\{y_0, y_1\}}$ has an extension $B$ in $\mathcal{C}$, and $B$ must be an extension of either $B_0 |_{\{y_0, x_\alpha\}}$ or $B_1 |_{\{y_1, x_\alpha \}}$, a contradiction. So $A_\alpha$ has a 2-consistent extension to $x_\alpha$, and we set $A_{\alpha + 1}$ to be such an extension.
	
We then take $\cup_{\xi < |X|} A_\xi$ to be our total extension.
\end{proof}

As the following examples show, consistency dimension and strong consistency dimension may differ when $\C(\mathcal{C}, \mathcal{H}) \geq 3$.

\begin{exam}
	Let $X = \{a,b,c,d,e\}$. Let
	\[
	\mathcal{C} = \mathcal{H} = \left\{ \{a,b,c\}, \{a,b,d\}, \{a,c,d,e\}, \{b,c,d,e\} \right\}.
	\]
	One can verify that $\C(\mathcal{C}, \mathcal{H}) = 3$, but the partially specified subset $\{a,b,c,d\}$ with $e$ unspecified witnesses that $\SC(\mathcal{C}, \mathcal{H}) > 3$.
\end{exam}

\begin{exam} \label{exam:exponential2}
	Continuing Example \ref{example:exponential}, observe that $\SC(\mathcal{C}, \mathcal{C}) = c^d$. In particular, the partially specified subset $A'$ given by
	\[
	A'(a_\tau) = \begin{cases}
	0 & |\tau| = d \\
	\text{undefined} & \text{otherwise}
	\end{cases}
	\]
	witnesses that $\SC(\mathcal{C}, \mathcal{C}) > c^d - 1$. Then we learn in at most $SC(\mathcal{C}, \mathcal{C})$ many queries. Moreover, this demonstrates that consistency dimension and strong consistency dimension can differ by an arbitrarily large amount (allowing $\Ldim(\mathcal{C})$ to vary), and that strong consistency dimension may even be exponentially larger than consistency dimension.
	
\end{exam}

Strong consistency dimension, like consistency dimension, categorizes equivalence query learning:

\begin{thm}\label{EQCharCon}
	$\mathcal{C}$ is learnable with equivalence queries from $\mathcal{H}$ iff $\Ldim(\mathcal{C}) \leq \infty$ and $\SC(\mathcal{C}, \mathcal{H}) < \infty$.  In particular, $\SC(\mathcal{C}, \mathcal{H}) \leq \LC^{EQ}(\mathcal{C}, \mathcal{H})$.
\end{thm}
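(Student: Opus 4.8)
The plan is to deduce everything from the corresponding statement for (non-strong) consistency dimension, Theorem \ref{EQcategorization}, together with the elementary inequality $\C(\mathcal{C},\mathcal{H}) \leq \SC(\mathcal{C},\mathcal{H})$ and the upper bound of Theorem \ref{EQlearn}. For the first biconditional, one direction is immediate: if $\mathcal{C}$ is learnable with equivalence queries from $\mathcal{H}$, then by Theorem \ref{EQcategorization} we get $\Ldim(\mathcal{C}) < \infty$, and we must still show $\SC(\mathcal{C},\mathcal{H}) < \infty$. For the reverse direction, if $\Ldim(\mathcal{C}) < \infty$ and $\SC(\mathcal{C},\mathcal{H}) < \infty$, then $\C(\mathcal{C},\mathcal{H}) \leq \SC(\mathcal{C},\mathcal{H}) < \infty$, so $\mathcal{C}$ is learnable by Theorem \ref{EQcategorization} (indeed in at most $c^d$ queries by Theorem \ref{EQlearn}). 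So the real content is entirely in the two claims ``learnable $\Rightarrow \SC < \infty$'' and, for the ``in particular'', the sharper quantitative bound $\SC(\mathcal{C},\mathcal{H}) \leq \LC^{EQ}(\mathcal{C},\mathcal{H})$; note the latter actually implies the former, so I would prove the quantitative bound directly and get the biconditional as a corollary.

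To prove $\SC(\mathcal{C},\mathcal{H}) \leq \LC^{EQ}(\mathcal{C},\mathcal{H})$, I would argue the contrapositive in the style of Proposition \ref{prop:Clowerbound}: suppose $\SC(\mathcal{C},\mathcal{H}) \geq n+1$, so there is a \emph{partially specified} subset $A$ that is $n$-consistent with $\mathcal{C}$ but has no extension in $\mathcal{H}$. I claim the teacher can use $A$ to defeat any learner for at least $n+1$ rounds. Given any equivalence query $H \in \mathcal{H}$: since $H$ is total and $A$ has no extension in $\mathcal{H}$, in particular $H$ is not an extension of $A$, so there is some $x \in \dom(A)$ with $H(x) \neq A(x)$; the teacher returns such an $x$ as the counterexample. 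Because $A$ is $n$-consistent with $\mathcal{C}$, every collection of at most $n$ such elements $x_0,\dots,x_{k-1}$ (for $k \le n$), together with the values $A(x_i)$, extends to some concept in $\mathcal{C}$ — so the teacher's responses for the first $n$ queries are consistent with a genuine target in $\mathcal{C}$, and the learner cannot have identified the concept. Hence no learner succeeds within $n$ queries, i.e. $\LC^{EQ}(\mathcal{C},\mathcal{H}) \geq n+1$. Taking $n = \SC(\mathcal{C},\mathcal{H}) - 1$ gives $\LC^{EQ}(\mathcal{C},\mathcal{H}) \geq \SC(\mathcal{C},\mathcal{H})$ (and if $\SC = \infty$ the same argument shows no finite bound works, recovering non-learnability).

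The main obstacle — really the only subtle point — is justifying that the teacher's stream of counterexamples is ``legal'', i.e. genuinely realizable by some adversarial strategy consistent with the learning game, rather than merely locally consistent. Here one has to be slightly careful: after $k \leq n$ rounds the teacher has committed to counterexamples $x_0,\dots,x_{k-1}$ with prescribed membership values $A(x_0),\dots,A(x_{k-1})$; $n$-consistency of $A$ guarantees these $k \le n$ commitments extend to a concept $C \in \mathcal{C}$, and each $x_i$ was a legitimate counterexample to $H_i$ precisely because $x_i \in H_i \triangle C$ (as $H_i(x_i)\neq A(x_i)=C(x_i)$). So at every stage the teacher's history is compatible with at least one concept in $\mathcal{C}$, which is exactly what is needed for the lower-bound argument; the teacher need not fix the target in advance. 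I would write this out carefully for the finite-query case and remark that the infinite case is the obvious limiting statement. No other step requires more than the definitions and the already-established Theorems \ref{EQlearn} and \ref{EQcategorization}.
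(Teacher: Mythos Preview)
Your proposal is correct and follows essentially the same route as the paper: the reverse direction via $\C(\mathcal{C},\mathcal{H}) \leq \SC(\mathcal{C},\mathcal{H})$ together with Theorem~\ref{EQlearn}, and the forward direction (and the quantitative bound) via the adversarial-teacher argument of Proposition~\ref{prop:Clowerbound} applied to a partially specified witness for $\SC(\mathcal{C},\mathcal{H}) \geq n+1$. The only difference is cosmetic---you unfold the proof of Proposition~\ref{prop:Clowerbound} explicitly and add the careful remark about the teacher's history being realizable, whereas the paper simply cites that proposition.
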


\begin{proof}
	For the reverse direction, use Theorem \ref{EQlearn} and the observation that $\C(\mathcal{C}, \mathcal{H}) \leq \SC(\mathcal{C}, \mathcal{H})$.
	
	For the forward direction, use Propositions \ref{stableEQlearningeasy} and \ref{prop:Clowerbound}. In particular, if $\SC(\mathcal{C}) \geq c$, then there is a partially specified subset $A$ that is $(c-1)$-consistent with $\mathcal{C}$ but which has no total extension in $\mathcal{H}$. Then, by Proposition \ref{prop:Clowerbound}, $c \leq \LC^{EQ}(\mathcal{C}, \mathcal{H})$.
\end{proof}


\begin{cor} \label{CandSC}
	Suppose $\Ldim(\mathcal{C}) < \infty$. Then $\C(\mathcal{C}, \mathcal{H}) < \infty$ iff $\SC(\mathcal{C}, \mathcal{H}) < \infty$.
\end{cor}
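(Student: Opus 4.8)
The plan is to observe that one direction is free and the other follows immediately by chaining the two main results of the section. Since any partially specified subset witnessing a failure of the $\SC$-condition in particular witnesses a failure only when it is total, we have $\C(\mathcal{C},\mathcal{H}) \le \SC(\mathcal{C},\mathcal{H})$ unconditionally (this is already recorded in the excerpt, just after the definition of strong consistency dimension). Hence $\SC(\mathcal{C},\mathcal{H}) < \infty$ immediately gives $\C(\mathcal{C},\mathcal{H}) < \infty$, and this direction does not even use the hypothesis $\Ldim(\mathcal{C}) < \infty$.

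For the converse, suppose $\Ldim(\mathcal{C}) = d < \infty$ and $\C(\mathcal{C},\mathcal{H}) = c < \infty$. First dispose of the degenerate case $c \le 1$: then $\C(\mathcal{C},\mathcal{H}) = 1$, so by Proposition \ref{prop:C-SC12equal} we have $\SC(\mathcal{C},\mathcal{H}) = 1 < \infty$. Otherwise $1 < c < \infty$, so Theorem \ref{EQlearn} applies and yields $\LC^{EQ}(\mathcal{C},\mathcal{H}) \le c^{d} < \infty$. Now invoke the inequality $\SC(\mathcal{C},\mathcal{H}) \le \LC^{EQ}(\mathcal{C},\mathcal{H})$ from Theorem \ref{EQCharCon} to conclude $\SC(\mathcal{C},\mathcal{H}) \le c^{d} < \infty$, as desired.

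There is essentially no obstacle: the real content lives in Theorems \ref{EQlearn} and \ref{EQCharCon}, and the corollary is just packaging. The only point needing a word of care is that Theorem \ref{EQlearn} is stated under the assumption $\C(\mathcal{C},\mathcal{H}) > 1$, so the case $\C(\mathcal{C},\mathcal{H}) \le 1$ must be split off, which is trivial. An alternative, slicker presentation that avoids even this split is to appeal to Theorems \ref{EQcategorization} and \ref{EQCharCon} directly: under $\Ldim(\mathcal{C}) < \infty$, the condition $\C(\mathcal{C},\mathcal{H}) < \infty$ is equivalent to learnability of $\mathcal{C}$ with equivalence queries from $\mathcal{H}$, and so is $\SC(\mathcal{C},\mathcal{H}) < \infty$; hence the two finiteness conditions are equivalent. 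I would likely give this second argument as the main proof and keep the explicit estimate $\SC(\mathcal{C},\mathcal{H}) \le c^{d}$ as a parenthetical remark, since it is the quantitative heart of the matter.
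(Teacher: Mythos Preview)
Your proposal is correct and matches the paper's intent: the corollary is stated without proof, as it is immediate from Theorems \ref{EQcategorization} and \ref{EQCharCon}, which is exactly your ``slicker'' argument. Your first, more explicit chain through Theorem \ref{EQlearn} is also fine and yields the same conclusion with the added quantitative bound $\SC(\mathcal{C},\mathcal{H}) \le c^d$.
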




The distinction between consistency dimension and strong consistency dimension is subtle, and many previous results hold with little to no modification if one replaces consistency dimension with strong consistency dimension.  On the other hand, our work in section \ref{applications} will reveal the practical difficulties associated with with strong consistency dimension in complicated concept classes. 

We have already seen in Theorem \ref{EQCharCon} that strong consistency dimension provides a better lower bound for learning complexity. It is also known in the finite case that strong consistency dimension also gives a stronger upper bound for learning complexity:
\begin{thm}\cite[Theorem 2]{balcazar2002consistencydimension} \label{thmFiniteEQlearningBalcazar}
	Suppose $\mathcal{C}$ is finite. Then $\LC^{EQ}(\mathcal{C}, \mathcal{H}) \leq \lceil SC(\mathcal{C}, \mathcal{H}) \cdot \ln |\mathcal{C}| \rceil$. 
\end{thm}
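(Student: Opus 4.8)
The plan is to run a weighted majority / halving-style algorithm against the teacher, using strong consistency dimension to guarantee that every counterexample kills a constant fraction of the ``weight'' of the remaining concepts. Let $c = \SC(\mathcal{C},\mathcal{H})$ and $N = |\mathcal{C}|$. At each stage the learner maintains the set $\mathcal{C}'\subseteq \mathcal{C}$ of concepts consistent with all counterexamples received so far; initially $\mathcal{C}' = \mathcal{C}$. The key point is that, given the current $\mathcal{C}'$, the learner can find a hypothesis $H\in\mathcal{H}$ such that every element of $X$ is a counterexample to $H$ for only a small fraction of the concepts in $\mathcal{C}'$. Concretely, I would argue: if for \emph{every} $H\in\mathcal{H}$ there were some $x$ with $\chi_H(x)\neq \chi_A(x)$ for more than a $(1-1/c)$-fraction of $A\in\mathcal{C}'$, one can build a partially specified subset $A^\ast$ of size $\le c$ — picking the ``majority'' bits that force disagreement — that is $c$-consistent with $\mathcal{C}$ (indeed with $\mathcal{C}'$, hence with $\mathcal{C}$) but has no extension in $\mathcal{H}$, contradicting $\SC(\mathcal{C},\mathcal{H}) = c$. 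So such an $H$ exists; the learner submits it. Either it is correct, or the returned counterexample $x$ removes at least a $1/c$-fraction of $\mathcal{C}'$, i.e. $|\mathcal{C}_{\mathrm{new}}'| \le (1-1/c)|\mathcal{C}'|$.

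The counting step then finishes it: after $k$ queries without success, $|\mathcal{C}'| \le N(1-1/c)^k \le N e^{-k/c}$. Once this drops below $1$, i.e. once $k \ge c\ln N$, we must have $|\mathcal{C}'| = 0$, which is impossible since the target is always in $\mathcal{C}'$; hence some query among the first $\lceil c\ln N\rceil$ was correct. This yields $\LC^{EQ}(\mathcal{C},\mathcal{H}) \le \lceil \SC(\mathcal{C},\mathcal{H})\cdot\ln|\mathcal{C}|\rceil$. (One has to be slightly careful that ``correct query'' counts as a query; the bound $\lceil c\ln N\rceil$ absorbs the off-by-one because strictly $N(1-1/c)^k < 1$ already for $k = \lceil c\ln N\rceil$ when $N\ge 2$, and the case $N=1$ is trivial.)

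The main obstacle is the existence-of-good-hypothesis step: extracting, from the failure to have a good $H$, a single partially specified subset of size at most $c$ that is $c$-consistent with $\mathcal{C}$ yet inextensible in $\mathcal{H}$. The natural attempt takes the ``disagreement-majority'' function $H_0(x) = $ the bit disagreeing with most $A\in\mathcal{C}'$; if no $H\in\mathcal{H}$ is good then in particular $H_0\notin\mathcal{H}$ — but we need more, namely a \emph{small} restriction witnessing inconsistency-with-$\mathcal{H}$, and we need that small restriction to be $c$-consistent with $\mathcal{C}$. Here the argument is: repeatedly pick a point where the current partial function forces disagreement with a $(1/c)$-majority; a union-bound / pigeonhole shows that after fewer than $c$ such points the set of surviving concepts is empty, so the restriction has size $< c$, is therefore (vacuously or by the surviving-set bound) $c$-consistent with $\mathcal{C}$, but by construction no concept in $\mathcal{C}$ — a fortiori no hypothesis in $\mathcal{H}$ unless $\SC$ is large — extends it. Pinning down this extraction so that the resulting object has size exactly $\le c$ and genuinely witnesses $\SC(\mathcal{C},\mathcal{H}) > c$ is the delicate combinatorial heart of the proof; the rest is the routine geometric-decay bookkeeping above.
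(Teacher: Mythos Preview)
Your overall strategy---halving via a ``good'' hypothesis that guarantees a $(1-1/c)$ shrinkage, then geometric decay---is exactly the paper's, and your counting paragraph is fine. The gap is in the step you yourself flag as the obstacle. You try to argue by contradiction and extract a \emph{small} partially specified subset $A^\ast$ (size $\le c$) that is $c$-consistent yet has no extension in $\mathcal{H}$. This cannot work as written: the hypothesis ``for every $H\in\mathcal{H}$ there is some $x_H$ where $H$ disagrees with more than a $(1-1/c)$-fraction of $\mathcal{C}'$'' gives a different $x_H$ for each $H$, and there is no reason these collapse into a domain of size $\le c$. Your sketch at the end builds something of size $<c$ with no extension in $\mathcal{C}'$, but ``no extension in $\mathcal{C}'$'' does not imply ``no extension in $\mathcal{H}$'' (the \emph{a fortiori} goes the wrong way, since $\mathcal{C}\subseteq\mathcal{H}$), so no contradiction with $\SC(\mathcal{C},\mathcal{H})=c$ results.

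The fix is that you do not need a small witness at all: strong consistency dimension applies to partially specified subsets of arbitrary size. The paper simply \emph{constructs} the hypothesis directly. Define $A^\ast$ on \emph{every} $x$ where there is a strict $(c-1)/c$-supermajority in $\mathcal{C}'$, setting $A^\ast(x)$ to that supermajority value, and leave $A^\ast$ undefined elsewhere. On any $c$ points of $\dom(A^\ast)$, fewer than $c\cdot\frac{1}{c}|\mathcal{C}'|=|\mathcal{C}'|$ concepts disagree somewhere, so some concept agrees on all of them; hence $A^\ast$ is $c$-consistent, and $\SC(\mathcal{C},\mathcal{H})=c$ yields an extension $B\in\mathcal{H}$. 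Submitting $B$, any counterexample $x$ either lies in $\dom(A^\ast)$ (so at most a $1/c$-fraction survive) or not (so no supermajority either way, and at most a $(c-1)/c$-fraction survive). That is the whole argument---no contradiction, no small witness, no delicate extraction.
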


\begin{proof}
As this was originally framed in the setting where concepts were represented by strings, we give an abbreviated translation of the original proof into the language of set systems. This proof demonstrates the utility of constructing a partial hypothesis and taking some complete extension.

Let $c = \SC(\mathcal{C}, \mathcal{H})$. At stage $i$, let $\mathcal{C}_i \subseteq \mathcal{C}$ be the set of remaining possible target concepts. Let $A_i$ be the partially specified subset given by
\[
	A(x) = \begin{cases}
	1 & x \text{ belongs to more than } \frac{c - 1}{c}|\mathcal{C}_i| \text{ many } C \in \mathcal{C}_i \\
	0 &  x \text{ belongs to less than } \frac{1}{c}|\mathcal{C}_i| \text{ many } C \in \mathcal{C}_i \\
	\text{undefined} & \text{otherwise.}
	\end{cases}
\]
Observe that $A$ is $c$-consistent with $\mathcal{C}$---given any $Y := \{x_0, \ldots, x_{c-1}\} \subseteq \dom(A)$, for each $j$, less than $\frac{1}{c}|\mathcal{C}_i|$ many remaining concepts disagree with $A$ on $x_j$, so less than $c\frac{1}{c}|\mathcal{C}_i| = |\mathcal{C}_i|$ many concepts disagree with $A$ on some $x_j$. So some concept agrees with $A$ on $Y$. So $A$ is $c$-consistent.
	
So we can find some $B \in \mathcal{H}$ such that $B \sqsupseteq A$, and we submit $B$ as our hypothesis. By choice of $A$, if we receive a counterexample, we will have $|\mathcal{C}_{i+1}| \leq \frac{c-1}{c}|\mathcal{C}_i|$. Repeating this $\lceil c \cdot \ln |\mathcal{C}| \rceil$ many times is enough to identify and submit the target concept.
\end{proof}

In light of Example \ref{exam:exponential2}, one hopes that improved bounds on learning can be found in terms of strong consistency dimension and Littlestone dimension when $\mathcal{C}$ is infinite. We are unable to show this presently, but offer some evidence in this direction:

\begin{prop}\label{EQlearnSC2}
	Suppose $\Ldim(\mathcal{C}) = d < \infty$ and $\SC(\mathcal{C}, \mathcal{H}) = 2 < \infty$. Then $\LC^{EQ}(\mathcal{C}, \mathcal{H}) = d + 1$.
\end{prop}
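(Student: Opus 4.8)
The plan is to mimic the ``reduce Littlestone dimension by one at each step'' strategy of Proposition~\ref{stableEQlearningeasy}, but to replace the hypothesis $B_i$ used there (which need not lie in $\mathcal{H}$) by a genuine hypothesis from $\mathcal{H}$ obtained via the strong consistency dimension. The lower bound $\LC^{EQ}(\mathcal{C},\mathcal{H}) \geq d+1$ is already given by Proposition~\ref{stableEQlearningeasy} (and also, when $\mathcal{C}$ is nontrivial, by Theorem~\ref{EQCharCon}), so all the work is in the upper bound $\LC^{EQ}(\mathcal{C},\mathcal{H}) \leq d+1$.

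First I would set $\mathcal{C}_0 = \mathcal{C}$ and, at stage $i$, let $\mathcal{C}_i$ be the set of concepts still consistent with all counterexamples received so far; the inductive invariant I want to maintain is $\Ldim(\mathcal{C}_i) \leq d - i$. Given $\mathcal{C}_i$ with $\Ldim(\mathcal{C}_i) = d - i \geq 0$, I form the partially specified subset $A_i$ exactly as in the proof of Proposition~\ref{stableEQlearningeasy}: on each $x$ for which one of the restricted classes $\mathcal{C}_i \cap x$, $\mathcal{C}_i \setminus x$ has strictly smaller Littlestone dimension, $A_i$ selects the value $j$ with $\Ldim(\mathcal{C}_i^{(x,1-j)}) < \Ldim(\mathcal{C}_i)$; on the remaining $x$ (where both sides have full dimension $d-i$), I leave $A_i$ \emph{unspecified}. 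The key point is that this $A_i$ is $2$-consistent with $\mathcal{C}_i$, hence with $\mathcal{C}$: given any two points $y_0, y_1 \in \dom(A_i)$, the restriction $A_i|_{\{y_0,y_1\}}$ has an extension in $\mathcal{C}_i$, because if no concept in $\mathcal{C}_i$ realized the pattern $(A_i(y_0), A_i(y_1))$ then, running the shatter-tree argument, one could build a proper binary element tree of height $d-i+1$ below $\mathcal{C}_i$ using $y_0$ at the root and $y_1$ one level down, contradicting $\Ldim(\mathcal{C}_i) = d-i$. (This ``$2$-consistency from Littlestone dimension'' computation is the technical heart and the step I expect to need the most care; it is essentially the reason the hypothesis $\{x : \Ldim(\mathcal{C}_i\cap x) \geq \Ldim(\mathcal{C}_i\setminus x)\}$ works in Proposition~\ref{stableEQlearningeasy}, repackaged so that the ambiguous coordinates are simply dropped.) Since $\SC(\mathcal{C},\mathcal{H}) = 2$, the $2$-consistent partial subset $A_i$ has a total extension $B_i \in \mathcal{H}$, and I submit $B_i$ as the $(i+1)$-st equivalence query.

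Then I argue that a counterexample $x_i$ to $B_i$ forces $\Ldim(\mathcal{C}_{i+1}) < \Ldim(\mathcal{C}_i)$. If $x_i \in \dom(A_i)$, then $\mathcal{C}_{i+1} \subseteq \mathcal{C}_i^{(x_i, 1 - B_i(x_i))} = \mathcal{C}_i^{(x_i, 1 - A_i(x_i))}$, which by construction of $A_i$ has Littlestone dimension $< \Ldim(\mathcal{C}_i)$. If $x_i \notin \dom(A_i)$, then $x_i$ is one of the ``balanced'' coordinates where both $\mathcal{C}_i \cap x_i$ and $\mathcal{C}_i \setminus x_i$ have dimension $d-i$; but that is impossible, since $\Ldim(\mathcal{C}_i \cap x_i) = \Ldim(\mathcal{C}_i \setminus x_i) = \Ldim(\mathcal{C}_i)$ would give a proper binary element tree of height $d-i+1$ below $\mathcal{C}_i$ by splitting on $x_i$ at the root --- so in fact for such $x_i$ it cannot be a counterexample at all, or more cleanly: $\dom(A_i)$ already contains every $x$ that distinguishes two concepts of $\mathcal{C}_i$, because whenever $\Ldim(\mathcal{C}_i\cap x) = \Ldim(\mathcal{C}_i\setminus x) = \Ldim(\mathcal{C}_i) < \infty$ we get the same height-$(d-i+1)$ tree contradiction. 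Hence every counterexample lies in $\dom(A_i)$ and strictly drops the Littlestone dimension, establishing the invariant. After at most $d$ counterexamples we reach $\Ldim(\mathcal{C}_d) = 0$, so $\mathcal{C}_d$ is a singleton, and the $(d+1)$-st query $B_d$ can be taken to be that single concept (which lies in $\mathcal{C} \subseteq \mathcal{H}$), identifying the target. Combined with the lower bound this gives $\LC^{EQ}(\mathcal{C},\mathcal{H}) = d+1$.

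I expect the main obstacle to be the careful verification that $A_i$ is genuinely $2$-consistent and that $\dom(A_i)$ captures all ``splitting'' elements of $\mathcal{C}_i$ --- both reduce to the same lemma-style fact that $\Ldim(\mathcal{D}) < \infty$ together with $\Ldim(\mathcal{D}\cap x) = \Ldim(\mathcal{D}\setminus x) = \Ldim(\mathcal{D})$ is contradictory, and that for any two points one of the four sign patterns is omitted only when a corresponding height-increase occurs. Everything else (the transfinite nothing here, since we only extend a $2$-consistent partial function using $\SC = 2$ via Proposition~\ref{prop:C-SC12equal}'s construction, and the bookkeeping of $\mathcal{C}_i$) is routine.
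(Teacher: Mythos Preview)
Your overall strategy matches the paper's exactly: form a partially specified $A_i$ tracking which side of each split has full Littlestone dimension, use $\SC = 2$ to extend it to some $B_i \in \mathcal{H}$, and argue every counterexample drops the dimension. However, your trichotomy of cases at each $x$ is incomplete. You split into ``at least one side drops'' (where you define $A_i$) versus ``both sides have full dimension $d-i$'' (where you leave $A_i$ undefined, correctly noting this case is impossible). You have overlooked the third possibility: \emph{both} sides drop, i.e.\ $\Ldim(\mathcal{C}_i^{(x,0)}) < d-i$ and $\Ldim(\mathcal{C}_i^{(x,1)}) < d-i$ simultaneously. This genuinely occurs (take $\mathcal{C}_i = \{\emptyset, \{a\}, \{b,x\}, \{a,b,x\}\}$ on $X = \{a,b,x\}$: it has Littlestone dimension $2$, yet both restrictions at $x$ have dimension $1$). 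On such $x$ your rule ``pick $j$ with $\Ldim(\mathcal{C}_i^{(x,1-j)}) < \Ldim(\mathcal{C}_i)$'' is satisfied by both $j = 0$ and $j = 1$, so you are making an arbitrary choice --- and a bad choice can break $2$-consistency (in the example, choosing $A_i(b) = 0$, $A_i(x) = 1$ gives a size-$2$ pattern no concept realizes).

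The paper's fix is simply to leave $A_i$ \emph{undefined} at every $x$ where both sides drop; equivalently, set $A_i(x) = j$ exactly when $\Ldim(\mathcal{C}_i^{(x,j)}) = \Ldim(\mathcal{C}_i)$. This makes your claim that ``$\dom(A_i)$ contains every $x$ distinguishing two concepts'' false, but harmlessly so: a counterexample $x_i \notin \dom(A_i)$ is precisely a both-drop point, so $\mathcal{C}_{i+1}$ still has strictly smaller dimension. The $2$-consistency argument then proceeds not by ``building a tree with $y_0$ at the root and $y_1$ below'' (which does not work as stated, since only one side at $y_0$ is guaranteed full) but by a containment: if no concept realizes $(A_i(y_0), A_i(y_1))$ then $\mathcal{C}_i^{(y_0,\, A_i(y_0))} \subseteq \mathcal{C}_i^{(y_1,\, 1 - A_i(y_1))}$, forcing the latter to have full dimension; since $\mathcal{C}_i^{(y_1,\, A_i(y_1))}$ also has full dimension by definition of $A_i$, \emph{now} you get the forbidden height-$(d-i+1)$ tree with $y_1$ at the root.
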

\begin{proof} We know by Proposition \ref{stableEQlearningeasy} that $d+1$ is a lower bound. We show that it is also an upper bound.
	
Let $V_0 = \mathcal{C}$. Inductively define $V_i$, $i = 1, \ldots, d$ as follows. Given $V_i$, for any $x \in X$ and $j \in \{0,1\}$, let 
\[
	V_i^{(x, j)} := \{ B \in V_i \, | \, \chi_B(x) = j \},
\]
where $\chi_B$ is the characteristic function on $B$. Construct the partially specified subset $A_i$ where
\begin{equation}
	A_i(x) = \begin{cases}
	0 & \Ldim(V_i^{(x,0)}) = \Ldim(V_i) \\
	1 & \Ldim(V_i^{(x,1)}) = \Ldim(V_i) \\
	\text{undefined} & \text{otherwise.}
	\end{cases} \label{eqn:EQ2}
\end{equation}
We claim that $A_i$ has an extension in $H$. By our assumption that $\SC(\mathcal{C}, \mathcal{H}) = 2$, it suffices to check that $A$ is 2-consistent with $V_i$. Suppose for contradiction that there are $a_0, a_1 \in \dom(A_i)$ such that, without loss of generality, $A_i(a_0) = A_i(a_1) = 0$, but there is no extension of $A_i|_{\{a_0, a_1\}}$ in $V_i$. Then observe that $V_i^{(x_0,0)} \subseteq V_i^{(x_1, 1)}$, whence 
\[
	\Ldim(V_i) \geq \Ldim(V_i^{(x_1, 1)}) \geq \Ldim(V_i^{(x_0, 0)}) = \Ldim(V_i),
\]
so $\Ldim(V_i^{(x_1, 1)}) = \Ldim(V_i)$. But we also have $\Ldim(V_i^{(x_1, 0)}) = \Ldim(V_i)$, a contradiction, as we could then construct a binary element tree with proper labels from $V_i$ of height $\Ldim(V_i) + 1$ with $x_1$ at the root.
	
Let $B_i \in \mathcal{H}$ be a total extension of $A_i$. Submit $B_i$ as the hypothesis. If $B_i$ is correct, we are done. Otherwise, we receive a counterexample $x_i$. Set
\[
	V_{i+1} := \{ B \in V_i \, | \, \chi_B(x_i) \neq \chi_{B_i}(x_i) \}.
\]
Observe that at each stage, $\Ldim(V_{i+1}) < \Ldim(V_i)$. Therefore, if we make $d$ queries without correctly identifying the target, then we must have $\Ldim(V_d) = 0$. Then $V_d$ is a singleton, which must be the target concept.
 
\end{proof}

The proof of Proposition \ref{EQlearnSC2} uses strong consistency in a key way, as the hypothesis is generated by extending a certain partially specified subset. Nevertheless, the conclusion holds under the assumption that $C(\mathcal{C}, \mathcal{H}) = 2$, due to Proposition \ref{prop:C-SC12equal}.

\subsection{Adding membership queries and efficient learning of finite classes} \label{EQMQMQ}

Consistency dimension was originally derived from the notion of polynomial certificates, which was used to characterize learning with equivalence and membership queries in the finite case by \cite{hellerstein1996queries}. 
The following is an improvement of the upper bound on EQ+MQ learning complexity of $\lceil \C(\mathcal{C}, \mathcal{H}) \log_2 |\mathcal{C}| \rceil$ implicit in the proof of Theorem 3.1.1 in \cite{hellerstein1996queries} (stated explicitly in \cite{balcazar2002consistencydimension}). Our bound replaces $\log_2|\mathcal{C}|$ with $\Ldim(\mathcal{C})$.

\begin{thm} \label{EQMQalg} 
	Suppose $\Ldim(\mathcal{C}) = d < \infty$ and $\C(\mathcal{C}, \mathcal{H}) = c < \infty$. Then $\LC^{EQ + MQ}(\mathcal{C}, \mathcal{H}) \leq c'd + 1$, where $c' = \max \{1, c-1\}$.
\end{thm}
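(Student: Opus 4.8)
The plan is to adapt the learning strategy from the proof of Theorem \ref{EQlearn}, but to replace the costly ``branch and take a union'' step (which is what produces the exponential blowup $c^d$) with a phase of membership queries. Recall that in Theorem \ref{EQlearn}, when we considered the set $B$ with $x \in B$ iff $\Ldim(\mathcal{C} \cap x) = d+1$, and found $B \notin \mathcal{H}$, we were forced to split $\mathcal{C}$ into $c$ pieces $\mathcal{C}^{(x_i,1-B(x_i))}$ and recurse into each, costing $c \cdot c^d$. With membership queries available, the point is that we do not need to guess which piece the target $A$ lies in: we can simply \emph{ask}. So the main loop maintains a version space $V \subseteq \mathcal{C}$ of concepts still consistent with all information received, together with a partially specified subset recording the answers to membership queries asked so far, and an invariant that $\Ldim(V)$ strictly decreases each time we spend an equivalence query that receives a counterexample.

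First I would set up the inductive/iterative framework: at the start of a ``round'' we have a version space $V$ with $\Ldim(V) = e \le d$. As in Theorem \ref{EQlearn}, form $B_V$ defined by $x \in B_V$ iff $\Ldim(V \cap x) = \Ldim(V)$ (equivalently $=e$); by the Littlestone-tree argument, for each $x$ at least one of $\Ldim(V\cap x)$, $\Ldim(V\setminus x)$ is $< e$, so a counterexample to $B_V$ automatically drops the Littlestone dimension. If $B_V \in \mathcal{H}$, submit it as an equivalence query: either we are done, or we get a counterexample and recurse on a version space of Littlestone dimension $\le e-1$, having spent one equivalence query. If $B_V \notin \mathcal{H}$, then since $\C(\mathcal{C},\mathcal{H}) = c$, there are elements $x_0,\dots,x_{c-1}$ such that no concept in $\mathcal{C}$ (hence none in $V$) extends $B_V|_{\{x_0,\dots,x_{c-1}\}}$. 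Now ask membership queries on $x_0,\dots,x_{c-1}$; the target $A$ answers $A(x_j)$, and since $A \in V$ cannot agree with $B_V$ on all of $x_0,\dots,x_{c-1}$, at least one answer differs from $B_V$, i.e. there is some $j$ with $A(x_j) = 1 - B_V(x_j)$. Restricting $V$ to concepts consistent with these membership answers gives a new version space contained in $V^{(x_j, 1-B_V(x_j))}$, which by the defining property of $B_V$ has Littlestone dimension $\le e-1$. Crucially this costs only membership queries — at most $c-1$ of them suffice, since once we have seen $c-1$ answers matching $B_V$ the $c$-th must differ, so we can stop early; this is where the $c' = \max\{1,c-1\}$ comes from (and the $c=1$ case, where $B_V$ is always in $\mathcal{H}$, costs no membership queries, hence the $\max$ with $1$).

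Assembling the bookkeeping: each of the $d$ times the Littlestone dimension drops, we spend at most one equivalence query and at most $c-1$ membership queries to engineer the drop; after $d$ drops $\Ldim(V) = 0$, so $V$ is a singleton and one final equivalence query identifies $A$. Counting only equivalence queries (as the definition of $\LC^{EQ+MQ}$ requires), we use at most $d+1$ of them — but one must be careful that the statement's bound $c'd+1$ is on the total, or re-examine which queries the theorem is counting; in the version here the natural accounting gives total queries at most $(c-1)d + d + 1$... so I would instead count: at most $c' d$ queries to drive $\Ldim$ from $d$ to $0$ (where in the $c\ge 2$ case the equivalence query that triggers drop $k$ can be amortized or simply the $c-1$ membership queries plus $1$ equivalence query is $\le c'$ only when $c-1 \ge 1$, giving $c$; so more honestly the count is: per drop, $1$ EQ $+$ $(c-1)$ MQ, but the final identification is $1$ EQ with no accompanying drop). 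The clean statement is: at most $d$ equivalence-query-with-counterexample steps, each accompanied by at most $c-1$ membership queries, plus one terminal equivalence query, for a total of at most $d + d(c-1) + 1 = (c-1)d + d + 1$; when $c \ge 2$ this is $\le cd+1$, and noting $c' = c-1$ would instead require amortizing the triggering EQ into the MQ budget — I would present the argument so that the triggering equivalence query in the $B_V \in \mathcal{H}$ case is the \emph{only} equivalence query per drop and there are no membership queries that round, while in the $B_V \notin \mathcal{H}$ case there is \emph{no} equivalence query, only $c-1$ membership queries, so each drop costs at most $c' = \max\{1, c-1\}$ queries, for a total of $c'd + 1$.

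The main obstacle I expect is precisely this accounting subtlety: making sure that in the branch $B_V \notin \mathcal{H}$ we genuinely spend \emph{no} equivalence query (only membership queries) so that each of the $d$ dimension-drops costs at most $c' = \max\{1,c-1\}$ queries rather than $c$, and handling the degenerate $c=1$ case (where $B_V$ is always a legal hypothesis, so the $\max$ with $1$ is what saves us) uniformly. Everything else — the Littlestone-tree argument that a counterexample to $B_V$ drops the dimension, and the consistency-dimension argument that $c-1$ membership answers pin down a dimension-dropping branch — is exactly as in Theorems \ref{stableEQlearningeasy} and \ref{EQlearn}.
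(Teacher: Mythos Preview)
Your proposal is correct and follows essentially the same approach as the paper's proof: reduce the Littlestone dimension by one per round, spending either a single equivalence query on the ``majority'' set $B_V$ (when $B_V \in \mathcal{H}$) or at most $c-1$ membership queries on a witnessing $c$-tuple (when $B_V \notin \mathcal{H}$), so that each drop costs at most $c' = \max\{1,c-1\}$ queries. The only difference is that the paper inserts a preliminary case---when some $x$ has both $\Ldim(V\cap x)$ and $\Ldim(V\setminus x)$ strictly below the current dimension, ask a single membership query on that $x$---which you absorb harmlessly into the general step (your verification that any counterexample or membership-query disagreement with $B_V$ drops the dimension does not need this case split); your mid-proposal confusion about the query accounting is resolved correctly in your final paragraph.
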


\begin{proof} 
\footnote{The algorithm is similar to that of Theorem \ref{EQlearn}. However, the applications of Lemma \ref{unionlemma} are replaced with membership queries.}
We proceed by induction on $d$. The base case, $d = 0$, is trivial, as then $\mathcal{C}$ is a singleton.
Suppose there is some element $x$ such that $\Ldim(\mathcal{C} \cap x) < d + 1$ and $\Ldim(\mathcal{C} \setminus x) < d + 1$, where $\mathcal{C} \cap x := \{ A \in \mathcal{C} \, | \, x \in A \}$ and $\mathcal{C} \setminus x := \{ A \in \mathcal{C} \, | \, x \notin A \}$. Then by induction, any concept in $\mathcal{C} \cap x$ can be learned in at most $c'd + 1$ queries with guesses from $\mathcal{H}$, and the same is true for $\mathcal{C} \setminus x$. Submit $x$ as a membership query. This tells us whether the target concept lies in $\mathcal{C} \cap x$ or $\mathcal{C} \setminus x$, and then we require at most $c'd + 1$ many queries, for a total of $c'd + 2 \leq c'(d+1) + 1$ many queries.
	
If no such $x$ exists, then for all $x$, either $\Ldim(\mathcal{C} \cap x) = d + 1$ or $\Ldim(\mathcal{C} \setminus x) = d + 1$. Let $B$ be such that $x \in B$ iff $\Ldim(\mathcal{C} \cap x) = d+1$.
	
If $B \in \mathcal{H}$, then we submit $B$ as our query. If we are incorrect, then by choice of $B$, the class $\mathcal{C}'$ of concepts consistent with the counterexample $x_0$ will have Littlestone dimension $\leq d$. By induction, any concept in $\mathcal{C}'$ can be learned in at most $c'd + 1$ many queries, and so we learn the target in at most $c'd + 2 \leq c'(d+1) + 1$ queries.
	
If $B \notin H$, then, since $\C(\mathcal{C}, \mathcal{H}) = c$, there are some $x_0, \ldots, x_{c-1}$ such that there is no $A \in \mathcal{C}$ such that $B|_{\{x_0, \ldots, x_{c-1}\}} \sqsubseteq A$. (Observe that this cannot happen when $c = 1$. In fact, Proposition \ref{prop:C-SC12equal} and the proof of Proposition \ref{EQlearnSC2} imply that this cannot even happen when $c = 2$. In particular, $c' = c-1$.) Then, with notation as in the proof of Proposition \ref{stableEQlearningeasy},
\[
	\mathcal{C} = (\mathcal{C}^{(x_0, 1-B(x_0))}) \cup \ldots \cup (\mathcal{C}^{(x_{c-1}, 1-B(x_{c-1}))}),
\]
and $\Ldim(\mathcal{C}^{(x_i, 1-B(x_i))}) \leq d$ for each $i$. By induction, any concept in each $\mathcal{C}^{(x_i, 1-B(x_i))}$ can be learned in at most $c'd + 1$ many queries. By submitting $x_0, \ldots, x_{c-2}$ as membership queries, we can determine some $i$ such that the target belongs to $\mathcal{C}^{(x_i, 1-B(x_i))}$ (if the result of each membership query on $x_j$ is $B(x_j)$, then we know that $i = c-1$). We therefore learn in at most $c'd + 1 + (c-1) = c'(d+1) + 1$ many queries.
\end{proof} 

We have a lower bound on learning complexity in terms of consistency dimension in this setting analogous to Proposition \ref{prop:Clowerbound}: 

\begin{prop} \label{goodlower} 
	Suppose there is some (total) subset $A$ which is $n$-consistent with $\mathcal{C}$ but which does not have a total extension in $\mathcal{H}$. Then $n < \LC^{EQ+ MQ}(\mathcal{C}, \mathcal{H})$. In particular, $\C(\mathcal{C}, \mathcal{H}) \leq \LC^{EQ + MQ}(\mathcal{C}, \mathcal{H})$.
\end{prop}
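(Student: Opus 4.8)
The plan is to adapt the adversary argument of Proposition \ref{prop:Clowerbound} to the EQ+MQ setting, the key point being that membership queries simply cannot help the learner when the teacher has a partially specified subset $A$ to hide behind. Fix such an $A$ which is $n$-consistent with $\mathcal{C}$ but has no total extension in $\mathcal{H}$. The teacher's strategy is the following: maintain the invariant that every query made so far is ``compatible with $A$'' in the sense that no answer has ruled out the possibility that the target concept agrees with $A$ on $\dom(A)$. For a membership query $x \in X$: if $x \in \dom(A)$, answer $A(x)$; if $x \notin \dom(A)$, answer arbitrarily (say $0$). For an equivalence query $H \in \mathcal{H}$: since $A$ has no total extension in $\mathcal{H}$, in particular $H$ is not an extension of $A$, so there is some $x \in \dom(A)$ with $H(x) \neq A(x)$; return that $x$ as the counterexample.

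Next I would argue that this strategy is legal for the first $n$ equivalence queries. The constraints the teacher has committed to after $i \le n$ equivalence queries and any number of membership queries amount to: (a) finitely many membership-query constraints on points of $\dom(A)$, each of the form ``target agrees with $A$ there,'' plus finitely many membership-query constraints on points outside $\dom(A)$; and (b) $i$ counterexample constraints, each again of the form ``target agrees with $A$ at some specific point of $\dom(A)$.'' Collecting the points of $\dom(A)$ involved in (a) and (b) gives a subset $Y \subseteq \dom(A)$ with $|Y| \le n$ (the membership-query points outside $\dom(A)$ impose no constraint since we answered them in a way consistent with \emph{nothing}, i.e. they do not need to be satisfied by anything in $\mathcal{C}$ — wait, they do: the target is in $\mathcal{C}$). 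Let me restructure: the membership-query answers on points outside $\dom(A)$ must also be honoured by the target. So the teacher should instead, on a membership query $x \notin \dom(A)$, be more careful — but in fact the cleanest fix is to observe that the teacher need not commit to a consistent-with-$\mathcal{C}$ target at all until forced; however, honesty of membership answers with respect to \emph{some} concept in $\mathcal{C}$ is required. The right move: since $A$ is $n$-consistent, after the teacher has been asked membership queries $x_1,\dots,x_k$ and given $i \le n$ equivalence queries, the teacher needs only guarantee that the restriction of $A$ to the (at most $n$) relevant points of $\dom(A)$ extends to some $C \in \mathcal{C}$; then the teacher answers membership queries on points outside $\dom(A)$ by \emph{committing to such a $C$ and reading off $C(x)$}. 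This keeps all answers consistent with $C \in \mathcal{C}$, and no equivalence query has been answered ``yes,'' so after $n$ equivalence queries the learner has not identified the target. Hence $\LC^{EQ+MQ}(\mathcal{C},\mathcal{H}) \ge n+1$, i.e. $n < \LC^{EQ+MQ}(\mathcal{C},\mathcal{H})$.

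For the ``in particular'' clause: if $\C(\mathcal{C},\mathcal{H}) = c$ (allowing $c = \infty$, interpreted appropriately), then by definition of consistency dimension there is a total subset $A$ which is $(c-1)$-consistent with $\mathcal{C}$ but $A \notin \mathcal{H}$, so $A$ has no total extension in $\mathcal{H}$ (being total, its only extension is itself). Applying the first part with $n = c-1$ gives $c-1 < \LC^{EQ+MQ}(\mathcal{C},\mathcal{H})$, hence $\C(\mathcal{C},\mathcal{H}) = c \le \LC^{EQ+MQ}(\mathcal{C},\mathcal{H})$; and if $\C(\mathcal{C},\mathcal{H}) = \infty$ then $A_n$ with ever-larger $n$ shows the learning complexity is infinite.

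The main obstacle, and the only subtle point, is making the teacher's membership-query answers simultaneously (i) consistent with a genuine concept $C \in \mathcal{C}$ and (ii) never inadvertently pinning the target down — this is exactly where $n$-consistency of $A$ is used: as long as at most $n$ points of $\dom(A)$ have been ``touched'' (by counterexamples or by membership queries landing in $\dom(A)$), $n$-consistency supplies a $C \in \mathcal{C}$ extending $A$ on those points, and the teacher answers all outstanding and future membership queries off this $C$ until a new point of $\dom(A)$ is touched, at which stage it re-selects $C$. Since only the at-most-$n$ counterexamples and the membership queries into $\dom(A)$ consume the ``budget,'' and membership queries into $\dom(A)$ are answered by $A$ itself (hence impose constraints $A$ already satisfies via $n$-consistency), the teacher survives all $n$ equivalence queries.
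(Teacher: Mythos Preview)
Your argument is essentially the paper's, but you have made it harder on yourself by overlooking that the proposition explicitly takes $A$ to be a \emph{total} subset (this is the difference between Proposition~\ref{goodlower} and Proposition~\ref{prop:Clowerbound}). With $\dom(A)=X$ there are no membership queries ``outside $\dom(A)$,'' and all of your worry about committing to and later re-selecting a concept $C$ evaporates: the teacher simply answers every membership query $x$ with $A(x)$ and every equivalence query $H$ with some $x$ satisfying $H(x)\neq A(x)$ (such $x$ exists since $A\notin\mathcal{H}$). After any $n$ queries of either kind, the learner has been told $A$'s value at $n$ points; $n$-consistency produces a $B\in\mathcal{C}$ agreeing with all answers, and the learner has never received ``yes,'' so $\LC^{EQ+MQ}>n$. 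This is exactly the paper's proof, and your ``in particular'' clause matches it as well.

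Two further remarks. First, your re-selection scheme for the partial case would actually be unsound: once the teacher has answered a membership query at some $x\notin\dom(A)$ by reading off $C(x)$, a later re-selected $C'$ must agree with that answer, and $n$-consistency of $A$ alone does not guarantee the existence of such a $C'$. Fortunately this issue does not arise here. Second, note that $\LC^{EQ+MQ}$ counts \emph{all} queries, so your conclusion ``the teacher survives all $n$ equivalence queries'' undersells what the argument actually gives: in the total case every query, membership or equivalence, uses one unit of the $n$-consistency budget, and the teacher survives $n$ queries of either kind.
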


\begin{proof} 
We first show that $n < \LC^{EQ+ MQ}(\mathcal{C}, \mathcal{H})$. If the learner submits $x$ as a membership query, the teacher returns $A(x)$ if possible, that is, if there is a concept $B \in \mathcal{C}$ which agrees with the previous data and satisfies $B(x) = A(x)$.
	
By hypothesis, given any equivalence query $H$, the teacher can find some $x \in \dom(A)$ such that $H(x) \neq A(x)$, and the teacher returns a counterexample of this form if possible, that is, if there is a concept $B \in \mathcal{C}$ which agrees with the previous data and satisfies $B(x) = A(x)$.
	
Moreover, since $A$ is $n$-consistent with $\mathcal{C}$, the teacher is able to return data of this form for the first $n$ queries. Thus $\mathcal{C}$ cannot be learned with fewer than $n + 1$ equivalence queries from $\mathcal{H}$. 

From this, it follows that $\C(\mathcal{C}, \mathcal{H}) \leq \LC^{EQ + MQ}(\mathcal{C}, \mathcal{H})$. 
\end{proof} 

Finally, putting together the various upper and lower bounds from this section we give a characterization of those problems efficiently learnable by equivalence and membership queries: 

\begin{thm} \label{polylowereqmq} Let $(\mc C_n, \mc H_n)_{n \in \mathbb{N}}$ be a family of concept classes and hypothesis classes, respectively. Let $c_n = \C( \mc C_n , \mc H_n ).$ Let $d_n = \Ldim (\mc C_n ).$ The following are equivalent:
\begin{description}
	\item[(i)] $\LC^{EQ+MQ}(\mathcal{C}_n, \mathcal{H}_n)$ is bounded by a polynomial in $n$.
	\item[(ii)] $c_n$ and $d_n$ are bounded by a polynomial in $n$.
	\item[(iii)] The algorithm from Theorem \ref{EQMQalg} learns $\mc C_n$ in at most polynomially in $n$ many membership queries and equivalence queries in $\mc H_n$.
\end{description}
\end{thm}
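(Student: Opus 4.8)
The plan is to prove the cycle (ii) $\Rightarrow$ (iii) $\Rightarrow$ (i) $\Rightarrow$ (ii); throughout, the count of queries is the total number of membership and equivalence queries. The first two implications are short. For (ii) $\Rightarrow$ (iii): by Theorem \ref{EQMQalg} the algorithm described there learns any target in $\mc C_n$ using at most $c'_n d_n + 1$ queries, where $c'_n = \max\{1, c_n - 1\} \le c_n$; if $c_n \le p(n)$ and $d_n \le q(n)$ for polynomials $p, q$, then $c'_n d_n + 1 \le p(n)q(n) + 1$, a polynomial, so (iii) holds. For (iii) $\Rightarrow$ (i): $\LC^{EQ+MQ}(\mc C_n, \mc H_n)$ is by definition an infimum over all learning algorithms of the worst-case number of queries, and (iii) exhibits one particular algorithm meeting a polynomial bound, so $\LC^{EQ+MQ}(\mc C_n, \mc H_n)$ is bounded by that same polynomial.

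The substance is (i) $\Rightarrow$ (ii). Suppose $\LC^{EQ+MQ}(\mc C_n, \mc H_n) \le r(n)$ for a polynomial $r$. The bound $c_n = \C(\mc C_n, \mc H_n) \le \LC^{EQ+MQ}(\mc C_n, \mc H_n)$ is exactly the second assertion of Proposition \ref{goodlower}, so $c_n \le r(n)$. It remains to bound $d_n = \Ldim(\mc C_n)$, and for this I would isolate the following lemma: for any set systems $\mc C, \mc H$, one has $\Ldim(\mc C) \le \LC^{EQ+MQ}(\mc C, \mc H)$. Granting the lemma, $d_n \le r(n)$, and (ii) follows.

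To prove the lemma I would run an adversary argument generalizing the non-learnability half of Proposition \ref{stableEQlearningeasy} so as to handle membership queries. The teacher maintains the set $\mc C' \subseteq \mc C$ of concepts consistent with all answers given so far, initially $\mc C' = \mc C$, and plays so as to guarantee that each query, of either type, decreases $\Ldim(\mc C')$ by at most one. On an equivalence query $H \in \mc H$, the teacher fixes an optimal binary element tree for $\mc C'$, lets $x^*$ be its root label, and returns $x^*$ as a counterexample with the declared value $1 - H(x^*)$; both child subtrees witness Littlestone dimension $\Ldim(\mc C') - 1$, so the updated $\mc C'$ still has dimension at least $\Ldim(\mc C') - 1$, and the answer is legitimate since that subtree is nonempty. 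On a membership query on an element $x$, the teacher answers the bit $j \in \{0,1\}$ for which $\Ldim((\mc C')^{(x,j)})$ is larger and replaces $\mc C'$ by $(\mc C')^{(x,j)}$. The fact needed here is that Littlestone dimension drops by at most one under a one-point restriction, i.e.\ $\Ldim(\mc C') \le \max\{\Ldim((\mc C')^{(x,0)}), \Ldim((\mc C')^{(x,1)})\} + 1$ for every $x$; this is a standard structural property of Littlestone dimension, provable by induction on $\Ldim(\mc C')$ by analyzing how a binary element tree of $\mc C'$ meets the split at $x$. Combining the two cases: after fewer than $\Ldim(\mc C)$ queries the teacher still has $\Ldim(\mc C') \ge 1$, hence $|\mc C'| \ge 2$, so the learner cannot yet have submitted the target as an equivalence query; thus any successful algorithm uses at least $\Ldim(\mc C)$ queries, proving the lemma.

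The main obstacle is the membership-query case of this argument, which rests on the ``drops by at most one under a one-point restriction'' property of Littlestone dimension: in contrast to the equivalence-query case (which is essentially the tree-walk already used in the proof of Proposition \ref{stableEQlearningeasy}), here the learner, not the teacher, selects the queried point, so the teacher must rely on this worst-case-independent structural fact rather than on following a fixed tree. Everything else is bookkeeping on top of Theorem \ref{EQMQalg} and Proposition \ref{goodlower}.
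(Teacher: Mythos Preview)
Your treatment of (ii) $\Rightarrow$ (iii) $\Rightarrow$ (i) and of the bound $c_n \le \LC^{EQ+MQ}(\mc C_n,\mc H_n)$ via Proposition \ref{goodlower} is fine and matches the paper. The gap is in your lemma $\Ldim(\mc C) \le \LC^{EQ+MQ}(\mc C,\mc H)$: both the lemma and the structural fact you use to prove it are false.

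The claim that Littlestone dimension drops by at most one under a one-point restriction fails already on four points. Take $X=\{x,a,b,c\}$ and let $\mc C$ consist of the eight sets $\emptyset,\{a\},\{b\},\{c\},\{x,a,b\},\{x,a,c\},\{x,b,c\},\{x,a,b,c\}$ (equivalently: a set $S\subseteq\{a,b,c\}$ together with $x$ iff $|S|\ge 2$). The tree with nodes $a,b,c$ is shattered, so $\Ldim(\mc C)=3$. But $\mc C^{(x,0)}=\{\emptyset,\{a\},\{b\},\{c\}\}$ and $\mc C^{(x,1)}=\{\{x,a,b\},\{x,a,c\},\{x,b,c\},\{x,a,b,c\}\}$ each have Littlestone dimension $1$, so a single membership query on $x$ forces the adversary's Littlestone dimension to drop by $2$. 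Consequently your adversary cannot guarantee a drop of at most one on membership queries.

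Worse, the lemma itself is false, not just this proof of it. Take the product $\mc C^2$ on two disjoint copies of $X$; then $\Ldim(\mc C^2)=6$, but the learner can first issue membership queries on $x_1,x_2$ (reducing to a product of two Ldim-$1$ classes, hence Ldim $2$) and then finish with three equivalence queries from $\mc P(X)$ by Proposition \ref{stableEQlearningeasy}, for a total of $5<6$ queries. So $\LC^{EQ+MQ}$ can be strictly smaller than $\Ldim$, and some constant factor loss is unavoidable. This is exactly why the paper does not argue directly but instead invokes the Auer--Maass inequality $\LC^{EQ+MQ}(\mc C,\mc P(X)) \ge \log(4/3)\cdot \LC^{EQ}(\mc C,\mc P(X))$ to obtain $\LC^{EQ+MQ}(\mc C_n,\mc H_n) \ge \log(4/3)\cdot d_n$, which is enough for polynomial equivalence.
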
 

\begin{proof}
(ii) $\Rightarrow$ (iii) follows immediately from Theorem \ref{EQMQalg}, and (iii) $\Rightarrow$ (i) follows by definition of learning complexity.


(i) $\Rightarrow$ (ii): In Proposition \ref{goodlower}, we showed that $\LC^{EQ+MQ} ( \mc C  , \mc H ) \geq \C(\mc C, \mc H),$ so it follows that if $c_n$ is not polynomially bounded then neither is $\LC^{EQ+MQ} ( \mc C _n , \mc H _n)$. 

Now suppose that $d_n$ is not polynomially bounded. By \cite[Theorem 2.1]{auer1994simulating} \footnote{The inequality of \cite{auer1994simulating} gives a lower bound for $\LC^{EQ+MQ}$ which improved on the lower bound of $\frac{\LC ^{EQ} (\mc C , \mc P(X))}{\log (1+\LC ^{EQ} (\mc C , \mc P(X)))}$ from  \cite[Theorem 3]{maass1990complexity}. In fact, Theorem 3 of \cite{maass1990complexity} actually suffices for our purposes.} we have

$$\LC^{EQ+MQ}( \mc C , \mc H ) \geq \LC^{EQ+MQ}( \mc C , \mc P(X)) \geq \log \left(\frac{4}{3} \right) \cdot \LC^{EQ} (\mc C , \mc P(X)).$$

By \cite[Theorems 5 and 6]{littlestone1988learning}, we can replace $\LC ^{EQ} (\mc C , \mc P(X))$ with $\Ldim(\mc C).$ Thus: 
\[
	\LC^{EQ+MQ}( \mc C_n , \mc H_n ) \geq \log \left(\frac{4}{3} \right) \cdot d_n,
\] 
from which it follows that $\LC^{EQ+MQ}( \mc C_n , \mc H_n )$ is not polynomially bounded. 
\end{proof}


Finally, the upper and lower bounds of this section also yield a characterization of which infinite classes are learnable in finitely many equivalence and membership queries. 
\begin{cor}
	$\LC^{EQ+MQ}(\mathcal{C}, \mathcal{H}) < \infty$ iff $\Ldim(\mathcal{C}) < \infty$ and $\C(\mathcal{C}, \mathcal{H}) < \infty$.
\end{cor}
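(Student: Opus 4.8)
The plan is to obtain the corollary by assembling three facts already established: the upper bound of Theorem \ref{EQMQalg}, the consistency-dimension lower bound of Proposition \ref{goodlower}, and the Littlestone-dimension lower bound invoked in the proof of Theorem \ref{polylowereqmq}.

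First I would handle the reverse implication, which is immediate. Assume $\Ldim(\mathcal{C}) = d < \infty$ and $\C(\mathcal{C}, \mathcal{H}) = c < \infty$. Then Theorem \ref{EQMQalg} gives $\LC^{EQ+MQ}(\mathcal{C}, \mathcal{H}) \leq c'd + 1 < \infty$, where $c' = \max\{1, c-1\}$, which is exactly the conclusion. (Note this direction needs nothing beyond Theorem \ref{EQMQalg} itself.)

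For the forward implication, suppose $\LC^{EQ+MQ}(\mathcal{C}, \mathcal{H}) = N < \infty$. Proposition \ref{goodlower} immediately yields $\C(\mathcal{C}, \mathcal{H}) \leq N < \infty$. For the Littlestone dimension, I would repeat the chain of inequalities from the proof of Theorem \ref{polylowereqmq}: enlarging the hypothesis class only makes learning easier, so $N \geq \LC^{EQ+MQ}(\mathcal{C}, \mathcal{P}(X))$, and then \cite[Theorem 2.1]{auer1994simulating} (or already \cite[Theorem 3]{maass1990complexity}) together with \cite[Theorems 5 and 6]{littlestone1988learning} gives $\LC^{EQ+MQ}(\mathcal{C}, \mathcal{P}(X)) \geq \log(4/3)\cdot \LC^{EQ}(\mathcal{C}, \mathcal{P}(X)) = \log(4/3)\cdot(\Ldim(\mathcal{C})+1)$. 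Hence $\Ldim(\mathcal{C}) < \infty$.

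There is essentially no obstacle: the statement is a packaging of earlier results, and the two implications are genuinely short. The only point that deserves mention is that the forward direction cannot control $\Ldim(\mathcal{C})$ purely from material developed in this section and must borrow the external EQ-versus-EQ+MQ comparison of Auer--Long (equivalently Maass--Tur\'an) --- but this is precisely the same ingredient already used in Theorem \ref{polylowereqmq}, so nothing new is required.
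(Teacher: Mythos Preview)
Your proposal is correct and is exactly the argument the paper has in mind: the corollary is stated immediately after Theorem \ref{polylowereqmq} with the remark that it follows from ``the upper and lower bounds of this section,'' i.e.\ Theorem \ref{EQMQalg} for the reverse direction and Proposition \ref{goodlower} together with the Auer--Long/Maass--Tur\'an inequality (as in the proof of Theorem \ref{polylowereqmq}) for the forward direction. Nothing further is needed.
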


\subsection{The negation of the finite cover property} \label{NFCP}

One can compare set systems with finite strong consistency dimension, to the model-theoretic classes of formulas and theories without the \emph{finite cover property}, which we define below. Informally, the negation of the finite cover property allows for a specific quantitative bound for applications of compactness. 

\begin{defn} 
	Fix a first order theory $T$. A formula $\phi(x;y)$ in the language of $T$ \emph{does not have the finite cover property (ncfp)} if there is $n = n(\phi)$ such that for all $\mathcal{M} \models T$, and every $p \subseteq \{\phi(x;a), \neg \phi(x;a) \, | \, a \in M\}$, the following holds: if every $q \subseteq p$ of size $n$ is consistent, then $p$ is consistent. We let $n(\phi)$ denote the minimal such $n$.
	
	$T$ does not have the finite cover property if all formulas $\phi(x;y)$ do not have the finite cover property.\footnote{The definition of nfcp on the formula level given here is stronger than original formulation in \cite{shelah1990classification}, but it gives an equivalent characterization on the level of theories.} 
\end{defn} 

Consider the setting where $\mathcal{C}$ is generated by a formula $\phi(x; y)$, that is, $\mathcal{M} \models T$ and $$\mathcal{C} = \mathcal{C}_\phi = \{ \phi(M;b) \, | \, b \in M \} .$$ That is, $\mathcal{C}_\phi$ consists of the $\phi$-definable sets. Suppose $\phi^{opp}(y;x) = \phi(x;y)$ does not have the finite cover property, witnessed by some $n = n(\phi^{opp})$. Then, given any disjoint $A_0, A_1 \subseteq X$, if every size $n$ subset of
\[
p(y) := \{\phi^{opp}(y;a) \, | \, a \in A_1 \} \cup \{\neg \phi^{opp}(y;a)\, | \, a \in A_0 \}
\]
is consistent, then $p(y)$ is consistent. We can identify this partial type with the partially specified subset $A$ where
\[
	A(x) = \begin{cases}
		0 & x \in A_0 \\
		1 & x \in A_1 \\
		\text{unspecified} & \text{otherwise.}
	\end{cases}
\]

By passing to an $|M|^+$-saturated extension $\mathcal{N} \succ \mathcal{M}$ to obtain a larger parameter set, we can find $b' \in \mathcal{N}$ satisfying $p(y)$. Then $\phi(M, b')$ is a total extension of $A$. 

Supposing we have passed to an $|\mathcal{M}|^+$ saturated extension $\mathcal{N}$, we can let $$\mathcal{H} = \mathcal{H}_\phi := \{ \phi(M;b') \, | \, b' \in N\} .$$ That is, $\mathcal{H}_\phi$ consists of all externally $\phi$-definable subsets of $M$, as $N$ contains realizations of all consistent partial $\phi^{opp}$-types over $M$. By the compactness theorem, this means that $N$ contains realizations of all finitely consistent partial $\phi^{opp}$-types over $M$. Having identified partially specified subsets of $M$ with their corresponding $\phi^{opp}$-type, this amounts to observing that $\mathcal{H}_\phi$ contains total extensions of all finitely consistent partially specified subsets, equivalently, contains all finitely consistent total subsets. 

This gives a model-theoretic motivation to the strategy suggested by Proposition \ref{prop:SCproperties}. Adding all finitely consistent subsets to $\mathcal{H}$ amounts to saturating $\mathcal{N}$ so as to realize all $\phi^{opp}$-types over $\mathcal{M}$.

If $\phi^{opp}$ has nfcp with $n(\phi)=n$, then the finitely consistent partial types are exactly the $n$-consistent types. Then $\mathcal{H}_\phi$ contains total extensions of all $n$-consistent partially specified subsets, so $SC(\mathcal{C}_\phi, \mathcal{H}_\phi) = n$. Note that $\phi^{opp}$-types witnessing that $n$ is the minimal such $n$ at which $\phi^{opp}$ has nfcp give partially specified subsets witnessing that $\SC(\mathcal{C}_\phi, \mathcal{H}_\phi) \not < n$. This reflects a variant of Proposition \ref{prop:SCproperties} for strong consistency dimension.

In particular, formulas $\phi$ such that $\phi^{opp}$ has nfcp provide a rich family of examples where $\mathcal{C}_\phi$ has finite (strong) consistency threshold. That is, for such $\phi$, it is necessary and sufficient for $\mathcal{H}$ to contain all externally $\phi$-definable subsets (that is, all total finitely consistent partially specified subsets) to obtain $\SC(\mathcal{C}, \mathcal{H}) < \infty$. On the other hand, when $\phi^{opp}$ has the finite cover property, the externally definable sets are no longer sufficient, and one must venture beyond the sets $\phi$ is capable of cutting out to obtain $\SC(\mathcal{C}, \mathcal{H}) < \infty$ (that is, by adding some sets which are inconsistent). 

Furthermore, Littlestone dimension of $\phi(x;y)$ (that is, the Littlestone dimension of $\mathcal{C}_\phi$) is expressible as a first-order property. So we will have $\Ldim(\mathcal{C}) = \Ldim(\mathcal{H})$. So when the context is a set system $\mathcal{C}$ generated by a stable formula $\phi(x;y)$ with $\phi^{opp}(y;x)$ nfcp, we can obtain a set system $\mathcal{H}$ such that $\SC(\mathcal{C}, \mathcal{H}) < \infty$, but $\mathcal{H}$ is not much more complicated than the original set system - $\mc H$ has the same Littlestone dimension as $\mc C$. This is essentially the content of Theorem \ref{thm:fixedLdim} when $\mathcal{C}$ has finite consistency threshold.

We give an example from model theory where $\phi^{opp}$ has the fcp.

\begin{exam} 
	Let $\mathcal{M}$ be a structure in the language $\{E\}$, where $E$ is an equivalence relation with one class of size $n$ for each $n \in \mathbb{N}$, possibly with some infinite classes. Let
	\[
		\phi(x;y) \quad \text{be} \quad E(x,y) \wedge x \neq y
	\]
	and let $\mathcal{C} = \mathcal{C}_\phi$.
	
	Suppose $a_1, \ldots, a_d$ are the elements belonging to the equivalence class of size $d$. Then the $\phi^{opp}$-type $\{\phi(a_i, y) \, | \, i \leq d \}$ $(d-1)$-consistent but $d$ inconsistent. Since there are equivalence classes of arbitrarily large size, these witness that $\phi^{opp}$ is not nfcp. One can check that $\Ldim(\mathcal{C}_\phi) = 2$.
	
	In any $|M|^+$-saturated elementary extension $\mathcal{N}$ of $\mc M$, no additional elements are added to the finite equivalence classes already present in $\mc M$, though $\mathcal{N}$ adds new infinite classes and new elements to any existing infinite classes. 
	
	An attempt to learn $\mathcal{C}_\phi$ by equivalence queries following the strategy of Theorem \ref{EQlearn} would be as follows. We are attempting to identify some $c \in M$. Letting $a_0$ be an element in a new infinite equivalence class in $\mathcal{N}$, we guess $\phi(M, a_0) = \emptyset$. Then any counterexample will identify an element belonging to the equivalence class of $c$. If $c$ belongs to an infinite class, then we can find some $a_1 \in N$ which is a new element of this class. Then $\phi(M, a_1) = \{b \in M \, | \, E(b, c)\}$. Then $c$ is the only available counterexample, and we submit the correct concept $\phi(M, c)$ at our next turn. However, if $c$ belongs to the finite class of size $n$, then $N$ has no new elements in this class. Then the relevant queries, which are of the form $\phi(M, a)$ for $a$ in the class of $c$, are already present in $\mathcal{C}_\phi$. Then we are essentially attempting to identify a singleton from a set of size $n$, and it is clear that the process could take up to $n$ additional guesses. 
\end{exam}


\section{Efficient learnability of regular languages} \label{applications} 
In a seminal paper, \cite{angluin1987learning} showed that regular languages are efficiently learnable with equivalence queries plus membership queries, and in this subsection, we will use Theorem \ref{EQMQalg} to give an alternate short proof of this fact.\footnote{In the following sections, we only make use of \emph{proper equivalence queries}, that is, $\mathcal{H} = \mathcal{C}$. We shall therefore let $\C(\mathcal{C}) := \C(\mathcal{C}, \mathcal{C})$, which we will call the consistency dimension of $\mathcal{C}$ (with analogous notation for strong consistency dimension).} Let $\mc L_{n,m}$ be the class of binary regular languages on strings of length at most $m$ specified by a deterministic finite automaton on at most $n$ nodes. The $\mc L^*$ algorithm of \cite{angluin1987learning} specifically uses $\mc O(n)$ equivalence queries and $\mc O(mn^2)$ membership queries. We let $DFA_2(n)$ denote the collection of (equivalence classes of) deterministic finite automata accepting binary strings and having at most $n$ nodes. The proof of the next proposition is straightforward. 

\begin{prop} \label{LDDFA} The Littlestone dimension of $DFA_2(n)$ is at most $ o (1) (n \log n)$.
\end{prop}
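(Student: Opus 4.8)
The plan is to bound the Littlestone dimension of $DFA_2(n)$ by a counting argument, using the standard fact that a class of Littlestone dimension $\ell$ must contain at least $2^\ell$ concepts — indeed, a binary element tree of height $\ell$ that is properly labelable has $2^\ell$ leaves, hence requires $2^\ell$ distinct concepts in the class. So it suffices to estimate $\log_2 |DFA_2(n)|$ from above, where $DFA_2(n)$ denotes equivalence classes of deterministic finite automata on at most $n$ nodes over the binary alphabet.

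First I would count the DFAs directly. A DFA on the state set $\{1,\dots,n\}$ over a $2$-letter alphabet is specified by: a choice of start state ($n$ options), a transition function $\delta:\{1,\dots,n\}\times\{0,1\}\to\{1,\dots,n\}$ (that is, $n^{2n}$ options, since we choose an image for each of the $2n$ pairs), and a set of accepting states ($2^n$ options). This gives a crude bound $|DFA_2(n)| \le n\cdot n^{2n}\cdot 2^n$. Taking logarithms, $\log_2|DFA_2(n)| \le \log_2 n + 2n\log_2 n + n = 2n\log_2 n + O(n)$, which is $(2+o(1))\,n\log_2 n$. Since $\Ldim(DFA_2(n)) \le \log_2|DFA_2(n)|$, we get $\Ldim(DFA_2(n)) = O(n\log n)$, which is what the statement asserts (reading ``$o(1)(n\log n)$'' loosely as $O(n\log n)$, or after dividing by an appropriate growing factor; in any case the quantity is at most a constant times $n\log n$).

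The main thing to be careful about — and the only real ``obstacle'' — is the interpretation of the asymptotic notation in the statement and whether the constant needs to be squeezed down. If the claim literally wants the bound $o(1)\cdot n\log n$, i.e.\ that $\Ldim(DFA_2(n))/(n\log n)\to 0$, then the naive count above does not suffice, since it only gives a constant multiple of $n\log n$. To get the little-$o$ statement one would instead pass to a finer count: identifying DFAs up to isomorphism of state labels divides by roughly $n!$, and more importantly one can invoke the standard normalization that only reachable, pairwise-distinguishable (minimal) automata need be counted, and that a minimal DFA accepting a given language is unique. A cleaner route is to use the Myhill–Nerode characterization: a language recognized by an $n$-state DFA has at most $n$ Myhill–Nerode classes, and each such language is determined by how the (finitely many) short strings distributing among classes behave; but the simplest honest fix is just to observe that the count $n^{2n}$ already dominates, so $\log_2|DFA_2(n)| \sim 2n\log_2 n$, and then note this is still $O(n\log n)$. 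I would therefore present the short counting argument, state the bound $\Ldim(DFA_2(n)) \le \log_2 |DFA_2(n)| \le 2n\log_2 n + n + \log_2 n$, and remark that this is $O(n\log n)$, deferring any sharper little-$o$ constant to the observation that isomorphism reduction and minimality only decrease the count. I do not expect any genuine difficulty beyond bookkeeping: the real content is the inequality $\Ldim(\mathcal{C}) \le \log_2|\mathcal{C}|$ together with the elementary enumeration of finite automata.
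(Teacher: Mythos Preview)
Your approach is the same as the paper's: bound $\Ldim(DFA_2(n))$ by $\log_2|DFA_2(n)|$ via the $2^\ell$-leaves observation, and then count automata. The paper simply cites the bound $|DFA_2(n)|\le \dfrac{n^{2n}2^n n}{n!}$ from Ishigami--Tani and takes $\log_2$; the only difference from your write-up is the division by $n!$ for relabelling of states, which you already flag as the obvious refinement. With that division one gets roughly $n\log_2 n$ rather than your $2n\log_2 n$, but either way the conclusion is $O(n\log n)$.

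Your caution about the literal reading of ``$o(1)(n\log n)$'' is warranted: even the paper's cited bound gives $\log_2|DFA_2(n)| = (1+o(1))\,n\log_2 n$, not $o(1)\cdot n\log n$, so the statement should be read as $O(n\log n)$ (or $(1+o(1))n\log n$) rather than as a genuine little-$o$ claim. Nothing is missing from your argument.
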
 
\begin{proof} 
In \cite[Proposition 1]{ishigami1997vc}, it is shown that $|DFA_2 (n)| \leq \frac{n^{2n} 2^n  n }{n!} \leq 2^ { o (1)( n \log n)  }.$ From this, it follows that the Littlestone dimension of $DFA_2 (n)$ is at most $ o(1) ( n \log n)$. \end{proof} 

The proof of the following proposition reveals the connection between consistency and the Myhill-Nerode theorem. 

\begin{prop} 
	\label{CDDFA} $\C(DFA_2(n)) \leq 2\binom{n+1}{2} = n(n+1)$. 
\end{prop}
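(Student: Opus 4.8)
The plan is to bound the consistency dimension of $DFA_2(n)$ by showing that any total subset $A$ of $\Sigma^{\leq *}$ (a language) which is $n(n+1)$-consistent with $DFA_2(n)$ must itself be accepted by a DFA on at most $n$ states, hence lies in $DFA_2(n)$. The natural tool is the Myhill–Nerode theorem: a language $L$ is recognized by a DFA with at most $n$ states if and only if the Myhill–Nerode equivalence relation $\sim_L$ on $\Sigma^*$ (where $u \sim_L v$ iff for all $w$, $uw \in L \iff vw \in L$) has at most $n$ classes. So I must show: if $L$ has at least $n+1$ Myhill–Nerode classes, then $L$ is $n(n+1)$-inconsistent with $DFA_2(n)$, i.e., there is a restriction of $L$ to at most $n(n+1)$ strings with no extension lying in $DFA_2(n)$.

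The key step is the certificate construction. Suppose $L$ has at least $n+1$ inequivalent prefixes $u_0, \dots, u_n$. For each pair $i < j$ among these $n+1$ prefixes, inequivalence gives a distinguishing suffix $w_{ij}$ with exactly one of $u_i w_{ij}, u_j w_{ij}$ in $L$. Let $Y$ be the set of all strings $u_i w_{ij}$ and $u_j w_{ij}$ over all $\binom{n+1}{2}$ pairs; then $|Y| \leq 2\binom{n+1}{2} = n(n+1)$. I claim $L|_Y$ has no extension in $DFA_2(n)$: if $L' \supseteq L|_Y$ were accepted by a DFA $M$ with $\leq n$ states, then by pigeonhole two of the prefixes, say $u_i$ and $u_j$, drive $M$ to the same state from the start; then $M$ accepts $u_i w_{ij}$ iff it accepts $u_j w_{ij}$, so $L'$ agrees on those two strings — contradicting that $L|_Y$ (hence $L'$) has $u_i w_{ij}$ and $u_j w_{ij}$ with opposite labels. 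Therefore $L$ is $n(n+1)$-inconsistent. Contrapositively, any $L$ that is $n(n+1)$-consistent with $DFA_2(n)$ has at most $n$ Myhill–Nerode classes, so $L \in DFA_2(n)$, giving $\C(DFA_2(n)) \leq n(n+1) = 2\binom{n+1}{2}$.

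A minor point to address is the length bound: since we restrict to strings of length at most $m$ (or work with $\mc L_{n,m}$), one should note that the distinguishing suffixes $w_{ij}$ and prefixes $u_i$ can be chosen short enough — standard Myhill–Nerode arguments let us take each $u_i$ of length $< n$ and each $w_{ij}$ of length $< n$, so the certificate strings have bounded length; for the infinite-length version one simply works with $\Sigma^*$ directly and the bound is length-independent. I would also remark that this is where the connection to Myhill–Nerode is made explicit: the consistency dimension bound is essentially a quantitative, "local" version of the Myhill–Nerode characterization, counting how many string-membership facts suffice to force the state-minimization obstruction.

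The main obstacle, such as it is, is getting the counting exactly right — ensuring the certificate uses at most $n(n+1)$ strings rather than something slightly larger, which requires being careful that one can reuse the prefixes $u_i$ across pairs and only the suffixes vary, and confirming the pigeonhole argument genuinely needs $n+1$ prefixes (one more than the state count). There is no deep difficulty; the content is entirely in correctly packaging Myhill–Nerode as a consistency-dimension statement.
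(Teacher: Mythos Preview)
Your proposal is correct and follows essentially the same approach as the paper: both use the Myhill--Nerode theorem to extract $n+1$ pairwise inequivalent strings, choose a distinguishing extension for each of the $\binom{n+1}{2}$ pairs, and observe that the resulting restriction of size at most $2\binom{n+1}{2}$ is inconsistent with $DFA_2(n)$. Your write-up is slightly more explicit in spelling out the pigeonhole argument for why no DFA on $\leq n$ states can extend the restriction, but the content is the same.
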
 

\begin{proof} 
Fix a subset $C$ of binary strings and $x,y$ binary strings. We say that $z$ is a ($C$-) distinguishing extension of $x$ and $y$ if $x z \in C $ but $y z \notin C$ or vice versa. If $x$ and $y$ have no distinguishing extension, then we say $x$ and $y$ are $C$-equivalent, and write $x \sim_C y$. The Myhill-Nerode theorem \cite{nerode1958linear} says that a subset of binary strings of length $m$ is the accept set of a finite automaton with at most $n$ nodes if and only if the number of $\sim_C $ classes is at most $n$. Thus, given any subset $C$ of the binary strings of length $m$ which is not a regular language recognized by an automaton with at most $n$ nodes, there are at least $n+1$ $\sim _C $-classes of elements. Pick representatives $x_0, \ldots, x_n$ from $n+1$ classes, and for each $i < j$, pick some $z_{ij}$ that is a distinguishing extension of $x_i$ and $x_j$. Then restricting $C$ to the partial assignment on $\{ x_k z_{ij} \, | \, i < j, \, k = i, j \}$, a domain of size $2 \binom{n+1}{2} = n(n+1)$ that witnesses that $x_i \not \sim _C x_j$ for all $i \neq j$, we can see that this restriction is inconsistent with the class of regular languages recognized by automata with at most $n$ nodes. Therefore $\C(DFA_2(n)) \leq n(n+1)$. \footnote{Note that the same proof shows that the consistency dimension of $DFA_m (n)$ is also at most $n(n+1)$.}
\end{proof}

Now, by Theorem \ref{EQMQalg} and the previous two results, it follows that:

\begin{thm} The class $\mc L_{n,m}$ is learnable in at most $o(1) n \log n$ equivalence queries and at most $o(1)\left( n \log n \right) (n(n+1))$ membership queries. 
\end{thm}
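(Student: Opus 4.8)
The plan is to combine the three preceding results mechanically. By Theorem \ref{EQMQalg}, for any concept class $\mathcal{C}$ with $\Ldim(\mathcal{C}) = d < \infty$ and $\C(\mathcal{C}) = c < \infty$, learning with equivalence and membership queries takes at most $c'd + 1$ queries, where $c' = \max\{1, c-1\}$; of these at most $d$ are equivalence queries (one at each level of Littlestone-dimension reduction) and the remaining $\leq c'd + 1 - d \leq (c-1)d$ are membership queries, since each round either makes one equivalence query that drops Littlestone dimension by one, or makes up to $c-2$ membership queries plus one equivalence query. So I would first note that tracking the two query types separately in the algorithm of Theorem \ref{EQMQalg} gives at most $d$ equivalence queries and at most $(c-1)d$ membership queries.

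First I would set $d := \Ldim(\mathcal{L}_{n,m})$ and $c := \C(\mathcal{L}_{n,m})$. Since $\mathcal{L}_{n,m} \subseteq DFA_2(n)$ (restricted to strings of length $\le m$), Proposition \ref{LDDFA} gives $d \leq o(1)(n\log n)$; strictly one should note Littlestone dimension only decreases under taking subclasses and under restricting the base set, so the bound transfers to $\mathcal{L}_{n,m}$. Then Proposition \ref{CDDFA} — together with its footnote, which asserts the same bound for $DFA_m(n)$ — gives $c \leq n(n+1)$.

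Plugging into the refined count: the number of equivalence queries is at most $d \leq o(1)(n\log n)$, and the number of membership queries is at most $(c-1)d \leq (n(n+1) - 1)\cdot o(1)(n \log n) \leq o(1)(n\log n)(n(n+1))$, absorbing the $-1$ into the asymptotic notation. This is exactly the claimed bound.

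The only real subtlety — hardly an obstacle — is bookkeeping which queries in the Theorem \ref{EQMQalg} algorithm are equivalence queries versus membership queries, since the theorem as stated only bounds the total and the equivalence-query count separately is what the statement references. One reads off from the inductive structure of that proof: in the branch where a balanced element $x$ exists, one membership query is made and recursion continues with $d$ decreased; in the branch with no balanced element, if $B \in \mathcal{H}$ one equivalence query drops $d$, and if $B \notin \mathcal{H}$ one makes $c-2$ membership queries and then recurses with $d$ decreased (the single equivalence query guessing $B$ being charged against that level). Hence across the whole run, at most $d$ equivalence queries and at most $(c-1)d$ membership queries are used — note that when $\mathcal{H} = \mathcal{C}$ as here, the relevant $c$ is $\C(\mathcal{C})$ — which completes the argument.
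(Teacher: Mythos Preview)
Your approach matches the paper's: combine Theorem~\ref{EQMQalg} with Propositions~\ref{LDDFA} and~\ref{CDDFA}, and you rightly flag the EQ/MQ split as the only bookkeeping needed (the paper leaves it implicit). Your description of the $B \notin \mathcal{H}$ branch is slightly off, though: there one cannot submit $B$ as an equivalence query at all, since $B$ is not in $\mathcal{H}$; the algorithm instead makes $c-1$ membership queries on $x_0,\ldots,x_{c-2}$ and \emph{no} equivalence query at that step. With this correction, each Littlestone-dimension drop costs either at most one equivalence query (the $B\in\mathcal{H}$ branch) or at most $c-1$ membership queries, yielding at most $d+1$ equivalence queries and at most $(c-1)d$ membership queries overall --- asymptotically the same bounds you claimed, so the conclusion stands.
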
 

It is interesting to note that contrary to $\mc L^*$, when using the algorithm from Theorem \ref{EQMQalg}, there is no dependence on $m$, the length of the binary strings which the teacher is allowed to provide as counterexamples\footnote{We should also note that $\mc L^*$ was improved by Schapire to give a better bound on membership queries (still depending on $m$). \cite{schapire1991design}.}.

Theorem \ref{EQlearn} now implies that $\mc L_{n,m}$ is learnable in at most $( n(n+1) )^ {(o(1) n \log n)}$ equivalence queries. Theorem \ref{thmFiniteEQlearningBalcazar} shows that a finite class $\mc C$ is learnable in at most $\lceil \SC(\mc C) \cdot \ln|\mc C| \rceil$ equivalence queries. Since \cite{angluin1990negative} showed that $\mc L_{n,m}$ is not learnable in polynomially many equivalence queries, it follows that $\SC(\mc L_{n,m})$ cannot be polynomial in $n,m$. 

\subsection{Learning $\omega$-languages} \label{omegaLanguage}
In this section, we consider the natural extension to languages on infinite strings indexed by $\omega$, called $\omega$-languages. For an alphabet $\Sigma$, we denote by $\Sigma ^ \omega$, the strings of symbols from $\Sigma$ of order type $\omega$. Similar to the previous section, we consider an automaton, which consists of the collection $\m A = (\Sigma, Q, q_0, \delta),$ where $Q$ is a finite collection of states, $q_0$ is the initial state, and $\delta: Q \times \Sigma \rightarrow 2^Q$ is a transition rule. To form a language, an automaton is equipped with an acceptance criterion.\footnote{Numerous acceptance criteria have been extensively studied in the literature, and we refer the reader to \cite{angluin2016learning,fisman2018families,fisman2018inferring} for overviews.} Fix a subset $F \subseteq Q$. A run of a \emph{B\"uchi automaton} is accepting if and only if it visits the set $F$ infinitely often. An $\omega$-language is $\omega$-regular if it is recognized by a non-deterministic B\"uchi automaton. A run of a \emph{co-B\"uchi automaton} is accepting if and only if it visits $F$ only finitely often. Let $\psi: Q \rightarrow \{ 1, \ldots , k \}$ be a function, which we think of as a coloring of the states of the automaton. Let $c$ be the minimum color which is visited infinitely often. A run of a \emph{parity automaton} is accepting if and only if $c$ is odd.

Two $\omega$-regular languages are equivalent if they agree on the set of periodic words \cite{mcnaughton1966testing}, which allows for the possibility of recognizing the $\omega$-language using finitary automata. This is the approach of \cite{angluin2016learning, fisman2018families}, whose notation we follow closely. A \emph{family of DFAs} (FDFA) $\mc F$ is a pair $(Q,P)$ where $Q$ is a DFA with $|Q|$ states and $P$ is a collection of $|Q|$ many DFAs, which we refer to as \emph{progress DFAs} - one DFA $P_q$ for each state $q$ of $Q$. Given a pair of finite words, $(u,v)$, a run of our family of DFAs consists of running $Q$ on $u$, then running $P_{Q(u)}$ on $v$ where $Q(u)$ is the ending state of $Q$ on $u$. The pair $(u,v)$ can be used to represent an infinite periodic word $uv^\omega$. 

Let $FDFA(n,m)$ be the class of families of deterministic finite automata where the leading automaton has at most $n$ nodes and the progress automata each have at most $m$ nodes. It is \emph{not} quite true that once an $\omega$-regular language has been reduced to an FDFA that one can use $\mc L^*$ directly to learn the various DFAs in the family \cite[section 4]{angluin2016learning}. It is also not completely obvious what the bounds for Littlestone and consistency dimension are in terms of the DFAs in the family, but the next two results give such bounds which imply the efficient learnability of $\omega$-regular languages. 

\begin{prop} \label{LDFDFA}
	The class $FDFA(n,m)$ has Littlestone dimension at most $ o(1) (n \log n + n m \log m ).$ 
\end{prop}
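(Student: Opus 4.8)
The plan is to bound $\Ldim(FDFA(n,m))$ by the same elementary counting argument used in Proposition \ref{LDDFA}: the Littlestone dimension of any finite concept class $\mathcal{D}$ satisfies $\Ldim(\mathcal{D}) \leq \log_2 |\mathcal{D}|$, since a binary element tree of height $h$ properly labeled by $\mathcal{D}$ requires $2^h$ distinct leaves, hence $2^h$ distinct concepts. So it suffices to estimate $|FDFA(n,m)|$ from above and take logarithms.

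First I would count the number of FDFAs with a leading automaton on at most $n$ states and progress automata on at most $m$ states each, working over the binary alphabet. A DFA on at most $k$ states over a two-letter alphabet is specified (up to the choice of equivalence class / reachable part) by a transition function $\delta : [k] \times \{0,1\} \to [k]$ and a set of accepting states, so there are at most $k^{2k} \cdot 2^k \cdot (\text{small factor})$ of them, exactly as in the bound $|DFA_2(k)| \leq \frac{k^{2k} 2^k k}{k!} \leq 2^{o(1)(k \log k)}$ cited from \cite{ishigami1997vc}. An FDFA in $FDFA(n,m)$ consists of one leading DFA on $\le n$ states together with a choice of one progress DFA on $\le m$ states for each of the (at most $n$) states of the leading automaton. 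Thus
\[
|FDFA(n,m)| \leq |DFA_2(n)| \cdot |DFA_2(m)|^{n} \leq 2^{o(1)(n \log n)} \cdot \left(2^{o(1)(m \log m)}\right)^{n} = 2^{o(1)(n \log n + n m \log m)}.
\]
Taking $\log_2$ gives $\Ldim(FDFA(n,m)) \leq \log_2 |FDFA(n,m)| \leq o(1)(n \log n + n m \log m)$, which is exactly the claimed bound.

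The one genuinely substantive point — and the place where I expect the only real friction — is justifying that the concept class $FDFA(n,m)$, as a set system, is no larger than the number of syntactic FDFA descriptions. Two syntactically different FDFAs can accept the same set of $(u,v)$ pairs (indeed the whole point of the Myhill–Nerode style normalization is that many machines collapse), so the inequality $|FDFA(n,m)| \leq |DFA_2(n)|\cdot|DFA_2(m)|^n$ goes the right way: distinct concepts force distinct descriptions, so the count of descriptions is an upper bound for the count of concepts. One should also be slightly careful about what the ``base set $X$'' is here — it is the set of pairs $(u,v)$ of finite binary words (equivalently periodic $\omega$-words $uv^\omega$) — and about the convention that the progress automata are indexed by the states of the (already fixed) leading automaton, so that the factor is $|DFA_2(m)|^{n}$ rather than something larger; but none of this affects the asymptotics. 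Everything else is the routine Stirling-type estimate already invoked in \cite{ishigami1997vc}, together with the monotonicity $\Ldim \le \log_2(\text{size})$, so the proof is short.
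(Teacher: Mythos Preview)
Your proposal is correct and is essentially identical to the paper's own proof: both bound $|FDFA(n,m)|$ by $|DFA_2(n)|\cdot|DFA_2(m)|^n$, invoke the estimate $|DFA_2(k)| \leq 2^{o(1)(k\log k)}$ from \cite{ishigami1997vc}, and then use $\Ldim(\mathcal{D}) \leq \log_2|\mathcal{D}|$. Your additional remarks about syntactic descriptions versus concepts and about the base set are valid clarifications but are not needed for (and do not appear in) the paper's argument.
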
 
\begin{proof} 
	The number of FDFAs of size $(n,m)$ is clearly at most $|DFA_2(n)| \cdot |DFA_2(m)|^n.$ That is 
	$$|FDFA (n,m) |  \leq |DFA_2 (n) | \cdot |DFA_2(m)|^n.$$ It follows that $$\Ldim(FDFA(n,m)) \leq \log (|DFA_2 (n) | \cdot |DFA_2(m)|^n) $$ and using \cite[Proposition 1]{ishigami1997vc}, the desired bound follows.  
\end{proof}

\begin{prop} \label{CDFDFA}
	$\C(FDFA(n,m)) \leq 2\binom{n(m+1)}{2} = \mc O(n^2m^2)$.
\end{prop}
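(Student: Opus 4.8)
The plan is to mirror the proof of Proposition \ref{CDDFA}, replacing the single Myhill--Nerode right congruence of a DFA by the two--level congruence structure (one leading congruence, together with one progress congruence per leading class) that underlies an FDFA. View a concept $C$ as a set of pairs $(u,v)\in\Sigma^*\times\Sigma^*$. For $u,u'\in\Sigma^*$ set $u\equiv_C u'$ iff $(ux,v)\in C\Leftrightarrow(u'x,v)\in C$ for all $x,v\in\Sigma^*$; for a fixed $u$ and $v,v'\in\Sigma^*$ set $v\approx_{C,u}v'$ iff $(u,vy)\in C\Leftrightarrow(u,v'y)\in C$ for all $y\in\Sigma^*$. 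These are right congruences in the $u$-- and the $v$--coordinate, respectively. The only part of the Myhill--Nerode correspondence I need is the easy half: if $C$ is recognized by an FDFA $(Q,P)$ with $|Q|\le n$ and $|P_q|\le m$ for every state $q$, then $Q(u)=Q(u')$ implies $u\equiv_C u'$, and, for $q=Q(u)$, $P_q(v)=P_q(v')$ implies $v\approx_{C,u}v'$ --- both because acceptance of $(u,vy)$ is decided by the single state $P_{Q(u)}(vy)$. Hence $C\in FDFA(n,m)$ forces $\equiv_C$ to have at most $n$ classes and every $\approx_{C,u}$ to have at most $m$ classes.

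Now assume $C\notin FDFA(n,m)$. By the previous paragraph we fall into (at least) one of two cases. \textbf{Case (A):} $\equiv_C$ has $\ge n+1$ classes. Choose $u_0,\dots,u_n$ from distinct classes; for each $i<j$ choose $x_{ij},v_{ij}$ with $(u_ix_{ij},v_{ij})\in C\Leftrightarrow(u_jx_{ij},v_{ij})\notin C$, and restrict $C$ to the at most $2\binom{n+1}{2}$ pairs $(u_kx_{ij},v_{ij})$ with $i<j$, $k\in\{i,j\}$. \textbf{Case (B):} some $\approx_{C,u}$ has $\ge m+1$ classes. Choose $v_0,\dots,v_m$ from distinct classes of $\approx_{C,u}$; for each $i<j$ choose $y_{ij}$ with $(u,v_iy_{ij})\in C\Leftrightarrow(u,v_jy_{ij})\notin C$, and restrict $C$ to the at most $2\binom{m+1}{2}$ pairs $(u,v_ky_{ij})$. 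In either case, any total extension $C'$ of the chosen restriction agrees with $C$ on all recorded pairs, so the same distinguishing equivalences hold for $C'$; hence $C'$ has $\ge n+1$ pairwise $\equiv_{C'}$-inequivalent words in Case (A) (resp.\ $\ge m+1$ pairwise $\approx_{C',u}$-inequivalent words in Case (B)), and so, by the easy half above, $C'\notin FDFA(n,m)$. Since $2\binom{n+1}{2}=n(n+1)$ and $2\binom{m+1}{2}=m(m+1)$ are each at most $n(m+1)\bigl(n(m+1)-1\bigr)=2\binom{n(m+1)}{2}$ whenever $n,m\ge 1$, we have produced, for every $C\notin FDFA(n,m)$, a restriction of $C$ of size at most $2\binom{n(m+1)}{2}$ with no extension in $FDFA(n,m)$; exactly as in the proof of Proposition \ref{CDDFA}, this yields $\C(FDFA(n,m))\le 2\binom{n(m+1)}{2}$.

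The genuinely new feature over Proposition \ref{CDDFA} is the case split: one must record the distinguishing witnesses in the leading coordinate in Case (A) and in the progress coordinate at the single fixed word $u$ in Case (B), and then note that the combined budget $n(m+1)$ --- which is just the count of the $\le n$ leading classes together with the $\le nm$ progress classes spread among them --- dominates both $n(n+1)$ and $m(m+1)$. I expect the only point requiring real care to be isolating the ``forward'' half of the FDFA Myhill--Nerode theorem and stating it so that it applies to an arbitrary total concept $C'$ rather than only to an $\omega$--regular one; the converse direction (the canonical--FDFA construction of Angluin and Fisman) is not needed here.
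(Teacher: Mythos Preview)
Your case split is where the argument breaks. You establish the implication ``$C\in FDFA(n,m)\Rightarrow$ [$\equiv_C$ has $\le n$ classes and every $\approx_{C,u}$ has $\le m$ classes]'' and then, from $C\notin FDFA(n,m)$, conclude that Case~(A) or Case~(B) holds. That inference is the contrapositive of the \emph{other} implication --- precisely the canonical-FDFA construction you explicitly disclaim. The easy half only lets you pass from ``many classes'' to ``not in $FDFA(n,m)$'' (which you do correctly invoke later, for $C'$), not from ``not in $FDFA(n,m)$'' to ``many classes''.

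The repair is short, but it is the construction direction. If $\equiv_C$ has $\le n$ classes and each $\approx_{C,u}$ has $\le m$ classes, build the leading DFA on the $\equiv_C$-classes (well-defined since $\equiv_C$ is a right congruence) and, for each leading state $[u]$, a progress DFA on the $\approx_{C,u}$-classes (well-defined since $\approx_{C,u}$ depends only on $[u]_{\equiv_C}$ and is itself a right congruence), declaring $[v]$ accepting iff $(u,v)\in C$. This FDFA recognizes $C$ and lies in $FDFA(n,m)$. With this in hand your two-case argument goes through; note that you then need \emph{both} directions, not just one.

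The paper takes a different route that sidesteps the two-level congruence analysis entirely: encode the pair $(u,v)$ as the single word $u\$v$ over a ternary alphabet, and observe that any FDFA in $FDFA(n,m)$ is simulated by a DFA in $DFA_3(n(m+1))$ --- add a $\$$-edge from each state of the leading automaton to the initial state of its progress DFA. Proposition~\ref{CDDFA} applied to $DFA_3(n(m+1))$ then gives the bound immediately. Your approach, once repaired, has the mild advantage of yielding the sharper witnesses of size $n(n+1)$ or $m(m+1)$ in the two cases; the paper's encoding trades that for a one-line reduction to the DFA case.
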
 
\begin{proof} A run of an FDFA on $(u,v)$ can be simulated by the run of an appropriate automaton in the class $DFA_3 (n \cdot (m+1)).$ To see this, input word $u \$ v$ where $\$$ is a new symbol (recall we are assuming $u,v$ are binary) to a DFA which has the same diagram as the FDFA but with an edge labeled with $\$$ from each state of the leading automaton to the initial state of the corresponding progress DFA. Now it follows by Proposition \ref{CDDFA} that the consistency dimension of $FDFA(n,m)$ is at most $2\binom{n(m+1)}{2}.$ 
\end{proof}

Using the previous two results together with Theorem \ref{EQMQalg}, one can deduce the efficient learnability of $FDFA(n,m)$: 

\begin{thm} The class $FDFA(n,m)$ is learnable in at most $o (1) (n \log n +n \cdot m \log m )$ equivalence queries and at most $o (1) (\log n + m \log m ) \cdot n^3m^2$ membership queries. 
\end{thm}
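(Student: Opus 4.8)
The plan is to combine the structural bounds just established with the general algorithm of Theorem~\ref{EQMQalg}. By Proposition~\ref{LDFDFA}, we have $\Ldim(FDFA(n,m)) \le o(1)(n\log n + nm\log m)$, and by Proposition~\ref{CDFDFA}, we have $\C(FDFA(n,m)) \le 2\binom{n(m+1)}{2} = \mathcal{O}(n^2m^2)$. Both quantities are finite, so Theorem~\ref{EQMQalg} applies directly with $d = \Ldim(FDFA(n,m))$ and $c = \C(FDFA(n,m))$, and since we are using proper equivalence queries ($\mathcal{H} = \mathcal{C}$) the notation $\C(FDFA(n,m))$ is unambiguous.

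First I would read off the equivalence query bound. Theorem~\ref{EQMQalg} guarantees that the total number of \emph{equivalence} queries used is at most $d+1$ (the algorithm makes at most one equivalence query at each of the $d$ recursive levels, plus a final identifying query; the extra queries in the bound $c'd+1$ are membership queries). Hence the equivalence query count is at most $\Ldim(FDFA(n,m)) + 1 \le o(1)(n\log n + nm\log m)$, which is the claimed bound (absorbing the $+1$).

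Next I would bound the membership queries. From Theorem~\ref{EQMQalg}, the total query count is at most $c'd + 1$ with $c' = \max\{1, c-1\}$, so subtracting the at most $d+1$ equivalence queries leaves at most $c'd - d = (c'-1)d \le (c-2)d \le c\cdot d$ membership queries, i.e.\ roughly $\C(FDFA(n,m)) \cdot \Ldim(FDFA(n,m))$. Plugging in, this is $\mathcal{O}(n^2m^2) \cdot o(1)(n\log n + nm\log m) = o(1)(n^2 m^2)(n\log n + n m \log m) = o(1)\,n^3 m^2(\log n + m\log m)$, matching the stated membership query bound $o(1)(\log n + m\log m)\cdot n^3 m^2$.

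The only mild subtlety — really just bookkeeping rather than a genuine obstacle — is separating the equivalence-query contribution from the membership-query contribution inside the recursion of Theorem~\ref{EQMQalg}, since the theorem as stated bounds only the \emph{total} number of queries $c'd+1$ and the number of equivalence queries separately by $d+1$; one should check that these two accountings are consistent and that the membership queries are indeed bounded by the difference. This follows by inspecting the inductive proof of Theorem~\ref{EQMQalg}: at each level one either spends one membership query (the ``$\Ldim(\mathcal{C}\cap x), \Ldim(\mathcal{C}\setminus x) < d+1$'' case), or one equivalence query (the ``$B \in \mathcal{H}$'' case), or one equivalence query plus $c-1$ membership queries (the ``$B \notin \mathcal{H}$'' case), so over the $d$ levels the equivalence queries total at most $d+1$ and the membership queries total at most $(c-1)d$. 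Everything else is a direct substitution of the two preceding propositions, so the proof is short.
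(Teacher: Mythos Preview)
Your proposal is correct and follows the same approach as the paper, which simply invokes Propositions~\ref{LDFDFA} and~\ref{CDFDFA} together with Theorem~\ref{EQMQalg}. One small correction to your bookkeeping: in the ``$B \notin \mathcal{H}$'' case of Theorem~\ref{EQMQalg} the algorithm uses only $c-1$ membership queries and \emph{no} equivalence query (since $B$ cannot be submitted), but this does not affect your stated bounds of at most $d+1$ equivalence queries and at most $(c-1)d$ membership queries.
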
 

We have formulated our bounds in terms of the number of states in the FDFA corresponding to a given $\omega$-language. In \cite{angluin2016learning,fisman2018families} bounds on the number of states of FDFAs in terms of the number of states of automata for $\omega$-languages with various acceptors are given. Specifically, the following bounds hold: 

\begin{enumerate} 
\item When $\mc A$ is a deterministic B\"uchi (DBA) or co-B\"uchi (DCA) automaton with $n$ states, there is an equivalent FDFA of size at most $(n,2n)$ \cite[5.3]{fisman2018families}. 
\item When $\mc A$ is a deterministic partiy automaton (DPA) with $n$ states and $k$ colors, there is an equivalent FDFA of size at most 
$(n, kn)$ \cite[5.4]{fisman2018families}. 
\item When $\mc A$ is an nondeterministic B\"uchi automaton (NBA) with $n$ states, there is an equivalent FDFA of size at most $(2 ^ {\mc O (n \log n ) }, 2 ^ {\mc O (n \log n)})$. 
\end{enumerate} 

Any NBA can be translated into a DPA, and so 2) yields the efficient learnability of $\omega $-regular languages \emph{in terms of the number of states in a DPA} (this translation also yields 3). However, the translation from NBA to DPA is known to require an exponential increase in the number of states in general \cite{piterman2006nondeterministic}. From an FDFA of size at most $(n,k)$ there is a translation into an NBA with at most $\mc O ( n^2 k^3)$ states \cite[Theorem 5.8]{fisman2018families}, and so it follows that the exponential increase in states in moving from NBAs to FDFAs is necessary \cite[Theorem 5.6]{fisman2018families}. 

Finally, we mention that \cite{angluin2018regular} define  restricted classes of $\omega$-languages for which right-congruence is \emph{fully informative}, and isolate numerous classes (e.g. for each type of acceptor from the previous subsection) of $\omega$-languages for which an infinitary invariant of the Myhill-Nerode theorem holds. This variant of Myhill-Nerode is sufficient to bound the consistency dimension (and thus establish the learnability) of the classes in terms of the number of of right equivalence classes of $\sim _{\mc L}$ similar to the proof of Proposition \ref{CDDFA}.


\section{Random counterexamples and EQ-learning} \label{randomeq}

In section \ref{basicEQ} we characterized learnability by equivalence queries in terms of Littlestone dimension and strong consistency dimension. The setting of equivalence query learning \cite{angluin1988queries} as described in section \ref{basicEQ} deals with worst-case bounds for algorithmic identification of concepts by a learner. In this section, we follow \cite{angluin2017power} and analyze a slightly different situation, in which the teacher selects the counterexamples at random, and we seek to bound the \emph{expected} number of queries. \cite{angluin2017power} worked specifically with concept classes coming from boolean matrices, which was convenient for their notation. Our formulation is equivalent, but we use slightly different notation. 

Throughout this section, let $X$ be a finite set, let $\mc C$ be a set system on $X$, and let $\mu$ be a probability measure on $X$. For $A,B \in \mc C$, let $\Delta (A,B) = \{x \in X \, | \, A(x) \neq B(x) \}$ denote the symmetric difference of $A$ and $B$. 

\begin{defn} We denote, by $\mc C_{\bar x = \bar i}$ for $\bar x \in X^n$ and $\bar i \in \{0,1\}^n$, the set system  
$ \{A \in \mc C \, | \, A(x_j )= i_j , \, j=1, \ldots , n \}.$ 
For $A \in \mc C$ and $a \in X$, we let $$u(A,a) = \Ldim (\mc C) - \Ldim (\mc C _{a = A(a)} ).$$ 
\end{defn}

For any $a \in X,$ either $\mc C_{a=1}$ or $\mc C_{a=0}$ has Littlestone dimension strictly less than that of $\mc C$ and so: 
\begin{lem} \label{ulemma} For $A, B \in \mc C$ and $a \in X$ with $A(a) \neq B(a),$ 

$$u(A,a ) + u (B, a) \geq 1.$$ 
\end{lem}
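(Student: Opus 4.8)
The plan is to reduce the statement to the dichotomy recorded in the sentence immediately preceding the lemma: for every $a \in X$, at least one of $\Ldim(\mc C_{a=0})$, $\Ldim(\mc C_{a=1})$ is strictly smaller than $\Ldim(\mc C)$. First I would note the trivial half, that $u(C,a) \geq 0$ for every $C \in \mc C$: since $\mc C_{a=C(a)} \subseteq \mc C$ and Littlestone dimension is monotone under inclusion of set systems (a binary element tree properly labelled from a subclass is in particular properly labelled from the larger class), we get $\Ldim(\mc C_{a=C(a)}) \leq \Ldim(\mc C)$.

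Next I would justify the dichotomy itself. Suppose toward a contradiction that $\Ldim(\mc C_{a=0}) \geq d$ and $\Ldim(\mc C_{a=1}) \geq d$, where $d = \Ldim(\mc C)$. Take binary element trees of height $d$ properly labelled by elements of $\mc C_{a=0}$ and of $\mc C_{a=1}$ respectively, and build a new tree of height $d+1$ with $a$ at the root, the first tree as the $0$-child subtree, the second as the $1$-child subtree. Every label in the $0$-subtree lies in $\mc C_{a=0}$, hence is consistent with the root edge value $0$, and symmetrically on the other side, so the enlarged tree is properly labelled by elements of $\mc C$, contradicting $\Ldim(\mc C)=d$. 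Hence at least one of the two subclasses has Littlestone dimension $\leq d-1$. This is the standard ``rank drops on one side of a split'' fact for Littlestone dimension (Shelah $2$-rank), essentially the mechanism already used in Proposition~\ref{stableEQlearningeasy}, and it is the only piece with any content.

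Finally I would assemble the conclusion. Since $A(a) \neq B(a)$ and both are $\{0,1\}$-valued, one equals $0$ and the other equals $1$; without loss of generality $A(a)=0$, $B(a)=1$. By the dichotomy at least one of $\mc C_{a=0}$, $\mc C_{a=1}$ has Littlestone dimension at most $d-1$; the concept among $A,B$ corresponding to that side then has $u(\cdot,a) \geq 1$, while the other has $u(\cdot,a)\geq 0$ by the first step. Adding the two inequalities yields $u(A,a)+u(B,a)\geq 1$. I do not anticipate any genuine obstacle; the argument is short, and the only point requiring care is the tree-surgery step establishing the dichotomy.
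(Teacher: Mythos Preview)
Your proposal is correct and follows exactly the paper's approach: the paper simply records the dichotomy (for any $a$, one of $\mc C_{a=0}$, $\mc C_{a=1}$ has strictly smaller Littlestone dimension) in the sentence preceding the lemma and deduces the inequality immediately, while you supply the standard tree-surgery justification for that dichotomy and the trivial nonnegativity of $u$. There is nothing to add.
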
 

Next, we define a directed graph which is similar to the \emph{elimination graph} of \cite{angluin2017power}. 

\begin{defn} 
We define the \emph{thicket query graph} $G_{TQ}(\mc C , \mu )$ to be the weighted directed graph on vertex set $\mc C$ such that the directed edge from $A$ to $B$ has weight $d(A,B)$ equal to the expected value of $\Ldim(\mc C)- \Ldim(\mc C _{x = B(x)}) $ over $x \in \Delta (A,B)$ with respect to the distribution $\mu |_{\Delta(A,B)}.$ \footnote{Here one should think of the query by the learner as being $A$, and the actual hypothesis being $B$. The teacher samples from $\Delta(A,B)$, and the learner now knows the value of the hypothesis on $x$.} 
\end{defn} 

\begin{defn} The \emph{query rank} of $A \in \mc C$ is defined as: 
$\inf _{B \in \mc C } (d(A,B)).$ 
\end{defn}

\begin{lem} \label{Lem13} For any $A \neq B \in \mc C$, $d(A, B)+d(B,A) \geq 1.$
\end{lem}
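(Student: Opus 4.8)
The plan is to unwind the definition of $d(A,B)$ and $d(B,A)$ and compare them term by term on the common index set $\Delta(A,B) = \Delta(B,A)$, using Lemma \ref{ulemma} as the pointwise input. First I would observe that for $x \in \Delta(A,B)$ we have $A(x) \neq B(x)$, so the quantity $\Ldim(\mc C) - \Ldim(\mc C_{x = B(x)})$ appearing in $d(A,B)$ is exactly $u(B,x)$, and similarly the quantity in $d(B,A)$ is $u(A,x)$. Hence, writing $\nu = \mu|_{\Delta(A,B)}$ for the normalized restriction of $\mu$ to $\Delta(A,B)$ (which is the same measure used in both edge weights since $\Delta(A,B) = \Delta(B,A)$), we get
\[
d(A,B) + d(B,A) = \mathbb{E}_{x \sim \nu}\big[u(B,x)\big] + \mathbb{E}_{x \sim \nu}\big[u(A,x)\big] = \mathbb{E}_{x \sim \nu}\big[u(A,x) + u(B,x)\big].
\]
By Lemma \ref{ulemma}, for every $x \in \Delta(A,B)$ we have $A(x) \neq B(x)$, so $u(A,x) + u(B,x) \geq 1$; taking expectations of a function that is pointwise $\geq 1$ with respect to the probability measure $\nu$ yields $d(A,B) + d(B,A) \geq 1$, as desired.

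The only points requiring a little care — none of them genuinely hard — are: (i) checking that $\Delta(A,B)$ is nonempty for $A \neq B$ so that $\mu|_{\Delta(A,B)}$ is a well-defined probability measure and the expectation makes sense (this is immediate since $A \neq B$ as subsets of $X$ means they differ somewhere); (ii) confirming that the distribution $\mu|_{\Delta(A,B)}$ used in the definition of $d(A,B)$ and the one used in $d(B,A)$ coincide, which holds because $\Delta(A,B) = \Delta(B,A)$ as sets and the conditioning of $\mu$ on this set is symmetric in $A$ and $B$; and (iii) matching the expression $\Ldim(\mc C) - \Ldim(\mc C_{x = B(x)})$ with $u(B,x)$ via the definition of $u$, and likewise for $A$. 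I do not expect any real obstacle here: the lemma is essentially Lemma \ref{ulemma} averaged over $\Delta(A,B)$, and the content is entirely in the pointwise inequality already established. If one wanted to be maximally explicit, one would write $d(A,B) = \frac{1}{\mu(\Delta(A,B))}\sum_{x \in \Delta(A,B)} \mu(x)\, u(B,x)$ and the analogous formula for $d(B,A)$, add them, and bound the summand below by $\mu(x)$.
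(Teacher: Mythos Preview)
Your proof is correct and is essentially identical to the paper's: both observe that $\Delta(A,B)=\Delta(B,A)$, rewrite $d(A,B)+d(B,A)$ as the expectation of $u(A,x)+u(B,x)$ over $x\in\Delta(A,B)$ with respect to $\mu|_{\Delta(A,B)}$, and apply Lemma~\ref{ulemma} pointwise. The paper's version is just the one-line computation you describe at the end, without the surrounding commentary on well-definedness.
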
 
\begin{proof} 
Noting that $\Delta (A,B) = \Delta (B,A),$ and using Lemma \ref{ulemma}:
\begin{eqnarray*} d(A,B) + d(B,A) & = & \sum _{a \in \Delta (A,B)} \frac{\mu(a)}{\mu (\Delta (A,B))} (u(A,a)+u(B,a)) \\
& \geq  & \sum _{a \in \Delta (A,B)} \frac{\mu(a)}{\mu (\Delta (A,B))} = 1. \end{eqnarray*} 
\end{proof}


\begin{defn} \cite[Definition 14]{angluin2017power} Let $G$ be a weighted directed graph and $ l \in \m N, \,  l >1.$ A \emph{deficient $l$-cycle} in $G$ is a sequence $v_0, \ldots v_{l-1}$ of distinct vertices such that for all $i \in [l]$, $d(v_i , v_{(i+1) \, (\mod l) } ) \leq \frac{1}{2}$ with strict inequality for at least one $i \in [l]$. 
\end{defn} 


The next result is similar to Theorems 16 (the case $l=3$) and Theorem 17 (the case $l >3$) of \cite{angluin2017power}, but our proof is rather different (note that the case $l=2$ follows easily from Lemma \ref{Lem13}).

\begin{thm} \label{nocycles} The thicket query graph $G_{TQ} (\mc C , \mu)$ has no degenerate $l$-cycles for $l \geq 2.$ 
\end{thm}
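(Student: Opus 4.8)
The plan is to argue by contradiction: suppose $v_0, \ldots, v_{l-1}$ is a deficient $l$-cycle in $G_{TQ}(\mc C, \mu)$, so that $d(v_i, v_{i+1}) \leq \tfrac12$ for all $i$ (indices mod $l$), with strict inequality somewhere. The natural move is to sum these inequalities around the cycle, obtaining $\sum_{i \in [l]} d(v_i, v_{i+1}) < l/2$. The goal is to contradict this by showing the same sum is always $\geq l/2$. To get the lower bound I would pair up the cyclic terms with the corresponding "reverse" terms: Lemma \ref{Lem13} gives $d(A,B) + d(B,A) \geq 1$ for any distinct $A, B$, but the reverse edges $d(v_{i+1}, v_i)$ are not among the terms we summed, so a direct pairing does not immediately work. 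Instead the key is to unwind the definition of $d$ and reduce to an inequality about the quantities $u(A,a)$ directly.

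Concretely, I would expand $d(v_i, v_{i+1}) = \sum_{a \in \Delta(v_i, v_{i+1})} \tfrac{\mu(a)}{\mu(\Delta(v_i,v_{i+1}))} u(v_{i+1}, a)$ and try to show $\sum_i d(v_i, v_{i+1}) \geq l/2$. The cleanest route is probably to fix an element $a \in X$ and consider how it contributes across the cycle: at each vertex $v_i$, $a$ gets one of the two values $0$ or $1$, and $u(v_i, a) = \Ldim(\mc C) - \Ldim(\mc C_{a = v_i(a)})$ depends only on which value that is. Lemma \ref{ulemma} says $u(A,a) + u(B,a) \geq 1$ whenever $A(a) \neq B(a)$; moreover, for $a$ fixed, the map $v \mapsto u(v, a)$ takes at most two values $u_0(a) := \Ldim(\mc C) - \Ldim(\mc C_{a=0})$ and $u_1(a)$, with $u_0(a) + u_1(a) \geq 1$, and $a \in \Delta(v_i, v_{i+1})$ exactly when $v_i(a) \neq v_{i+1}(a)$. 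As we traverse the cycle, the sequence of values $v_0(a), v_1(a), \ldots, v_{l-1}(a), v_0(a)$ changes parity an even number of times; each "crossing" edge (where $a \in \Delta$) contributes $u_{v_{i+1}(a)}(a)$ to the relevant term, and crossings alternate between contributing $u_0(a)$ and $u_1(a)$. Since the number of crossings for $a$ is even, say $2k_a$, exactly $k_a$ of them contribute $u_0(a)$ and $k_a$ contribute $u_1(a)$, so $a$'s total contribution (before the $\mu$-normalization) is $k_a(u_0(a) + u_1(a)) \geq k_a$.

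The main obstacle — and the place I expect the real work to lie — is handling the normalization factors $\tfrac{\mu(a)}{\mu(\Delta(v_i, v_{i+1}))}$, which differ from edge to edge and prevent a naive term-by-term comparison. The clean parity argument above computes $\sum_i \sum_{a \in \Delta(v_i,v_{i+1})} \mu(a)\, u(v_{i+1},a)$ but we actually need $\sum_i \tfrac{1}{\mu(\Delta(v_i,v_{i+1}))}\sum_{a \in \Delta(v_i,v_{i+1})} \mu(a)\, u(v_{i+1},a)$. One way around this: since each edge already satisfies $d(v_i, v_{i+1}) \leq \tfrac12$, pair edge $i$ with the hypothetical reverse and note $d(v_i,v_{i+1}) + d(v_{i+1}, v_i) \geq 1$, so $d(v_{i+1}, v_i) \geq \tfrac12$ for every $i$; then examine the reverse cycle $v_0, v_{l-1}, \ldots, v_1$ and show summing $d$ along it gives something that, combined with the deficient-cycle hypothesis, forces a contradiction — essentially, the deficiency at one edge must be "paid for" but there is no slack. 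Alternatively, and probably more robustly, I would prove the sharper per-element statement that $\sum_i d(v_i, v_{i+1}) + \sum_i d(v_{i+1}, v_i) \geq l$ is not quite what's needed, and instead directly establish $d(A,B) + d(B,C) + d(C,A) \geq 3/2$ for the triangle case by the parity/counting argument with the normalizations bounded crudely, then induct on $l$ by splitting a long cycle at a chord. I would flag the normalization bookkeeping as the step most likely to need care, and would first nail down the $l=3$ case completely as a model before writing the general induction.
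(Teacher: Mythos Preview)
Your parity argument is exactly the right combinatorial core, and it is essentially what the paper does (the paper phrases the same counting via partitions $D(G,H)$ of the vertex set and ``blocks'' of consecutive same-value vertices, but unwound it is your observation that each $a$ has an even number $2k_a$ of crossings along the cycle, contributing $u_0(a)$ and $u_1(a)$ alternately). The gap is that you then aim at the wrong target: you try to prove $\sum_i d(v_i,v_{i+1}) \geq l/2$, run into the varying normalizers $\mu(\Delta(v_i,v_{i+1}))$, and propose vague workarounds (reverse cycle, chord induction) that do not obviously close.

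The fix is a single line you overlooked: clear the denominators \emph{before} summing. From $d(v_i,v_{i+1}) \leq \tfrac12$ you get
\[
\sum_{a \in \Delta(v_i,v_{i+1})} \mu(a)\, u(v_{i+1},a) \;\leq\; \tfrac12\, \mu\bigl(\Delta(v_i,v_{i+1})\bigr),
\]
with strict inequality for at least one $i$. Now sum over $i$. The right-hand side becomes $\tfrac12 \sum_i \sum_{a \in \Delta(v_i,v_{i+1})} \mu(a) = \sum_a k_a\, \mu(a)$, and the left-hand side is, by your own parity computation, $\sum_a k_a\bigl(u_0(a)+u_1(a)\bigr)\mu(a) \geq \sum_a k_a\,\mu(a)$ via Lemma~\ref{ulemma}. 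This contradicts the strict inequality. No induction on $l$, no triangle case, no reverse-cycle trick is needed; once the denominators are cleared, the normalization obstacle you flagged simply disappears.
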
 

The analogue of Theorem 16 can be adapted in a very similar manner to the technique employed by \cite{angluin2017power}. However, the analogue of the proof of Theorem 17 falls apart in our context; the reason is that Lemma \ref{ulemma} is analogous to Lemma 6 of \cite{angluin2017power} (and Lemma \ref{Lem13} is analogous to Lemma 13 of \cite{angluin2017power}), but our lemmas involve inequalities instead of equations. The inductive technique of \cite[Theorem 17]{angluin2017power} is to shorten degenerate cycles by considering the weights of a particular edge in the elimination graph along with the weight of the edge in the opposite direction. Since one of those weights being large forces the other to be small (by the \emph{equalities} of their lemmas), the induction naturally separates into two useful cases. In our thicket query graph, things are much less tightly constrained - one weight of an edge being large does not force the weight of the edge in the opposite direction to be small. However, the technique employed in our proof seems to be flexible enough to adapt to prove Theorems 16 and 17 of \cite{angluin2017power}.
\begin{proof} Suppose the vertices in the degenerate $l$-cycle are $A_0, \ldots , A_{l-1} $. 

By the definition of degenerate cycles and $d(-,-),$ we have, for each $i \in \m Z / l \m Z$, that $$\sum _{a \in \Delta (A_i, A_{i+1}) } \frac{\mu(a)}{\mu (\Delta (A_i ,A_{i+1}))} u(A_i,a) \leq \frac{1}{2},$$ so clearing the denominator we have 

\begin{equation} \label{firsteqn} \sum _{a \in \Delta (A_i,A_{i+1})} \mu(a) u(A_i,a) \leq \frac{1}{2} \mu (\Delta (A_i,A_{i+1})). \end{equation} 
\emph{Note that throughout this argument, the coefficients are being calculated modulo $l$.} Notice that for at least one value of $i$, the inequality in \ref{firsteqn} must be strict. 

Let $G, H$ be a partition of $$\mc X =  \{ A_1, \ldots , A_l \}.$$ Now define $$D (G, H) := \left \{a \in X \, | \, \forall A_1,B_1 \in G, \, \forall A_2, B_2 \in H,\, A_1(a) = B_1 (a), \, A_2(a) = B_2 (a),  A_1(a) \neq A_2 (a) \right \}.$$  

The following fact follows from the definition of $\Delta (A, B) $ and $D(-,-)$. 
\begin{fact} \label{usefulfact}  The set $\Delta (A_i , A_{i+1} )$ is the disjoint union, over all partitions of $\mc X$ into two pieces $G,H$ such that $A_i \in G$ and $A_{i+1} \in H$ of the sets $D(G,H).$ 
\end{fact}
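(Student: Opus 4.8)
The plan is to prove the two set inclusions directly from the definitions and then verify disjointness; nothing deeper is needed.

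First I would check that each $D(G,H)$ occurring in the union is contained in $\Delta(A_i, A_{i+1})$. Indeed, if $a \in D(G,H)$ for a partition $\mc X = G \sqcup H$ with $A_i \in G$ and $A_{i+1} \in H$, then by definition all members of $G$ agree at $a$, so their common value is $A_i(a)$; all members of $H$ agree at $a$ with common value $A_{i+1}(a)$; and $D(G,H)$ stipulates that these two values differ. Hence $A_i(a) \neq A_{i+1}(a)$, that is, $a \in \Delta(A_i, A_{i+1})$.

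For the reverse inclusion, given $a \in \Delta(A_i, A_{i+1})$, so $A_i(a) \neq A_{i+1}(a)$, I would set
\[ G_a := \{ A \in \mc X \mid A(a) = A_i(a) \}, \qquad H_a := \{ A \in \mc X \mid A(a) = A_{i+1}(a) \}. \]
Because membership at the fixed point $a$ is a $\{0,1\}$-valued quantity and $A_i(a) \neq A_{i+1}(a)$, the pair $(G_a, H_a)$ is a genuine partition of $\mc X$ (the two pieces cover $\mc X$ and are disjoint), with $A_i \in G_a$ and $A_{i+1} \in H_a$; and by construction every element of $G_a$ has value $A_i(a)$ at $a$, every element of $H_a$ has value $A_{i+1}(a)$ at $a$, and these differ, so $a \in D(G_a, H_a)$. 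Thus $a$ lies in the union, and equality of the two sets follows.

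Finally, for disjointness, suppose $a \in D(G,H) \cap D(G',H')$ where both partitions satisfy $A_i \in G, G'$ and $A_{i+1} \in H, H'$. The reasoning of the first step shows that membership of $a$ in $D(G,H)$ forces $G = \{ A \in \mc X \mid A(a) = A_i(a) \}$ and $H = \{ A \in \mc X \mid A(a) = A_{i+1}(a) \}$ (again using $A_i(a) \neq A_{i+1}(a)$), and the same applies to $(G', H')$; hence $(G,H) = (G',H')$. So distinct admissible partitions contribute disjoint pieces, and $\Delta(A_i, A_{i+1})$ is their disjoint union, as claimed. The whole argument is a routine unwinding of the definitions; the only point needing a moment's care — and the one reused for disjointness — is that the two-valuedness of $A \mapsto A(a)$ at a fixed $a$ is exactly what promotes the covering $\{G_a, H_a\}$ to an honest partition.
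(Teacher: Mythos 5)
Your proof is correct: the paper states this fact without proof (asserting it follows directly from the definitions of $\Delta(-,-)$ and $D(-,-)$), and your argument is exactly the intended routine unwinding — the key observation in both the surjection and the disjointness step being that the $\{0,1\}$-valuedness of $A \mapsto A(a)$ together with $A_i(a) \neq A_{i+1}(a)$ forces the partition at $a$ to be $\bigl(\{A : A(a)=A_i(a)\}, \{A : A(a)=A_{i+1}(a)\}\bigr)$. No gaps; nothing further is needed.
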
 

Now, take the sum of the inequalities \ref{firsteqn} as $i$ ranges from $1$ to $l$. On the LHS of the resulting sum, we obtain $$\sum _{i=1} ^ l \left(  \sum _{G, H \text{ a partition of $\mc X$},\, A_i \in G, A_{i+1} \in H} \left( \sum_{a \in D(G,H)}  \mu(a) u(A_i,a) \right) \right).$$
On the RHS of the resulting sum we obtain 
$$ \frac{1}{2} \sum _{i=1} ^ l \left(  \sum _{G, H \text{ a partition of $\mc X$},\, A_i \in G, A_{i+1} \in H} \left( \sum_{a \in D(G,H)}  \mu(a) \right) \right).$$
Given a partition $G,H$ of $\{ A_1, \ldots , A_l \}$ we note that the term $D(G,H) = D(H,G)$ appears exactly once as an element of the above sum for a fixed value of $i$ exactly when $A_i \in G$ and $A_{i+1} \in H$ or $A_i \in H$ and $A_{i+1} \in G.$ 

Consider the partition $G,H$ of $\mc X$. Suppose that $A_j , A_{j+1} , \ldots , A_k $ is a block of elements each contained in $G$, and that $A_{j-1}, A_{k+1} $ are in $H$. Now consider the terms 
$i=j-1$ and $i=k$ of the above sums (each of which where $D(G,H)$ appears). 

On the left hand side, we have $\sum _{a \in D(G,H) } \mu (a) u ( A_{j-1} ,a )) $ and $\sum _{a \in D(G,H) } \mu (a) u ( A_{k} ,a )) $. Note that for $a \in D(G,H)$, we have $a \in \Delta (A_{j-1}, A_k).$ So, by Lemma \ref{ulemma}, we have

$$\sum _{a \in D(G,H) } \mu (a) u ( A_{j-1} ,a )) + \sum _{a \in D(G,H) } \mu (a) u ( A_{k} ,a ))  \geq \sum _{a \in D(G,H) } \mu (a) .$$
On the RHS, we have $$\frac{1}{2} (\sum _{a \in D(G,H) } \mu (a) +\sum _{a \in D(G,H) } \mu (a)  ) = \sum _{a \in D(G,H) } \mu (a) .$$ For each $G,H$ a partition of $X$, the terms appearing in the above sum occur in pairs as above by Fact \ref{usefulfact}, and so, we have the the LHS is at least as large as the RHS of the sum of inequalities \ref{firsteqn}, which is impossible, since one of the inequalities must have been strict by our degenerate cycle. 
\end{proof}

\begin{thm} \label{qrlowbound} There is at least one element $A \in \mc C$ with query rank at least $\frac{1}{2}$. 
\end{thm}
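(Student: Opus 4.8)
The plan is to argue by contradiction: suppose every $A \in \mc C$ has query rank strictly less than $\frac{1}{2}$, i.e., for each $A$ there is some $B$ with $d(A,B) < \frac{1}{2}$. I want to use this to build a degenerate $l$-cycle in $G_{TQ}(\mc C,\mu)$ for some $l \geq 2$, contradicting Theorem \ref{nocycles}. The natural construction is to follow ``cheap'' outgoing edges: starting from any $A_0$, pick $A_1$ with $d(A_0,A_1) < \frac{1}{2}$, then $A_2$ with $d(A_1,A_2) < \frac{1}{2}$, and so on. Since $\mc C$ is finite, this walk must eventually revisit a vertex, producing a closed walk in which every edge has weight $< \frac{1}{2} \leq \frac{1}{2}$; extracting the cycle from the first repeated vertex gives a sequence of distinct vertices $v_0,\dots,v_{l-1}$ with $d(v_i,v_{i+1 \bmod l}) < \frac{1}{2}$ for all $i$. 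This is in particular a degenerate $l$-cycle (strict inequality holds at every $i$, so certainly at least one).

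The one subtlety is the length $l$ of the extracted cycle. If $l \geq 2$ we are immediately done by Theorem \ref{nocycles}. The only way to fail is if the walk returns to a vertex in ``one step,'' but a closed walk of length $1$ would be a loop $d(A,A)$, which does not arise since the walk steps between distinct vertices (and $d(A,A)$ is a sum over the empty set $\Delta(A,A)=\emptyset$ anyway, so it could never be a witness). So the first repeat occurs after at least two steps and yields $l \geq 2$. Another minor point: the query rank is defined as an infimum over $B \in \mc C$, and since $\mc C$ is finite this infimum is attained, so ``query rank $< \frac{1}{2}$'' genuinely gives an explicit $B$ with $d(A,B) < \frac{1}{2}$; no limiting argument is needed.

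Putting it together: assume for contradiction that the query rank of every $A \in \mc C$ is less than $\frac{1}{2}$. Define a sequence $A_0, A_1, A_2, \dots$ in $\mc C$ by choosing $A_0$ arbitrarily and, given $A_i$, choosing $A_{i+1}$ with $d(A_i, A_{i+1}) < \frac{1}{2}$, which is possible by the contradiction hypothesis. Since $\mc C$ is finite, there exist indices $j < k$ with $A_j = A_k$ and $k$ minimal with this property; then $A_j, A_{j+1}, \dots, A_{k-1}$ are distinct, $l := k - j \geq 2$, and $d(A_{j+i}, A_{j + (i+1 \bmod l)}) < \frac{1}{2}$ for all $i \in [l]$. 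This is a degenerate $l$-cycle in $G_{TQ}(\mc C, \mu)$ with $l \geq 2$, contradicting Theorem \ref{nocycles}. Hence some $A \in \mc C$ has query rank at least $\frac{1}{2}$.

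I do not expect any real obstacle here; the main (very mild) point to get right is the bookkeeping that the extracted cycle has length at least $2$ and that a strict inequality on every edge in particular satisfies the ``$\leq \frac{1}{2}$ with strict inequality somewhere'' requirement in the definition of a degenerate cycle.
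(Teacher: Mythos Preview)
Your proposal is correct and follows essentially the same argument as the paper: assume every vertex has an outgoing edge of weight $<\tfrac{1}{2}$, iterate along such edges, use finiteness of $\mc C$ to extract a cycle, and invoke Theorem~\ref{nocycles}. The paper phrases the iteration via a choice function $f$ with $d(A,f(A))<\tfrac{1}{2}$ and looks at $(f^i(A))$, but this is the same walk you build; your write-up is slightly more careful about the extracted cycle having length $l\geq 2$.
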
 
\begin{proof} If not, then for every element $A \in \mc C$, there is some element $B \in \mc C$ such that $d(A,B) < \frac{1}{2}$. So, pick, for each $A \in \mc C$, an element $f(A)$ such that $d(A,f(A)) < \frac{1}{2}.$ Now, fix $A \in \mc C$ and consider the sequence of elements of $\mc C$ given by $(f^i (A))$; since $\mc C$ is finite, at some point the sequence repeats itself. So, take a list of elements $B, f(B) , \ldots , f^n (B) = B$. By construction, this yields a bad cycle, contradicting Theorem \ref{nocycles}. 
\end{proof}

\subsection{The thicket max-min algorithm}\label{app:Thicketminmax}
In this subsection we show how to use the lower bound on query rank proved in Theorem \ref{qrlowbound} to give an algorithm which yields the correct concept in linearly (in the Littlestone dimension) many queries from $\mathcal{C}$. The approach is fairly straightforward---essentially the learner repeatedly queries the highest query rank concept. The approach is similar to that taken in \cite[Section 5]{angluin2017power} but with query rank in place of their notion of \emph{informative}. 

Now we informally describe the thicket max-min-algorithm. At stage $i$, the learner is given information of a concept class $\mc C_i .$ The learner picks the query 
$$A= \text{arg max} _ {A \in \mc C_i} \left( \text{min}_ {B \in \mc C_i}  \, d_{\mc C_i}( A,B) \right).$$ 
The algorithm halts if the learner has picked the actual concept $C$. If not, the teacher returns a random element $a_i \in \Delta (A,C)$ at which point the learner knows the value of $C (a_i).$ Then $$\mc C_{i+1} = (\mc C_i) _{a_i= C(a_i)}.$$ Let $T(\mc C)$ be the expected number of queries before the learner correctly identifies the target concept.

\begin{thm} The expected number of queries to learn a concept in a class $\mc C$ is less than or equal to $2 \Ldim (\mc C).$ 
\end{thm}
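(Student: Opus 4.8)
The plan is to track the Littlestone dimension of the remaining concept class $\mc C_i$ as a potential function and show that each query, in expectation, decreases it by at least $\frac{1}{2}$. At stage $i$, by Theorem \ref{qrlowbound} applied to the (finite) class $\mc C_i$ with the induced measure, there is a concept $A \in \mc C_i$ of query rank at least $\frac{1}{2}$; this is precisely the concept the thicket max-min algorithm queries. If $A$ is the target $C$, the algorithm halts. Otherwise, the teacher returns $a_i \in \Delta(A, C)$ sampled from $\mu|_{\Delta(A,C)}$, and $\mc C_{i+1} = (\mc C_i)_{a_i = C(a_i)}$. By the definition of $d_{\mc C_i}(A, C)$, the expected value of $\Ldim(\mc C_i) - \Ldim(\mc C_{i+1})$, conditioned on reaching stage $i$ with target $C$ and on $A \neq C$, equals $d_{\mc C_i}(A, C) \geq \text{query rank of } A \geq \frac{1}{2}$.

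Next I would turn this one-step estimate into a bound on the expected number of queries. Let $T(\mc C)$ denote the expected number of queries. Since $\Ldim(\mc C_i)$ is a nonincreasing, nonnegative integer-valued quantity that starts at $\Ldim(\mc C)$, and each query that does not immediately identify the target drops it by at least $\frac{1}{2}$ in expectation, a standard optional-stopping / potential-function argument gives $T(\mc C) \leq 2\,\Ldim(\mc C)$. Concretely: let $N$ be the (random) number of queries and let $D_j = \Ldim(\mc C_j)$ for $j = 0, 1, \ldots$, with $D_j$ held constant once the algorithm halts. One checks that $D_j + \tfrac{1}{2}\min(j, N)$ is a supermartingale — on any step before halting, $D_j$ drops by at least $\tfrac12$ in conditional expectation while the second term rises by exactly $\tfrac12$; after halting both are constant. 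Since $D_j \geq 0$ always and $D_0 = \Ldim(\mc C)$, taking expectations and letting $j \to \infty$ yields $\tfrac{1}{2}\,\mathbb{E}[N] \leq \Ldim(\mc C)$, i.e. $\mathbb{E}[N] = T(\mc C) \leq 2\Ldim(\mc C)$. (One must note $N$ is finite almost surely: each query that fails strictly decreases $|\mc C_i|$ since $A$ is eliminated, and also $D_i$ cannot stay constant forever, so the process terminates.)

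The main subtlety — and the step I would be most careful about — is the conditioning in the one-step bound: the quantity $d_{\mc C_i}(A, B)$ is defined as an expectation over $x \in \Delta(A,B)$, but in the algorithm the relevant $B$ is the \emph{true} target $C$, which is fixed, not random; what is random is $\mc C_i$ itself (depending on prior counterexamples) and the counterexample $a_i$. So I would phrase the estimate as: conditioned on the event $\{\mc C_i = \mc D,\ C \in \mc D,\ A_{\mc D} \neq C\}$ where $A_{\mc D}$ is the max-min query for $\mc D$, we have $\mathbb{E}[D_i - D_{i+1} \mid \cdot] = d_{\mc D}(A_{\mc D}, C) \geq \tfrac12$, and this holds uniformly over all such $\mc D$, which is exactly what the supermartingale argument needs. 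Everything else is bookkeeping; the real content is Theorem \ref{qrlowbound}, which guarantees the queried concept always has query rank at least $\tfrac12$, and Theorem \ref{nocycles} underlying it.
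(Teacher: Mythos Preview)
Your proposal is correct and follows essentially the same approach as the paper: both use Theorem \ref{qrlowbound} to conclude that the expected drop in Littlestone dimension per (unsuccessful) query is at least $\tfrac12$, then convert this into the bound $T(\mc C)\le 2\Ldim(\mc C)$ via a potential-function argument. Your supermartingale formulation with $D_j+\tfrac12\min(j,N)$ is a more careful version of the paper's one-line Wald-type heuristic (``from $2n$ queries, one expects at least $n$ drops''), and your explicit treatment of the conditioning and of almost-sure termination fills in details the paper leaves implicit.
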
 
\begin{proof} The expected drop in the Littlestone dimension of the concept class induced by any query before the algorithm terminates is at least $\frac{1}{2}$ by Theorem \ref{qrlowbound}; so the probability that the drop in the Littlestone dimension is positive is at least $\frac{1}{2} $ for any given query. So, from $2n$ queries, one expects at least $n$ drops in Littlestone dimension. \end{proof} 

We give a rough bound on the probability that the algorithm has not terminated after a certain number of queries. Since a query can reduce the Littlestone dimension of the induced concept class by at most $\Ldim(\mathcal{C})$ and the expected drop is at least $\frac{1}{2}$, the probability that a query reduces the Littlestone dimension is at least $\frac{1}{2\Ldim(\mathcal{C})}$. 
Then the probability that the Littlestone dimension of the induced concept class after $n$ queries is positive is at most the probability of fewer than $\Ldim(\mc C)$ many successes in the binomial distribution with probability $\frac{1}{2\Ldim(\mathcal{C})}$ and $n$ trials. It follows by Hoeffding's inequality that the probability that the algorithm has not terminated after $n$ steps is at most $$e^{-2\frac{\left(\frac{n}{2\Ldim(\mathcal{C})}-\Ldim(\mc C)\right)^2}{n}}.$$

\section{Compression schemes and stability} \label{compression}

In this section, we follow the notation and definitions given in \cite{guingonanip} on \emph{compression schemes}, a notion due to Littlestone and Warmuth \cite{littlestone1986relating}. Roughly speaking, $\mc C$ admits a \emph{$d$-dimensional compression scheme} if, given any finite subset $F$ of $X$ and some $f\in \mc C$, there is a way of encoding the set $F$ with only $d$-many elements of $F$ in such a way that $F$ \emph{can be recovered}. We will give a formal definition, but we note that numerous variants of this idea appear throughout the literature. 
For instance:

\begin{itemize} 

\item Size $d$-array compression \cite{ben1998combinatorial}. 

\item Extended compression schemes with $b$ extra bits \cite{floyd1995sample}.  

\end{itemize}

The next definition, which is the notion of compression we will work with in this section is equivalent to the notion of a $d$-compression with $b$ extra bits (of Floyd and Warmuth) \cite[see Proposition 2.1]{johnson2010compression}. 

\begin{defn} 

We say that a concept class $\mc C$ has an \emph{$d$-compression} if there is a compression function $\kappa : \mc C_{fin} \rightarrow X^{ d}$ and a finite set $\mc R$ of reconstruction functions $\rho : X^d  \rightarrow 2^X$ such that for any $f \in \mc C_{fin}$	

\begin{enumerate} 

\item $\kappa (f) \subseteq dom(f)$

\item 	$f = \rho (\kappa (f))|_{dom(f)} $ for at least one $\rho \in \mc R.$

\end{enumerate}

\end{defn}

We work with the above notion mainly because  it is the notion used in \cite{guingonanip}, and our goal is to improve a result of Laskowski appearing there \cite[Theorem 4.1.3]{guingonanip}. In \cite{johnson2010compression}, Laskowski and Johnson prove that the concept class corresponding to a stable formula has an extended $d$-compression for some $d$. The precise value of $d$ is not determined, but was conjectured to be the Littlestone dimension. A later unpublished result of Laskowski appearing as \cite[Theorem 4.1.3]{guingonanip} in fact showed that 
one could take $d$ equal to the Shelah 2-rank (Littlestone dimension) and uses $2^d$ many reconstruction functions. In Theorem \ref{JLbound}, we will show that $d+1$ many reconstruction functions suffice. 

The question of Johnson and Laskowski is the analogue (for Littlestone dimension) of a well-known open question from VC-theory \cite{floyd1995sample}: is there a bound $A(d)$ linear in $d$ such that every class of VC-dimension $d$ has a compression scheme of size at most $A(d)$? In general there is known to be bound which is at most exponential in $d$ \cite{moran2016sample}.  


\begin{defn}
	Suppose $\Ldim(\mathcal{C}) = d$. Given a partial function $f$, say that $f$ is \emph{exceptional} for $\mathcal{C}$ if for all $a \in \dom(f)$,
	\[
		\mathcal{C}_{(a, f(a))} := \{ g \in \mathcal{C} \, | \, g(a) = f(a) \}
	\]
	has Littlestone dimension $d$.
\end{defn}

%

\begin{defn}
	Suppose $\Ldim(\mathcal{C}) = d$. Let $f_{\mathcal{C}}$ be the partial function given by
	\[
		f_{\mathcal{C}}(x) = \begin{cases}
			0 & \Ldim(\mathcal{C}_{(x,0)}) = d \\
			1 & \Ldim(\mathcal{C}_{(x,1)}) = d \\
			\text{undefined} & \text{otherwise.}
		\end{cases}
	\]
\end{defn}

It is clear that $f_{\mathcal{C}}$ extends any partial function exceptional for $\mathcal{C}$.

\begin{thm} \label{JLbound}
Any concept class $\mc C$ of Littlestone dimension $d$ has an extended $d$-compression with $(d+1)$-many reconstruction functions. 
\end{thm}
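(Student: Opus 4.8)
The plan is to build the compression scheme by recursion on the Littlestone dimension, using the partial function $f_{\mathcal{C}}$ as the backbone of the reconstruction. Given a finite $f \in \mathcal{C}_{fin}$ with $\dom(f) = F$, the idea is that on the part of $F$ where $f$ agrees with $f_{\mathcal{C}}$ we need to record nothing (the reconstruction function will simply ``default'' to $f_{\mathcal{C}}$ there), while on the elements of $F$ where $f$ disagrees with $f_{\mathcal{C}}$ we peel off one such element $a$, observe that passing to $\mathcal{C}_{(a,f(a))}$ drops the Littlestone dimension strictly (this is exactly the point of the definition of $f_{\mathcal{C}}$: if $a \in \dom(f_{\mathcal{C}})$ and $f(a) \neq f_{\mathcal{C}}(a)$ then $\Ldim(\mathcal{C}_{(a,f(a))}) < d$; and if $a \notin \dom(f_{\mathcal{C}})$ then both branches drop, so $\Ldim(\mathcal{C}_{(a,f(a))}) < d$ as well), and recurse inside $\mathcal{C}_{(a,f(a))}$ with the restricted function $f|_{F \setminus \{a\}}$. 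Each recursive step puts exactly one element of $F$ into the compression tuple, and the dimension drops by at least one each time, so after at most $d$ steps we reach a class of Littlestone dimension $0$ (a single concept, whose restriction to the remaining domain is forced). This yields $\kappa(f) \in X^{\le d}$, which we pad to $X^d$.

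The reconstruction functions are indexed by which of the at most $d$ coordinates were actually used, i.e.\ by an integer $k \in \{0, 1, \dots, d\}$ recording the number of ``disagreement'' elements peeled off — giving $d+1$ reconstruction functions $\rho_0, \dots, \rho_d$. On input a tuple $(a_1, \dots, a_k)$ (with the remaining coordinates padding), $\rho_k$ recomputes the chain $\mathcal{C} \supseteq \mathcal{C}_{(a_1, j_1)} \supseteq \mathcal{C}_{(a_1,j_1),(a_2,j_2)} \supseteq \cdots$; but here is the subtlety that needs care: the reconstruction function only sees the elements $a_i$, not the values $j_i = f(a_i)$. The resolution is that we must be able to recover $j_i$ from the data at hand. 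Since $a_i$ was chosen as a point of \emph{disagreement} with the relevant $f_{\mathcal{C}^{(i-1)}}$ at stage $i-1$, and since at a disagreement point either $f_{\mathcal{C}^{(i-1)}}(a_i)$ is defined (in which case $j_i$ is its opposite) or it is undefined — and in the undefined case we would not be able to recover $j_i$. So the selection rule in $\kappa$ must be: peel off, from the disagreement set, preferentially an element $a$ on which $f_{\mathcal{C}^{(i-1)}}$ is \emph{defined}; one must check such an element always exists when $f$ is not already equal to $f_{\mathcal{C}^{(i-1)}}$ on the remaining domain. Actually it is cleaner to argue: if $f|_{F'}$ is not exceptional-compatible, there is $a \in F'$ with $\Ldim(\mathcal{C}'_{(a,f(a))}) < \Ldim(\mathcal{C}')$; pick any such $a$, and note that then $f_{\mathcal{C}'}(a)$ is necessarily defined and equals $1 - f(a)$ (because $\Ldim(\mathcal{C}'_{(a, 1-f(a))}) = \Ldim(\mathcal{C}')$ by maximality of the chain at the previous step — wait, this needs the invariant that $\mathcal{C}'$ itself always has full dimension along the recursion, which holds since we only ever pass to dimension-preserving... no). Let me restate: the invariant to maintain is simply $\Ldim(\mathcal{C}^{(i)})$ strictly decreasing, and at each stage one shows that \emph{if} $f$ restricted to the current domain is not already captured by $f_{\mathcal{C}^{(i)}}$, then there is a disagreement point $a$ with $f_{\mathcal{C}^{(i)}}(a)$ defined, so $f(a) = 1 - f_{\mathcal{C}^{(i)}}(a)$ is recoverable from $a$ and $\mathcal{C}^{(i)}$ alone. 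The reconstruction $\rho_k$ walks this chain deterministically, and at the final class $\mathcal{C}^{(k)}$ it outputs (the characteristic function of) any fixed concept extending $f_{\mathcal{C}^{(k)}}$ — say, the lexicographically determined one obtained by following $f_{\mathcal{C}^{(k)}}$ and breaking ties by $0$ — restricted to $\dom(f)$; one checks this agrees with $f$ on all of $\dom(f)$, using that outside the peeled points $f$ agrees with each successive $f_{\mathcal{C}^{(i)}}$.

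The main obstacle I expect is exactly the bookkeeping around value-recovery just flagged: ensuring the reconstruction functions can reconstruct the \emph{bit} $f(a_i)$ at each peeled coordinate purely from the coordinate $a_i$ and the (deterministically recomputed) class $\mathcal{C}^{(i-1)}$, without storing the bit. This forces the precise selection rule in the compression function and requires the small lemma that a dimension-dropping disagreement point is always a point where $f_{\mathcal{C}^{(i-1)}}$ is defined (equivalently: if $f_{\mathcal{C}^{(i-1)}}(a)$ is undefined then \emph{both} $\mathcal{C}^{(i-1)}_{(a,0)}$ and $\mathcal{C}^{(i-1)}_{(a,1)}$ have Littlestone dimension $< \Ldim(\mathcal{C}^{(i-1)})$ — but then $f(a)$ could be either, so we'd be stuck — so we must show: either some disagreement point has $f_{\mathcal{C}^{(i-1)}}$ defined, or we can already stop). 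I'd handle the ``or we can already stop'' horn by noting that if on the current domain every point where $f$ and $f_{\mathcal{C}^{(i-1)}}$ could disagree has $f_{\mathcal{C}^{(i-1)}}$ undefined, we instead peel using a different argument — but I suspect the clean statement is that one never actually gets stuck because a dimension-dropping point always has $f_{\mathcal{C}}$ defined (if $\Ldim(\mathcal{C}_{(a,j)}) < \Ldim(\mathcal{C}) = d$ for the value $j = f(a)$, then for the other value $1-j$ we must have $\Ldim(\mathcal{C}_{(a,1-j)}) = d$, since $\Ldim(\mathcal{C}) = \max$ of the two; hence $f_{\mathcal{C}}(a) = 1-j$ is defined). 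That observation is the crux and makes everything go through; the rest is routine packaging of the recursion into $\kappa$ and the $d+1$ functions $\rho_0, \dots, \rho_d$.
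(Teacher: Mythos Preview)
Your overall architecture is right and matches the paper's: peel off dimension-dropping elements one at a time, stop early if $f$ becomes exceptional, and use the $d+1$ reconstruction functions to decode. You also correctly isolate the real difficulty: the reconstruction sees only the elements $a_1,\dots,a_k$, not the bits $f(a_i)$, so those bits must be recoverable.

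The gap is in your resolution of that difficulty. Your ``crux'' lemma is false: it is \emph{not} true that $\Ldim(\mathcal{C}) = \max\bigl(\Ldim(\mathcal{C}_{(a,0)}),\Ldim(\mathcal{C}_{(a,1)})\bigr)$, and hence it is not true that if $\Ldim(\mathcal{C}_{(a,f(a))}) < d$ then $f_{\mathcal{C}}(a)$ is defined. Take $X=\{a,b\}$ and $\mathcal{C}=2^X$: then $\Ldim(\mathcal{C})=2$, while $\Ldim(\mathcal{C}_{(a,0)})=\Ldim(\mathcal{C}_{(a,1)})=1$, so $f_{\mathcal{C}}(a)$ is undefined even though both branches drop. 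In this example $f_{\mathcal{C}}$ is empty, so for \emph{any} $f\in\mathcal{C}_{fin}$ your preferred selection rule (``peel only where $f_{\mathcal{C}^{(i-1)}}$ is defined'') has nothing to choose, yet $f$ is not exceptional and you are not allowed to stop. Once you are forced to peel an $a$ with $f_{\mathcal{C}^{(i-1)}}(a)$ undefined, the reconstruction cannot recover $f(a)$, and your $\rho_k$ cannot rebuild the chain $\mathcal{C}^{(0)}\supseteq\mathcal{C}^{(1)}\supseteq\cdots$.

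The paper sidesteps this entirely by \emph{not} trying to recover the bits from the elements. Instead, at each step it records whether the peeled element had value $1$ (call it an $a_i$) or value $0$ (call it a $d_i$); when the tuple has full length $d$, it writes $\kappa(f)=(\bar a,\bar d)$ with all the $1$-elements first, and the index $i\in\{0,\dots,d\}$ of the reconstruction function $\rho_i$ encodes the split point (first $i$ coordinates get value $1$, the rest get value $0$). Thus the $d+1$ reconstruction functions are indexed by ``how many peeled elements had value $1$'', not by ``how many elements were peeled''. The early-halt case (fewer than $d$ elements peeled, $f$ exceptional for the current class) is then handled by padding with a duplicate coordinate and overloading $\rho_0$ and $\rho_1$ on tuples containing repeats. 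Your scheme spends the $d+1$ indices on the length $k$ of the peeled tuple, which carries no information about the bits; the paper spends them on the number of $1$'s, which (together with the ordering) determines all the bits.
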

\begin{proof} 
If $d = 0$, then $\mathcal{C}$ is a singleton, and one reconstruction function suffices. So we may assume $d \geq 1$.	
	
Fix some $f \in \mc C_{fin}$ with domain $F$. 
We will run an algorithm to construct a tuple of length at most $d$ from $F$ by adding one element at each step of the algorithm. During each step of the algorithm, we also have a concept class $\mc C_i$, with $\mc C_0 = \mc C$ initially. 

If $f$ is exceptional in $\mc C_{i-1}$, then the algorithm halts. Otherwise, pick either: 
\begin{itemize}

\item  $a_i \in F$ such that $f(a_i)=1$ and 
\[
	(\mc C_{i-1} )_{(a_i,1)} := \{g \,| \, g \in \mc C_{i-1}, \, g(a_i)=1 \}
\] 
has Littlestone dimension less than $\Ldim(\mathcal{C}_{i-1})$. In this case, set $\mc C_i:= (\mc C_{i-1} )_{(a_i,1)}= \{g\,| \, g \in \mc C_{i-1}, \, g(a_i)=1 \}.$ 
 
\item $d_i \in F$ such that $f(d_i)=0$ and 
\[
	(\mc C _{i-1})_{(d_i,0)} := \{g \,| \, g \in \mc C_{i-1}, \, g(d_i)=0 \}
\] 
has Littlestone dimension less than $\Ldim(\mathcal{C}_{i-1})$. In this case, set $\mc C_i:= (\mc C_{i-1})_{(d_i,0)}.$ 

\end{itemize} 

We allow the algorithm to run for at most $d$ steps. There are two distinct cases. If our algorithm has run for $d$ steps, let $\kappa(f)$ be the tuple $(\bar a, \bar d)$ of all of the elements $a_i$ as above followed by all of the elements $d_i$ as above for $i=1, \ldots , d$. By choice of $a_i$ and $d_i$, this tuple consists of $d$ distinct elements. By construction the set 
\[
	\mc C_{(\bar a , \bar d)} := \{ g \in \mc C | \, g(a_i ) = 1, \, g(d_i) =0 \}
\] 
has Littlestone dimension $0$, that is, there is a unique concept in this class. So, given $(c_1, c_2, \ldots , c_n) \in X^d$ consisting of distinct elements, for $i=0, \ldots , d$,  we let $\rho_i(c_1, \ldots, c_n)$ be some $g$ belonging to
\[
	\{g \in \mc C \, | \, g(c_j)=1 \text{ for } j\leq i, \, g(c_j )=0 \text{ for } j>i \},
\]
if such a $g$ exists. By construction, for some $i$, the Littlestone dimension of the concept class $\{g \in \mc C \cap F \, | \, g(c_j)=1 \text{ for } j\leq i, \, g(c_j )=0  \text{ for } j>i \}$ is zero, and so $g$ is uniquely specified and will extend $f$. 

We handle cases where the algorithm halts early by augmenting two of the reconstruction functions $\rho_0$ and $\rho_1$ defined above. Because $\rho_0$ and $\rho_1$ have so far only been defined for tuples consisting of $d$ distinct elements, we can extend these to handle exceptional cases by generating tuples with duplicate elements. 

If the algorithm stops at some step $i>1$, then it has generated a tuple of length $i-1$ consisting of some elements $a_j$ and some elements $d_k$. Let $\bar a$ consist of the elements $a_j$ chosen during the algorithm, and let $\bar d$ consist of the elements $d_k$ chosen during the running of the algorithm. Observe that $f$ is exceptional for $\mathcal{C}_{(\bar{a}, \bar{d})}$. 

If $\bar{a}$ is not empty, with initial element $a'$, then let $\kappa(f) = (\bar a, a' , \bar d, a', \ldots , a') \in F^d$. From this tuple, one can recover $(\bar{a}, \bar{d})$ (assuming $\bar{a}$ is nonempty), so we let $\rho_1(\bar a, a' , \bar d, a', \ldots , a')$ be some total function extending $f_{\mathcal{C}_{(\bar{a}, \bar{d})}}$, which itself extends $f$. So $\rho_1(\bar{a}, \bar{d})$ extends $f$ whenever the algorithm halts before step $d$ is completed \emph{and} some $a_i$ was chosen at some point. If $\bar{a}$ is empty, then let $\kappa(f) = (\bar d , d', \ldots , d') \in F^d$, where $d'$ is the initial element of $\bar{d}$. From this tuple, one can recover $(\emptyset, \bar{d})$ (assuming $\bar{a}$ is empty), so we let $\rho_0(\bar d , d', \ldots , d')$ be total function extending $f_{\mathcal{C}_{(\emptyset, \bar{d})}}$, which itself extends $f$. Finally, if the algorithm terminates during step 1, then it has generated the empty tuple. In this case, let $\kappa(f) = (c, \ldots, c)$ for some $c \in F$. Then $\Ldim(\mathcal{C}) = \Ldim({\mathcal{C}}_{(c, l)})$ for some $l \in \{0,1\}$. In particular, if we have defined $\kappa(f') = (c, \ldots, c)$ above for some $f'$ where the algorithm only returns $c$ (rather than the empty tuple), then $1 - l = f'(c) \neq f(c)$, and so any such $f'$ is handled by $\rho_{1-l}$. So we may overwrite $\rho_l$ to set $\rho(c, \ldots, c)$ to be a total function extending $f_\mathcal{C}$, which itself extends $f$. For any tuple output by our algorithm, one of the reconstruction functions produces an extension of the original concept. 

\end{proof}


\bibliography{Research}
\bibliographystyle{plain}

\end{document}